\newtheorem{definition}{Definition}
\newtheorem{lemma}{Lemma}
\newtheorem{theorem}{Theorem}
\newtheorem{obs}{Observation}
\newtheorem{cor}{Corollary}
\newenvironment{customthm}[1]
  {\innercustomthm}
  {\endinnercustomthm}
\newenvironment{customcor}[1]
  {\innercustomcor}
  {\endinnercustomcor}
\newenvironment{customlemma}[1]
  {\innercustomlemma}
  {\endinnercustomlemma}
\newcommand{\pn}{c_{\delta }k/n}  
\newcommand{\pnp}{p_{\textnormal{UC}} } 
\newcommand{\pnpp}{\bar{p}_{\textnormal{UC}}} 
\newcommand{\pnm}{p^{\prime}} 
\newcommand{\Hard}{ \hat{\mathscr{H}} }
\newcommand{\Easy}{ \mathscr{E}}
\newcommand{\Samp}{ \mathcal{S}}
\newcommand{\Sampn}{ \mathcal{S}_1}
\newcommand{\Sampm}{ \mathcal{S}_2}
\newcommand{\Deltam}{\Delta_{m} }
\newcommand{\cdel}{c_{\delta}}
\icmltitlerunning{Active Learning for $k$-Nearest Neighbors}
\begin{document}

\twocolumn[
\icmltitle{A Two-Stage Active Learning Algorithm for $k$-Nearest Neighbors}




\begin{icmlauthorlist}
\icmlauthor{Nick Rittler}{yyy}
\icmlauthor{Kamalika Chaudhuri}{yyy}
\end{icmlauthorlist}

\icmlaffiliation{yyy}{Department of Computer Science and Engineering, University of California - San Diego, San Diego, CA, USA}

\icmlcorrespondingauthor{Nick Rittler}{nrittler@ucsd.edu}

\icmlkeywords{Machine Learning, ICML}

\vskip 0.3in
]



\printAffiliationsAndNotice{}  

\begin{abstract}
$k$-nearest neighbor classification is a popular non-parametric method because of desirable properties like automatic adaption to distributional scale changes. Unfortunately, it has thus far proved difficult to design active learning strategies for the training of local voting-based classifiers that naturally retain these desirable properties, and hence active learning strategies for $k$-nearest neighbor classification have been conspicuously missing from the literature. In this work, we introduce a simple and intuitive active learning algorithm for the training of $k$-nearest neighbor classifiers, the first in the literature which retains the concept of the $k$-nearest neighbor vote at prediction time. We provide consistency guarantees for a modified $k$-nearest neighbors classifier trained on samples acquired via our scheme, and show that when the conditional probability function $\mathbb{P}(Y=y|X=x)$ is sufficiently smooth and the Tsybakov noise condition holds, our actively trained classifiers converge to the Bayes optimal classifier at a faster asymptotic rate than passively trained $k$-nearest neighbor classifiers.
\end{abstract}

\section{Introduction}
In active learning, the learner has access to unlabeled training data from the target distribution, and can ask an annotator to selectively label a subset of this data. The learner's goal is to develop an accurate classifier while querying as few labels as possible. A significant body of work has demonstrated the power of active learning to lower label complexities versus passive approaches, where every point from the underlying distribution is labeled \citep{balcan2006, hanneke2007, beygelzimer2008, zhang2014}.  In cases where human annotation is a driving cost in supervised training, such guarantees are particularly relevant. 

The previous work on active learning for nearest neighbors algorithms is surprisingly sparse.  \citep{Dasgupta2012} gives asymptotic consistency guarantees for $k$-NN under a wide class of active strategies, but does not provide any finite sample guarantees. \citep{kontorovich2018} uses a compression approach to give guarantees for a $1$-nearest neighbor classifier trained on a subset of an actively acquired training sample; while their algorithm does not use many label queries, it does not come with consistency guarantees ensuring convergence to the Bayes-risk in the large sample limit, and also relies on a somewhat computationally intensive algorithm to determine which labels are to be queried.  \citep{njike2021} gives an algorithm that, similar to \citep{kontorovich2018}, outputs the prediction of a $1$-NN classifier trained on sample points which have been actively determined to be informative. That said, the schemes of both \citep{kontorovich2018} and  \citep{njike2021} do not retain the natural adaptivity to scale changes in the distribution and local de-noising properties of the $k$-NN vote -- which, for one, allow for local estimation of conditional probabilities -- instead resorting to various complex algorithmic subroutines to ensure good performance. Our main contribution is thus the exposition of the only active learning scheme that retains the concept of the $k$-NN vote at prediction time, resulting in a much more direct approach that admits significant convergent rate speed-ups.

Active learning usually relies on some inductive bias or distributional assumption that can exploited to reduce the need for labeled data. The canonical example is that of learning a linear classifier in the realizable case, wherein the a priori knowledge allows the learner to avoid querying examples when one of the possible labelings would be inconsistent with any possible linear classifier and the labels queried so far. The main challenge in actively learning non-parametric classifiers such as nearest neighbors is that this inductive bias is harder to characterize. To facilitate our investigation, we operate under one of the most common distributional assumptions in the non-parametric active learning literature -- a smoothness condition on the conditional probability function $\mathbb{P}(Y=y | X=x)$ \citep{castro2008, minsker2012}. Specifically, we assume that the conditional probability function has a property that is a necessary condition for Hölder continuity under mild regularity assumptions. 

A natural algorithm in this regime is to look at the regions of space where multiple labels can be found, and then sample those regions more closely to get better nearest neighbor estimates. Based on this principle, we propose an intuitive and computationally efficient approach to determine which labels are to be actively queried.

We show that our active learning algorithm comes with natural consistency guarantees -- in the sense that the expected risk of the learned classifier converges to the Bayes-risk as the number of label queries grows to infinity. Additionally, under the Hölder continuity of the conditional probability function $\eta(x) := \mathbb{P}(Y=1|X=x)$, we provide finite sample guarantees for a modified $k$-NN classifier trained on samples acquired via our scheme. We discuss general conditions under which these guarantees are tighter than the corresponding passive guarantees, a discussion which quite notably for the non-parametric active learning literature does not rely on low-noise assumptions.

\begin{figure*}
    \centering
    \includegraphics[width=0.9\textwidth]{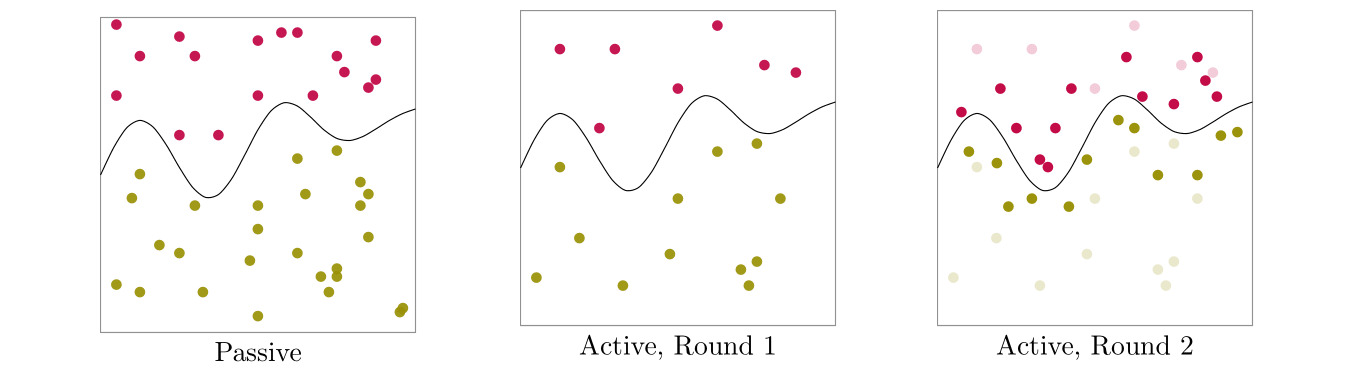}
    \caption{An illustration of the ideal outcome of the two-round active algorithm. In the ``passive'' setting,  points are simply taken from the data-generating distribution. As is often the case in active learning, the goal is to eliminate parts of space where classification is likely easy. In the right-most figure, lower opacity designates first-round samples.}
    \label{fig:img1}
\end{figure*}

We continue by showing that under the Tsybakov noise condition $\textnormal{Pr}_X( |\eta(X) - \frac{1}{2} | \leq t) \leq C t^{\beta}$ \citep{audibert2005}, our actively trained classifiers converge at a significantly faster asymptotic rate than passively trained $k$-NN classifiers. Roughly speaking, assuming $\eta$ is $\alpha$-Hölder continuous on a $d$ dimensional instance space, previous work has shown the fraction of examples on which the learnt classifier disagrees with the Bayes optimal is $\tilde{O}\left(( k^{-1/2} + (k/n)^{\alpha/d})^\beta \right)$ for passively trained $k$-NN after $n$ samples; this improves to $\tilde{O}\left(( k^{-1/2} + ( f_{k,n} \cdot k/n )^{\alpha/d})^\beta \right)$ for $f_{k,n} \in \tilde{O}\left(((d/k)^{1/2} + (k/n + (d/n)^{1/2})^{\alpha/d})^\beta \right)$ for our actively trained classifier.  Our results are a consequence of a new style of guarantee -- controlling the disagreement with the Bayes optimal classifier -- that is novel in the non-parametric active learning literature, and different from the standard work on rates of convergence of expected risk~ \citep{castro2008, locatelli2017}.   We finish with some discussion of the optimality of the scheme.

\section{Related Work}

The study of learning in settings where the learner possesses some power over which samples it uses in training has long been of interest to the learning theory community. 
Though alternative label query models have been explored \citep{angluin1987, awasthi2012}, most attention has been given to the standard active learning model, wherein the learner is assumed to have functionally unlimited access to unlabeled samples from the target distribution, but must choose which of these examples are to be labeled in the course of training a classifier. Much of this work was originally done in the parametric setting, whereby the learner was tasked with choosing an approximately optimal classifier from a hypothesis class \citep{balcan2006, DHM2007, beygelzimer2008}. Because it has long been known that there are settings in which active strategies cannot beat passive ones in terms of label complexity \citep{kaariainen2006}, investigation of specific conditions under which active strategies can beat passive ones has been the norm \citep{balcan2015}. The most general description of settings in which parametric active strategies improve over passive strategies is due to \citep{hanneke2014}.

The study of active learning for the training of non-parametric classifiers has lead to a well-developed theory under smoothness of the conditional probability function and low-noise conditions, which are the common surrogates for inductive bias in the non-parametric setting. Perhaps the most notable work in this litany is \citep{castro2008}. Their work establishes minimax lower bounds on the expected excess risk (over the Bayes-risk) of an actively trained non-parametric classifier, and provides an active scheme for the training of local polynomial classifiers that achieves this lower bound up to log factors. The approximate optimality of their algorithm however assumed access to underlying noise parameters, such as Hölder and Tsbakov parameters. \citep{locatelli2017} resolved this issue with an adaptive algorithm that achieves the same guarantees as \citep{castro2008} without a priori access to noise parameters.  \citep{minsker2012} gives a non-parametric active learning scheme for the training of classifiers based around estimates of the conditional probability function that are locally constant on hypercubes partitioning the instance space. This work is not directly comparable given that $k$-NN estimates of the conditional probability function are locally constant on a more complex geometric structure that is data-dependent.

For the study of active NN algorithms in particular, the authors are not aware of significant contributions other than the aforementioned \citep{kontorovich2018},  \citep{Dasgupta2012} and  \citep{njike2021}.  \cite{BDF2019} gives guarantees for a nearest neighbor algorithm that adaptively sets $k$ across instance space depending on the local agreement of nearest neighbor votes, but this algorithm is passive with respect to data collection.

\section{Preliminaries}
\subsection{Setting}

We study a binary classification setting where instances come from a metric space $(\mathcal{X}, \rho)$, and the label space is $ \mathcal{Y} = \{ 0, 1 \}$. We assume that we have access to labeled data from some data generating measure $\mathbb{P}$ on $\mathcal{X} \times \mathcal{Y}$. $\mathbb{P}$ is the product of some Borel-regular measure $\mu$,\footnote{Thus, $\mu$ can measure all members of the $\sigma$-algebra generated by open sets under $\rho$.} the marginal of $\mathbb{P}$ over the instance space $\mathcal{X}$, and $\eta : \mathcal{X} \to [0,1]$, which denotes the conditional probability function $\mathbb{P}(Y=1| X=x)$. We assume that $\mu$ has full support on $\mathcal{X}$. 

We assume that after $n$ i.i.d. samples from $\mathbb{P}$, which are labeled by definition, we enter a second phase of the sampling with an additional label budget of $m$. In this phase, we may specify any $R \subseteq \mathcal{X}$ with $\mu(R) > 0$, and subsequently rejection sample from $R$, i.e. sample from the product of $\mu$ conditioned on $R$ and $\eta$. The main sampling algorithm, formalized in Algorithm \ref{algo}, makes use of this power to provide a specialized training set for a modified NN classifier, introduced in Section 2.3. Figure 1 gives an idea of the ideal outcome of the strategy.

\subsection{$k$-Nearest Neighbors and Bayes Optimal Classifier}
We use this section to briefly review the standard $k$-nearest neighbors algorithm. Given a set of points in a training set $\mathcal{S} := \{(X_i, Y_i) \in \mathcal{X} \times \mathcal{Y} : i \in [n] \}$, the standard $k$-NN classifier can be stated as 
\begin{align*}
  g_{n, k}(x) := 
  \begin{cases}
    1, & \text{if } \frac{1}{k}\sum_{i=1}^k Y_{(i)}(x) \geq \frac{1}{2} \\
    0, & \text{otherwise } 
  \end{cases}
\end{align*}
\noindent
where $Y_{(i)}(x)$ is the label of this $i^{th}$ nearest training instance from $x$ under $\rho$. In this way, $k$-NN provide a sort of local ``vote'' over what the prediction should be, with prediction at $x$ going to the winning class.

We assume that nearest neighbor tie breaking is done by drawing independent uniform random variables from $[0,1]$, with preference going to lower values of this uniform draw. The event that two sample points are equidistant from a new point $x$ is a measure 0 event (over the draw of $x$) for many distributions, but we will make statements concerning all $x \in \mathcal{X}$ simultaneously, for which tie-breaking will be more impactful; see Appendix A for a primer on how this tie-breaking rule enters the analysis.

The Bayes optimal classifier is simply the best guess for classification given full knowledge of $\eta$, that is 
\begin{align*}
g^{*}(x) := 
  \begin{cases}
    1, & \text{if } \eta(x) \geq \frac{1}{2} \\
    0, & \text{otherwise } 
  \end{cases}
\end{align*}
There is well-established literature on the asymptotic relationship of $k$-NN and $g^{*}$. Perhaps the most intuitively useful is that when $k_n \to \infty$, $\frac{k_n}{n} \to 0$, and the classifier is in Euclidean space,
$$
\mathbb{P}\{ g_{n,k_n}(X) \ne Y \} \to \mathbb{P}\{ g^{*}(X) \ne Y \}
$$
almost surely over the sampling \citep{devroye1994}. Weaker versions exist on general metric spaces \citep{cover1968, stone1977}.

\subsection{A Modified $k$-Nearest Neighbors Classifier}
We introduce and study a modified $k$-NN classifier to facilitate the investigation of our particular setting. The modification helps analytically by making it likely that the nearest neighbors of a new point $x$ form an unbiased estimate of conditional probability in a neighborhood $x$ in the second sampling round, as well as alleviating complications arising from dependencies carried between sampling rounds. For a given sample of size $n+m$,  and a regions $R \subseteq R^+ \subseteq \mathcal{X}$ with $\mu(R^+)>0$, we consider the modified $k$-NN classifier 
\begin{align*}
  g_{n, m, k}(x) := 
  \begin{cases}
    \mathbbm{1}[\frac{1}{k}\sum_{i=1}^k Y_{(i)}(x; R^+) \geq \frac{1}{2}], & \text{if } x \in R \\
    \mathbbm{1}[\frac{1}{k}\sum_{i=1}^k Y_{(i)}(x) \geq \frac{1}{2}], & \text{otherwise } 
  \end{cases}
\end{align*}
Here, we assume that $R^+$ has been used as the acceptance region in the second round of sampling, and its smaller cousin $R$ is the set of points that use rejection samples for prediction. We use $Y_{(i)}(x; R^+)$ to denote the $i^{th}$ nearest neighbor to $x$ out of all  of samples rejection sampled from $R^+$, and use $Y_{(i)}(x)$ to denote the $i^{th}$ nearest neighbor to $x$ out of samples drawn from $\mathbb{P}$. Roughly speaking, the idea will be to treat prediction in $R$ as a prediction problem on a new probability space with properties inherited from the original, larger space. We will see that if the region $R^+$ is well-chosen, and small enough under $\mu$, this targeting procedure has the potential to be effective at improving label complexities. 

\subsection{Finite Sample Guarantees for $k$-NN in the Passive Setting}
The main analytic tools used in this work are based around those introduced in \citep{CD2014}. We use this section to introduce some of the main analytic concepts
of their work, as they are the basis for an intuitive understanding of our two-round sampling algorithm. In the passive setting, their results align with optimal rates of convergence for nonparametric classifiers under standard regularity conditions.

The core idea of their work is that the $k$-NN classifier may disagree with the Bayes optimal classifier in a region of space where the distribution is complex enough that $n$ samples are not sufficient. The measure of this region, which is referred to as the ``effective decision boundary'', is a function of $n$ and $k$, and is a controlling factor in the convergence of $k$-NN to the Bayes optimal classifier. 

The most vital definition of \citep{CD2014}, is the smallest $r$ such that the closed ball $B(x,r) := \{x^{'} \in \mathcal{X}: \rho(x, x^{'}) \leq r \}$ has mass at least $p$.
\begin{definition}
 For $p \in (0, 1]$ the ``probability radius'' of a point $x$ is the real number
$$
r(x; p) := \inf \{r | \mu \left( B(x, r) \right)  \geq p \}.
$$
\end{definition}
\noindent
The power of the probability radius is that it is adaptive to scale changes in $\mu$, and so can uniformly be applied throughout instance space to anchor the definition of a region in which one can be confident that $k$-NN will predict correctly after seeing a certain number of samples. The ``effective decision boundary'' can then be stated as in terms of this quantity.
 \begin{definition}
For any $p \in (0, 1]$ and $\Delta \in (0, \frac{1}{2}]$, the ``effective interior'' is the set
$$
\mathcal{X}^{+/-}_{p, \Delta} = \mathcal{X}^+_{p, \Delta} \cup  \mathcal{X}^-_{p, \Delta}, 
$$
where 
$$
\mathcal{X}^+_{p, \Delta} := \left \{ x \mid \eta(x) > \frac{1}{2} \ \wedge \ \eta(B(x, r)) - \frac{1}{2} \geq \Delta \right \}, 
$$
$$
\mathcal{X}^-_{p, \Delta} := \left \{ x \mid \eta(x) < \frac{1}{2} \ \wedge \  \frac{1}{2} - \eta(B(x, r))  \geq \Delta \right \}.
$$
The  ``effective decision boundary'' is the set 
$$
\partial_{p, \Delta} := \mathcal{X} \setminus \mathcal{X}^{+/-}_{p, \Delta}.
$$
\end{definition}
Here, $\eta(S) := \frac{1}{\mu(S)} \int_S \eta d\mu$, the probability of getting a 1 label conditional on an instance being in $S$ with $\mu(S)>0$. 
\noindent
The main result is that the measure of the ``effective decision boundary'' controls the disagreement rate of the $k$-NN classifier and the Bayes optimal classifier with high probability.
\begin{theorem}[\textbf{of } Chaudhuri and Dasgupta, 2014]

Fix $\delta \in (0, 1)$, and integers $k<n$. With probability $\geq 1-\delta$ over the draw of $n$ i.i.d. training samples from $\mathbb{P}$, 
$$
\textnormal{Pr}_{X \sim \mu} \big( g^{*}(X) \ne g_{n,k}(X) \big) \leq \mu(\partial_{p, \Delta}) + \delta,
$$
when for a near-constant $c_{\delta}$ we have
\begin{align*}
    p &= c_{\delta} \cdot \frac{k}{n+m} \\
    \Delta &= \min\left(\frac{1}{2}, \sqrt{\frac{1}{k}\cdot \log(2/\delta)} \right).
\end{align*}
\end{theorem}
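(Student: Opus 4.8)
The plan is to decouple the uniform-over-$x$ guarantee from a single pointwise estimate, since a direct union bound over the uncountable instance space is hopeless. Fixing the training sample $A$, I would write the disagreement probability as $\textnormal{Pr}_{X}(g^*(X) \neq g_{n,k}(X)) = \int \mathbbm{1}[g^*(x) \neq g_{n,k}(x)]\, d\mu(x)$, split the integral over the effective decision boundary $\partial_{p,\Delta}$ and its complement $\mathcal{X}^{+/-}_{p,\Delta}$, bound the first piece trivially by $\mu(\partial_{p,\Delta})$, and define the residual $\Psi(A) := \int_{\mathcal{X}^{+/-}_{p,\Delta}} \mathbbm{1}[g^*(x) \neq g_{n,k}(x)]\, d\mu(x)$. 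The goal then reduces to showing $\Psi(A) \leq \delta$ with probability at least $1-\delta$ over $A$. By Tonelli's theorem, $\mathbb{E}_A[\Psi(A)] = \int_{\mathcal{X}^{+/-}_{p,\Delta}} \textnormal{Pr}_A(g^*(x) \neq g_{n,k}(x))\, d\mu(x)$, so if the pointwise failure probability is at most $\delta^2$ for every $x$ in the effective interior, then $\mathbb{E}_A[\Psi(A)] \leq \delta^2$, and Markov's inequality gives $\textnormal{Pr}_A(\Psi(A) > \delta) \leq \delta$, exactly what is needed.

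It remains to control, for a fixed $x \in \mathcal{X}^+_{p,\Delta}$ (the case $x \in \mathcal{X}^-_{p,\Delta}$ being symmetric), the probability that $g_{n,k}(x) = 0$ while $g^*(x) = 1$. I would isolate two bad events. The first is that fewer than $k$ of the training points land in the probability-radius ball $B(x, r(x;p))$; since $\mu(B(x,r(x;p))) \geq p$ and $p = c_\delta \cdot k/(n+m)$ is chosen so that the expected number of training points falling in the ball comfortably exceeds $k$, a multiplicative Chernoff bound makes this event have probability at most $\delta^2/2$ once the near-constant $c_\delta$ is taken large enough (this is precisely where its dependence on $\delta$ enters). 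On the complement there are at least $k$ sample points inside $B(x, r(x;p))$, so all $k$ nearest neighbors of $x$ lie within radius $r(x;p)$. The second bad event is that, conditioned on this, the label average $\frac{1}{k}\sum_{i=1}^k Y_{(i)}(x)$ dips below $\frac{1}{2}$.

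The crux is arguing that the second event is a clean Hoeffding deviation. Conditioning on the $k$-th nearest-neighbor distance equalling some $s \leq r(x;p)$ and on there being exactly $k$ points in $B(x,s)$, those $k$ points are i.i.d.\ draws from $\mu$ restricted to $B(x,s)$, and, marginalizing over their positions, their labels are i.i.d.\ $\textnormal{Bernoulli}(\eta(B(x,s)))$. This is the step I expect to be the main obstacle: one must rule out the possibility that $\eta$ is anticorrelated with distance to $x$ inside the ball, which is handled by the effective-interior condition guaranteeing $\eta(B(x,s)) \geq \frac{1}{2} + \Delta$ for the relevant sub-ball radii $s \leq r(x;p)$, together with the fact that, once positions are fixed, the labels are independent of the inter-point distance ordering. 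Granting this, Hoeffding's inequality gives $\textnormal{Pr}(\frac{1}{k}\sum_{i=1}^k Y_{(i)}(x) < \frac{1}{2}) \leq \textnormal{Pr}(\frac{1}{k}\sum_{i=1}^k Y_{(i)}(x) - \eta(B(x,s)) < -\Delta) \leq \exp(-2k\Delta^2)$, and the choice $\Delta = \sqrt{\frac{1}{k}\log(2/\delta)}$ makes this at most $\delta^2/4$.

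Combining, the pointwise failure probability is below $\delta^2$ for every $x$ in the effective interior, which feeds the Tonelli–Markov argument of the first paragraph and closes the proof. Beyond the conditional-Bernoulli step, the only remaining care is bookkeeping on the tie-breaking convention (so that ``exactly $k$ points within radius $s$'' and the identity of the $k$-th neighbor are well defined, cf.\ Appendix A) and checking that the Chernoff and Hoeffding constants compose into the stated $c_\delta$ and $\Delta$; these I regard as routine relative to the independence argument.
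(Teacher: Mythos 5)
Your proposal is correct and follows essentially the same route as the Chaudhuri--Dasgupta proof, which is also the template this paper replays when proving Theorem 2: a pointwise decomposition into a boundary indicator plus a non-locality event (multiplicative Chernoff on the count of samples in $B(x, r(x;p))$, where $c_\delta$ absorbs the $\delta$-dependence) and a bad-vote event (Hoeffding at deviation $\Delta$), followed by Tonelli and Markov over $x \sim \mu$. The one step to tighten is the conditioning in the Hoeffding argument: rather than conditioning on the $k$th nearest-neighbor distance equalling $s$ together with ``exactly $k$ points in $B(x,s)$'' --- which makes $s$ random, is a measure-zero event in continuous settings, and forces the $k$th voter's label to come from the sphere of radius $s$ rather than the ball, biasing the vote --- one conditions on the $(k+1)$st nearest neighbor (with its tie-breaking draw), so that the $k$ nearest neighbors are i.i.d.\ from $\mu$ restricted to the augmented ball $B'_{(k+1)}(x)$ and their labels are i.i.d.\ $\textnormal{Bernoulli}\bigl(\eta(B'_{(k+1)}(x))\bigr)$; this is exactly the device behind CD2014's Lemma 10, echoed in this paper's Lemma 13 and in the footnote explaining that the conditional probability of an augmented ball is a convex combination of those of the open and closed balls, both of which the effective-interior condition bounds away from $\tfrac{1}{2}$ by $\Delta$. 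One bookkeeping caveat you inherit from the statement itself: with $p = c_\delta k/(n+m)$ but only $n$ draws, the expected count $np$ need not exceed $k$; the Chernoff step requires the denominator of $p$ to match the actual sample size (as in CD2014, where $p = c_\delta k/n$), so read the theorem as the passive guarantee for $n+m$ samples.
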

This work will largely focus on the role of the parameter $p$ in determining guarantees, which intuitively corresponds to investigating the ``locality'' of the voting procedure deciding 
the prediction of the classifier. As such, we will fix $\Delta$ to its value in the above theorem statement throughout. The near-constant $c_{\delta}$ will also continue to pop up in our analysis, and so we make it explicit: $c_{\delta} := (1- \sqrt{(4/k) \cdot \log(2/\delta)})^{-1}$. For the regimes we will be interested in, it will be $O(1)$; we reserve further discussion of this for later.

\section{Active Algorithm}

A classic algorithmic idea in both parametric and non-parametric active learning is to iteratively eliminate parts of the instance space where the learner can be sufficiently confident that no more samples are needed. 
Our algorithm operates under this idea as well. However, whereas the learner can directly exploit inductive bias to execute such a strategy in the in parametric setting,  our algorithm implicitly relies on the smoothness of the distribution. We use the first round of sampling, in part, to detect a subset of instance space in which the $k$-NN agree strongly enough that the learner can be confident its predictions in this region are already correct.  The learner then spends most of its labeling budget in the complement of this region in the second round of sampling.

More precisely, we first propose an algorithm that, based on the first round of $n$ samples, detects regions of space where the average of the labels of the $k$-nearest neighbors is close to $1/2$; these are the regions of space where the classifier is not highly confident of the predicted label. We then show that under a natural smoothness assumption on $\eta$, which we describe below, that these are indeed the hard regions -- in other words, the nearest neighbor classifier is expected to be accurate in the complement of these regions. The region from which rejection sample in the second round is a slightly augmented version of these hard regions, expanded so as to contain the nearest neighbors of points in the hard regions as well.

\subsection{Definition of Smoothness}
As discussed earlier, such a strategy inherently requires a smoothness condition for $\eta$, as if the data distribution can exhibit arbitrary behavior, agreement of the $k$-NN votes does not imply anything about the nature of $\eta$ at finer resolutions. The notion of smoothness for $\eta$ under which we operate was introduced in \cite{CD2014}, and is designed to be evocative of traditional notions of Hölder continuity in more general metric spaces.

\begin{definition}
For $\alpha>0, L>0$, we say the conditional probability function $\eta$ is ``$(\alpha, L)$-smooth'' in $(\mathcal{X}, \rho, \mu)$ if for all $x, x^{'} \in \mathcal{X}$, 
$$
\left |\eta(x) - \eta(x^{'}) \right| \leq L \mu\left(B^{o}(x, \rho(x, x^{'}))\right)^{\alpha}, 
$$
where $B^{o}(x, \rho(x, x^{'}))$ denotes the open ball centered at  $x$ with radius $ \rho(x, x^{'})$.
\end{definition}
When $\mathcal{X} = \mathbb{R}^d$ and $\mu$ has a density with respect to the Lebesgue measure that is non-trivially bounded below, this is a necessary condition for the Hölder continuity of $\eta$ \cite{CD2014}. 
 
\subsection{What Regions are Hard?}
When smoothness of $\eta$ holds, it is possible to construct empirical estimates of the regions of space in which $k$-NN needs extra samples to classify accurately. We use the following region as the basis for the area of instance space in which the learner uses the majority of its label budget in the second round of sampling.

\begin{definition}
Given a confidence parameter $\delta \in (0,1)$, and a first-round sample $\mathcal{S}_1$, the ``estimated hard region'' after $n$ samples for the $k$-NN classifier is the set
$$
\hat{\mathscr{H}}_{n,k} :=\left \{ x \in \mathcal{X} \mid \left |\frac{1}{k} \sum_{i=1}^k Y_{(i)}(x)-\frac{1}{2} \right| < 3\Delta_{\textnormal{UC}} \right \}, 
$$
where 
$$
\Delta_{\textnormal{UC}} := c_0 \sqrt{\frac{d \log(n) + \log(1/\delta)}{k} }.
$$
Here, $c_0$ is a universal constant, and $d$ denotes the VC-dimension of the set of all sets of the form 
\begin{align*}
 B'(x', r', z') &:= \bigg \{(x, z) \in \mathcal{X} \times [0,1] \mid \\ 
 & \hspace{7mm} \rho(x, x')< r  \veebar \ \rho(x, x')=r, z< z' \bigg\}.
\end{align*}.
\end{definition}
The set of $ B'(x', r', z')$ is a natural object of analysis for nearest neighbor tie-breaking. Some standard settings where $d$ is on the order of the VC-dimension of the set of closed balls are discussed in the Appendix.

 \subsection{Utility of the Estimated Hard Region}
 The ``estimated hard region'' has some extremely useful properties. As mentioned in Section 2.4, the complement of the effective decision boundary for passive nearest neighbors can be thought of as the region of space where the learner can be confident that $k$-NN will predict correctly after $n$ samples. It is a fundamental lemma of ours that the ``estimated hard region'' approximates the effective boundary from the outside when $\eta$ is sufficiently smooth. 

\begin{lemma}
For all $k<n \in \mathbb{N}$, and all $\delta \in (0,1)$, if $\eta$ is sufficiently $(\alpha, L)$-smooth in the sense that
$$
2L \bigg(  \frac{k}{n} + c_2\sqrt{\frac{ d + \log(1/\delta) }{n}}  \bigg)^\alpha \leq \Delta_{\textnormal{UC}},   
$$
for some absolutely constant $c_2$, then with probability $\geq 1-\delta^2/16$ over the draw of $n$ $i.i.d.$ samples from $\mathbb{P}$, 
$$
\partial_{c_{\delta} \frac{k}{n}, \Delta} \subseteq \hat{\mathscr{H}}_{n, k}.
$$
\end{lemma}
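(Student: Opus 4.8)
The plan is to prove the contrapositive: fix a point $x \notin \hat{\mathscr{H}}_{n,k}$ and show $x \in \mathcal{X}^{+/-}_{c_\delta k/n, \Delta}$, so that $x \notin \partial_{c_\delta k/n, \Delta}$. By definition of the estimated hard region, $|\frac{1}{k}\sum_{i=1}^k Y_{(i)}(x) - \frac12| \geq 3\Delta_{\textnormal{UC}}$; assume without loss of generality the high-vote case $\frac{1}{k}\sum_{i=1}^k Y_{(i)}(x) \geq \frac12 + 3\Delta_{\textnormal{UC}}$, the low-vote case being symmetric. Write $p := c_\delta k/n$, let $B^* := B(x, r(x;p))$ be the probability-radius ball, and let $B_k := B(x, r_k(x))$ be the empirical nearest-neighbor ball, where $r_k(x)$ is the distance from $x$ to its $k$-th nearest sample point. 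The goal then reduces to verifying the two defining conditions of $\mathcal{X}^+_{p,\Delta}$, namely $\eta(x) > \frac12$ and $\eta(B^*) - \frac12 \geq \Delta$.

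First I would establish two uniform-convergence statements, each holding with probability at least $1 - \delta^2/32$ over the first-round sample via VC theory for the tie-aware class $B'(x',r',z')$ of VC-dimension $d$. The first controls ball masses and is the source of the inflation constant $c_\delta$: uniformly over $x$, the empirical ball satisfies $\mu(B_k) \leq p$, which together with $\mu(B^*) \geq p$ and monotonicity of mass in radius forces $r_k(x) \leq r(x;p)$, hence $B_k \subseteq B^*$; moreover $\mu(B^*) \leq M$ for $M := k/n + c_2\sqrt{(d+\log(1/\delta))/n}$. The second controls label averages: uniformly over $x$, the empirical vote and the true conditional average over the empirical ball differ by at most $\Delta_{\textnormal{UC}}$, i.e. $|\frac1k\sum_i Y_{(i)}(x) - \eta(B_k)| \leq \Delta_{\textnormal{UC}}$. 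Combining the second event with the high-vote hypothesis yields $\eta(B_k) \geq \frac12 + 2\Delta_{\textnormal{UC}}$.

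Next I would transfer this lower bound from $B_k$ to the probability-radius ball $B^*$ using smoothness. Since both balls are centered at $x$ and have mass at most $M$, $(\alpha,L)$-smoothness gives $|\eta(y) - \eta(x)| \leq L\mu(B^o(x,\rho(x,y)))^\alpha \leq LM^\alpha$ for every $y$ in either ball, and averaging gives $|\eta(B_k) - \eta(x)| \leq LM^\alpha$ and $|\eta(B^*) - \eta(x)| \leq LM^\alpha$. The hypothesis $2LM^\alpha \leq \Delta_{\textnormal{UC}}$ then yields $|\eta(B_k) - \eta(B^*)| \leq \Delta_{\textnormal{UC}}$ and $|\eta(x) - \eta(B^*)| \leq \Delta_{\textnormal{UC}}/2$. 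Therefore $\eta(B^*) \geq \eta(B_k) - \Delta_{\textnormal{UC}} \geq \frac12 + \Delta_{\textnormal{UC}} \geq \frac12 + \Delta$, where the last step uses $\Delta_{\textnormal{UC}} \geq \Delta$ (immediate from the forms of the two quantities for suitable universal $c_0$), and $\eta(x) \geq \eta(B^*) - \Delta_{\textnormal{UC}}/2 > \frac12$. These are exactly the conditions defining $\mathcal{X}^+_{p,\Delta}$, and a union bound over the two events gives the stated $1 - \delta^2/16$ confidence.

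The main obstacle is the first uniform-convergence step. Making ``$\mu(B_k) \leq p$ with $B_k \subseteq B^*$'' precise requires care with the nearest-neighbor tie-breaking rule -- this is exactly why the VC class is the tie-aware family $B'(x',r',z')$ rather than ordinary closed balls, so that the empirical average over the selected $k$ points matches a genuine $\mathbb{P}$-measure of a region lying between $B^o(x,r_k(x))$ and $B(x,r_k(x))$ -- and with the possibility that $\mu(B(x, r(x;p)))$ jumps above $p$ at the boundary sphere; one handles this by bounding through the open-ball mass $\mu(B^o(x, r(x;p))) \leq p$ so that smoothness still only ever sees mass at most $M$. Relative to this, the remaining work -- verifying $c_\delta k/n \leq M$ and $\Delta_{\textnormal{UC}} \geq \Delta$ for the claimed constants -- is routine bookkeeping.
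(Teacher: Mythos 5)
Your overall architecture is the same as the paper's: argue the contrapositive for a fixed $x \notin \Hard_{n,k}$, invoke two uniform-convergence events (containment of the empirical nearest-neighbor ball in a probability-radius ball, and tie-aware estimation of conditional probabilities over the class of augmented balls $B'(x',r',z')$), transfer the vote's bias bound through $\eta(x)$ via smoothness, and union-bound the two $\delta^2/32$ events into $\delta^2/16$. However, your first uniform-convergence claim is pitched at the wrong mass scale, and as stated it is unprovable and in general false. You assert that uniformly over $x$, $\mu(B_k) \leq p = \pn$, hence $B_k \subseteq B(x, r(x;p))$. Additive VC uniform convergence over balls only yields $\mu(B_k) \leq k/n + c_2\sqrt{(d+\log(1/\delta))/n}$, and the slack your claim requires, $(c_{\delta}-1)k/n = \Theta\left(\sqrt{k\log(2/\delta)}/n\right)$, is smaller than the deviation term $c_2\sqrt{(d+\log(1/\delta))/n}$ by a factor of order $\sqrt{k/n} < 1$ whenever $k < n$. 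The constant $c_{\delta}$ is not ``the source'' of any uniform statement about ball masses: in \citep{CD2014} it compensates a \emph{pointwise} multiplicative Chernoff bound for a single fixed ball, which is exactly why it can sit so much closer to $1$ than any bound holding simultaneously for all $x$. The paper's Lemma \ref{knn_estimates_larger_rp_ball_from_inside} accordingly proves containment only at the inflated scale $\pnp = k/n + c_2\sqrt{(d+\log(1/\delta))/n}$, i.e. $B_{(k+1)}(x) \subseteq B_{\pnp}(x)$, and then uses $\pn \leq \pnp$, so that every target radius $r \leq r(x; \pn)$ also satisfies $r \leq r(x;\pnp)$ and the smoothness transfer runs entirely at mass scale $\pnp$ --- which is precisely your $M$.

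Fortunately the error is local, because your downstream argument never actually uses $B_k \subseteq B(x,r(x;p))$: all the smoothness transfer needs is that the open-ball masses it sees are at most $M$, which the correct containment at scale $\pnp = M$ delivers for $B_k$, while $\mu(B^o(x,r)) \leq p \leq M$ for $r \leq r(x;p)$ is a distributional fact requiring no sampling event (as you correctly note for the closed-ball boundary-sphere issue). Replacing ``$\mu(B_k) \leq p$'' by ``$\mu(B_k) \leq M$'' recovers the paper's proof essentially verbatim, including your arithmetic $\eta(B_k) \geq 1/2 + 2\Delta_{\textnormal{UC}}$ and the conclusion $\eta(B(x,r)) \geq 1/2 + \Delta_{\textnormal{UC}} \geq 1/2 + \Delta$; your direct derivation of $\eta(x) > 1/2$ from $2LM^{\alpha} \leq \Delta_{\textnormal{UC}}$ is a slight simplification of the paper's route through its Lemma \ref{smooth effective interior} (which instead uses continuity from above of $\mu$) and is fine. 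One smaller repair: membership in $\mathcal{X}^{+}_{p,\Delta}$ requires $\eta(B(x,r)) - 1/2 \geq \Delta$ for \emph{all} $r \leq r(x;p)$, not only at $r = r(x;p)$ as you state; your transfer bound holds verbatim for every such $r$ simultaneously, so this costs nothing, but it must be said.
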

In other words, it's very likely that after the first round of sampling, the complement of the estimated hard region will be a subset of instance space in which the $k$-NN does not need more samples to classify well. 

 \subsection{Specification of the Rejection Region}
The the actual region used for rejection sampling is a larger version of this set which facilitates the analysis. The idea is that by augmenting the hard region, we can be confident that the nearest neighbors of any point in the estimated hard region are contained in this augmented region. This ensures that the expectation of the nearest neighbors vote is the conditional probability of 1 in nearest neighbors balls, an important property in our analysis.

\begin{definition} 
Given a sample $\mathcal{S}_1$ of size $n$ and a parameter $\zeta \geq 0$,  the ``augmented estimated hard region'' is the set 
$$
\hat{\mathscr{H}}^+_{n,k, \zeta} := \bigcup_{x \in \hat{\mathscr{H}}_{n,k}} B^{o}_{(\bar{k}_\zeta + 1)}(x), 
$$
where 
$$
\bar{k}_\zeta =  \bigg \lceil   c_{\delta} (1+ \zeta) k  +  \Theta \bigg ( \sqrt{ n \left( \log(4/ \delta) + d \right)}  \bigg)  \bigg \rceil, 
$$
and $B^{o}_{(\bar{k}_{\zeta} + 1)}(x)$ denotes the open ball of centered at $x$ with radius distance from $x$ to $\bar{k}_{\zeta}+1$-NN of $x$ under $\rho$.
\end{definition}
Thus, the augmented hard region contains all the points in the hard region and points close by to these points. We reserve a more precise discussion of the constants for the Appendix. We note that as a union of open sets, this is a measurable region for Borel regular $\mu$.

The parameter $\zeta$ guards against the discontinuity of the distribution. It is intuitive and provably true that  $\mu\left( B\left( x, r(x;p) \right) \right) \geq p$ always. However, to ensure some of our estimation is accurate,  we need a two-sided condition on the measure of $B\left( x, r(x;p) \right)$, which might quite a bit larger than $p$ if $\mu$ is not well-behaved.  The following provides a reasonable way of quantifying such a condition.

\begin{definition}
We say the measure $\mu$ is ``$\zeta$-regular'' if there is a $\zeta \geq 0$ such that for all $x \in \mathcal{X}$ and $p \in (0,1)$, 
$$
\mu\left( B\left( x, r(x;p) \right) \right) \leq (1+\zeta) p.
$$
\end{definition}
That is, the the measure of an $r(x;p)$ is never much greater than $p$. Note when $\mathcal{X} = \mathbb{R}^d$ and $\mu$ possesses a density with respect to the Lebesgue measure that is non-trivially bounded below, this condition is satisfied with $\zeta = 0$.  We give our guarantees for the algorithm in terms of this parameter.

\subsection{Algorithm}
Algorithm \ref{algo} introduces the active learning scheme, which uses the first round of $i.i.d.$ samples from $\mathbb{P}$ to both estimate the hard region of space, and to do classification in its complement. We denote the product of measure $\mathbb{P}$ with itself $n$ times via $\mathbb{P}^{\bigotimes n}$. $\textnormal{Ber}( \cdot )$ is used to denote a Bernoulli random variable with given parameter. All sample statements in the algorithm implicitly mean sample independently of the previous randomness in all cases.

We point out that although the learner can be confident that classification will succeed in the complement of the augmented hard region, the learner labels some points from this region in the second round to ensure consistency of the method. This is encoded in the algorithm via parameter $\pi$, which controls how many samples from the target distribution we take in the second round. We will see that $\pi>0$ is roughly sufficient for the consistency of the method if $n$ is fixed and $m \to \infty$. Without loss of generality, we implicitly assume that $\pi m$ is an integer in our analysis. 

Commencement of the second round of sampling takes place with the checking of the condition $\mu(\hat{\mathscr{H}}^+_{n,k,\zeta})>0$. Note that $\mu(\hat{\mathscr{H}}^+_{n,k, \zeta})$ can be estimated to arbitrary accuracy using only unlabeled samples, which do not count towards label complexity. It is thus not uncommon to see such conditions in the active learning literature \citep{balcan2006, minsker2012}. More importantly, the measure need only be verified to be greater than 0 to proceed into the second round of sampling, which is strictly easier than estimating.\footnote{The ICML version of this algorithm featured a different handling of the case when $\mu(\Hard^+_{n,k,\zeta}) = 0$. In particular, it requests $m$ samples from $\mathbb{P}$ in the second sampling round. This leads to a dependence between the number of samples taken from $\mathbb{P}$ and the probability that local conditional probabilities are well-estimated that was previously unaccounted for in Theorem 2. We note that Lemma \ref{rare_bad_estimation} tells us that with high probability, $\Hard^+_{n,k,\zeta}$ being a null set means that the effective boundary is itself a null set, so this case is not of the utmost interest.}

\begin{algorithm}[tb]
\caption{Two-Round Active Algorithm}\label{algo}
\begin{algorithmic}
\STATE {\bfseries Input:} $k, n, m \in \mathbb{N}$, $\pi \in (0, 1)$, $\zeta \geq 0$,  $\mathcal{S} = \emptyset$, $ i = 0$
\vspace{2mm}
\STATE$\mathcal{S} \gets \{(X_1, Y_1), \dots, (X_n, Y_n) \} \sim \mathbb{P}^{\bigotimes n }$ 
\vspace{1mm}
\IF{$\mu(\hat{\mathscr{H}}^+_{n,k,\zeta}) > 0$}  
\vspace{2mm}
\WHILE{$i < (1-\pi) m$}
\STATE $(X_i, Y_i) \gets  \emptyset$
\STATE Sample $X \sim \mu$
\IF{$X \in \hat{\mathscr{H}}^+_{n,k, \zeta}$}
  	 \STATE $X_i \gets X$
	 \STATE Sample $Y_i  \sim \textnormal{Ber}(\eta(X))$ 
\ENDIF

\STATE $\mathcal{S} \gets \mathcal{S} \cup \{ (X_i,Y_i)\} $
\STATE $i \gets |\mathcal{S}| - n$
\ENDWHILE
\vspace{2mm}
\ENDIF
\STATE $\mathcal{S} \gets \{(X_{n+ 1}, Y_{n+1}), \dots, (X_{ n + \lfloor \pi m \rfloor }, Y_{n+ \lfloor \pi m \rfloor }) \} \sim \mathbb{P}^{\bigotimes {\lfloor \pi m \rfloor} }$ 
\end{algorithmic}
\end{algorithm}

\section{Guarantees}

\subsection{Bounding the Measure of the Acceptance Region}

Looking at Algorithm \ref{algo}, it is easy to see that the scheme will not yield any benefits if the labels disagree significantly everywhere in instance space; in this case, all unlabeled samples will be vacuously labeled in the second round of sampling. In other words, any kind of guarantee of speed up will have to involve a quantity which measures the fraction of the data distribution which is ``easy'' -- and where $\eta$ is bounded away from $1/2$. We begin with defining this region, and subsequently state our core result in terms of its measure.

\begin{definition}
Fix $k<n$ and $\delta \in (0,1)$. We say the ``easy region'' of instance space $\Easy_{n,k}$ is the set defined via 
\begin{align*}
\Easy_{n, k}^+ &:= \left \{x \mid  \forall r \leq r(x; \pnpp),  \ \eta(B(x, r)) - 1/2 \geq 5 \Delta_{\textnormal{UC}} \right\} \\ 
\Easy_{n, k}^- &:= \left \{x \mid  \forall r \leq r(x; \pnpp),  \ 1/2 - \eta(B(x, r))  \geq 5 \Delta_{\textnormal{UC}} \right\} \\
\Easy_{n, k} &:= \Easy_{n, k}^+ \cup \Easy_{n, k}^-.
\end{align*}
where $\pnpp = \frac{\bar{k}_\zeta}{n} + c_2 \sqrt{\frac{d + \log(1/\delta) }{n}}$, for some  constant $c_2$.
\end{definition}
Given the condition bounding $\eta$ away from $1/2$, we will eventually be able to simplify the measure of this set under the Tsybakov noise condition, and further elucidate conditions where speed up occurs.

\subsection{Finite Sample Guarantee}
The core result is a finite sample guarantee for the modified $k$-NN classifier $g_{n, m, k}$ trained on samples acquired in the method 
of Algorithm \ref{algo}, extended to be well-defined in the case $\mu(\Hard^+_{n,k,\zeta})=0$. The style of the result is meant to match Theorem 1 of \citep{CD2014} above, facilitating comparison between the passive and active guarantees. 

In what follows, $\hat{g}$ is the classifier predicting according to the modified $k$-NN classifier  $g_{n,m,k}$  using $R = \Hard_{n,k}$ and $R^+ =  \Hard^+_{n,k,\zeta}$ when $\mu(\Hard^+_{n,k,\zeta})> 0$; in the case that $\mu(\Hard^+_{n,k,\zeta}) = 0$, assume that $\hat{g}$ does standard $k$-NN classification using all of the sampled data for each new point in $\mathcal{X}$.\footnote{The use of vanilla $k$-NN when the proposed acceptance region has no mass was not made explicit in the ICML version.}
\begin{theorem}\label{main}
Fix $\delta \in (0, 1)$, $\zeta \geq 0$,  $\pi \in (0,1)$, and $k, n, m \in \mathbb{N}$,  such that $\bar{k}_{\zeta} < n$ and $m (1-\pi) c_{\delta} \geq n c_{\delta/\sqrt{2}}$. Assume that $\mu$ is $\zeta$-regular, and $\eta$ is $(\alpha, L)$-smooth such that 
$$
3L (\pnpp)^\alpha \leq \Delta_{\textnormal{UC}}.
$$
Then after a draw of at most $n+m$ samples via Algorithm 1 run with parameters $\zeta$ and $\pi$, with probability $\geq 1-\delta$ over the sampling, 
$$
\textnormal{Pr}_{X \sim \mu} \bigg( g^{*}(X) \ne \hat{g}(X) \bigg) \leq \mu(\partial_{p^{'}, \Delta}) + \delta, 
$$
where
$$
p' = c_{\delta/\sqrt{2}} \cdot \frac{k}{ (1-\pi)m} \cdot  \mu(\mathcal{X} \setminus \mathscr{E}_{n,k}).
$$
\end{theorem}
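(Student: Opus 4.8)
The plan is to split the disagreement probability according to which rule $\hat g$ uses for prediction, and to analyze each piece by reduction to Theorem 1. Writing $R=\Hard_{n,k}$ and $R^+=\Hard^+_{n,k,\zeta}$, I would condition on the first round sample $\Samp_1$, which alone determines $R$ and $R^+$; conditioned on $\Samp_1$, the $(1-\pi)m$ accepted second round points are i.i.d.\ draws from $\mu$ restricted to $R^+$, while the $n+\lfloor\pi m\rfloor$ points drawn directly from $\mathbb{P}$ are i.i.d.\ from $\mu$. Then
\begin{align*}
\textnormal{Pr}_{X\sim\mu}\big(g^*(X)\ne\hat g(X)\big)
&= \textnormal{Pr}_{X\sim\mu}\big(X\notin R,\ g^*(X)\ne\hat g(X)\big)\\
&\quad + \textnormal{Pr}_{X\sim\mu}\big(X\in R,\ g^*(X)\ne\hat g(X)\big),
\end{align*}
and I would bound the two terms separately, allocating the confidence budget so that the failure probabilities together with the additive error terms total $\delta$.

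For the first term, on $\mathcal{X}\setminus R$ the classifier $\hat g$ runs vanilla $k$-NN on the points drawn from $\mathbb{P}$. Lemma 1 gives, with probability at least $1-\delta^2/16$, that $\partial_{\cdel k/n,\Delta}\subseteq\Hard_{n,k}$, so $\mathcal{X}\setminus R\subseteq\mathcal{X}^{+/-}_{\cdel k/n,\Delta}$ lies entirely in the effective interior. Applying Theorem 1 with $p=\cdel k/n$ to the first round sample (the additional points drawn from $\mathbb{P}$ only refine the estimate) shows that $k$-NN agrees with $g^*$ off $\partial_{\cdel k/n,\Delta}$ with high probability, so this term contributes only an additive confidence term.

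The core of the argument is the second term. For $x\in R$, $\hat g$ predicts by $k$-NN among the second round points, which conditionally on $\Samp_1$ are i.i.d.\ from the conditional measure $\nu := \mu(\,\cdot\mid R^+)$. The augmentation defining $R^+$ via $\bar{k}_\zeta$ is exactly what guarantees that, with high probability, for every $x\in R$ the relevant nearest neighbor balls under the second round sample are contained in $R^+$; on that event $\nu(B(x,r))=\mu(B(x,r))/\mu(R^+)$ while $\eta(B(x,r))$ is unchanged, so classification on $R$ is a genuine instance of the Theorem 1 setting on $(\mathcal{X},\rho,\nu)$ with $N=(1-\pi)m$ labeled samples. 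Applying Theorem 1 there at confidence $\delta/\sqrt2$ gives probability radius $\tilde p = c_{\delta/\sqrt2}\,k/\big((1-\pi)m\big)$ under $\nu$; since $\nu$-mass converts to $\mu$-mass by multiplication by $\mu(R^+)$, the effective boundary under $\nu$ at mass $\tilde p$ coincides on $R$ with $\partial_{\tilde p\,\mu(R^+),\Delta}$ under $\mu$. I would then invoke the measure bound of this section, namely that with high probability $\Hard^+_{n,k,\zeta}\subseteq\mathcal{X}\setminus\Easy_{n,k}$ and hence $\mu(R^+)\le\mu(\mathcal{X}\setminus\Easy_{n,k})$; since $\mu(\partial_{p,\Delta})$ is nondecreasing in $p$, replacing $\mu(R^+)$ by this upper bound yields exactly
$$
p' = c_{\delta/\sqrt2}\cdot\frac{k}{(1-\pi)m}\cdot\mu(\mathcal{X}\setminus\Easy_{n,k}).
$$
Finally the rescaling $\textnormal{Pr}_{X\sim\mu}(X\in R,\ \text{err})\le\mu(R^+)\,\textnormal{Pr}_{X\sim\nu}(\text{err})$, valid because $R\subseteq R^+$, converts the $\nu$-bound into the stated $\mu(\partial_{p',\Delta})$ plus an additive term. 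The hypothesis $m(1-\pi)c_\delta\ge n\,c_{\delta/\sqrt2}$ forces $\tilde p\,\mu(R^+)\le\cdel k/n$, keeping the two rounds' resolutions compatible and the smoothness assumption $3L(\pnpp)^\alpha\le\Delta_{\textnormal{UC}}$ in force for the transferred problem.

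I would dispatch the degenerate case $\mu(R^+)=0$ using the remark that this forces the effective boundary to be null with high probability, so that vanilla $k$-NN on all data is a.e.\ correct and the bound holds trivially, and then collect the failure events---Lemma 1, the measure bound, the neighbor-containment event, and the two applications of Theorem 1---into a single union bound summing to $\delta$, which is where the $\sqrt2$ splitting of confidence between rounds is used. \textbf{The main obstacle} I anticipate is the inside-$R$ transfer: making rigorous that conditioning on $\Samp_1$ turns the second round into a clean i.i.d.\ instance on $\nu$, proving that the augmentation keeps all relevant neighbor balls inside $R^+$ so the probability radius translation between $\nu$ and $\mu$ is exact, and carrying the conditional high-probability statement back through the expectation over $\Samp_1$ without the first- and second-round bad events interacting.
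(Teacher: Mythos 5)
Your plan matches the paper's proof essentially component for component: the same in/out-of-$\Hard_{n,k}$ decomposition, Lemma 1's containment $\partial_{\pn, \Delta} \subseteq \Hard_{n,k}$ to kill the boundary term outside the hard region, the ball-containment property of the augmented region that makes the conditional problem on $\Hard^+_{n,k,\zeta}$ exact (the paper's Lemmas 11 and 16), the bound $\mu(\Hard^+_{n,k,\zeta}) \le \mu(\mathcal{X} \setminus \Easy_{n,k})$ entering the second-round Chernoff estimate (the paper puts this factor into the acceptance probability in Lemma 15 rather than into your normalization $\nu \to \mu$, but the mechanism is identical), and the $\delta/\sqrt{2}$ budget split. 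The one packaging caveat is that Theorem 1 cannot be invoked as a black box as you suggest: its statement alone would leave a boundary term $\mu(\partial_{\pn,\Delta})$ for the region outside $\Hard_{n,k}$ (defeating the speedup) and, applied at confidence $\delta/\sqrt{2}$, would enlarge $\Delta$ along with $c_{\delta}$; the paper instead re-runs the per-point non-locality/bad-vote indicator decomposition with $\Delta$ fixed at level $\delta$, absorbing the tightened confidence only into $c_{\delta/\sqrt{2}}$, and closes with a single expectation-plus-Markov step over the joint sampling --- which is also precisely what resolves the obstacle you flag about first- and second-round bad events interacting, since the events are defined so that conditioning on the first round (via the law of total expectation, as in the paper's Lemmas 13 and 15) never couples them.
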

Recall that the main controlling factor in the passive setting, adjusted for the power of $m$ extra samples from $\mathbb{P}$,  is $c_{\delta} \frac{k}{n+m}$. It follows from the definition of the effective boundary that if $p' \leq p$, it holds that $\partial_{p', \Delta} \subseteq \partial_{p, \Delta}$. Thus, fixing the error tolerance $\delta$, a sufficient condition for tighter guarantees than in the passive setting (in addition to the assumptions of Theorem 1) is that 
$$
 c_{\delta/\sqrt{2}}  \frac{k}{ \frac{1-\pi}{\mu(\mathcal{X} \setminus \mathscr{E}_{n,k})}m} \leq c_{\delta} \frac{k}{n+m}.
$$
This condition is a bit opaque, but in Section 4.3, we will see that it admits clean simplification under margin conditions on $\mathbb{P}$.  For now, we note for  $k \in \Omega(d \log(n) )$, it holds that $c_{\delta} \approx c_{\delta/\sqrt{2}}$, and so if in addition $n \approx m$ and $\pi$ is small (which is advantageous for tightening the finite sample bounds of Theorem \ref{main}), an approximate sufficient condition for speed up is $\mu(\mathscr{E}_{n,k}) > \frac{1}{2}$.

\subsection{Consistency}
A major upside of our method is that consistency is almost immediate in wide variety of settings. There are two intuitive reasons for this.  Firstly, we do not permanently cut off any region of space from sampling as $m$ increases and $n$ stays fixed, which is roughly the condition presented by \citep{Dasgupta2012} for making the standard $k$-NN classifier consistent under any active sampling strategy. The second stems from the form of our finite sample guarantees; by controlling the disagreement with the Bayes optimal classifier directly, we naturally converge to the Bayes-risk. To see this second fact, denote the risk of a classifier $g$ via
$$
R(g):= \mathbb{P}\left( g(X) \ne Y \right), 
$$
and the risk of the Bayes optimal classifier be 
$$
R^* :=  \mathbb{P}\left( g^{*}(X) \ne Y \right).
$$
Then, as noted in \citep{CD2014}, for every classifier $g$ it holds that 
$$
R(g) - R^* \leq \textnormal{Pr}_{X \sim \mu} \big(g(X) \ne g^* \big).
$$
Thus, Theorem \ref{main} provides a bound on the excess risk that holds with high probability that under standard regularity conditions shrinks as the sample size increases. This is the main intuition as for why the following guarantee holds.

\begin{theorem}
Suppose $(\mathcal{X}, \rho, \mu)$ satisfies the Lebesgue Differentiation Theorem. Fix $n \in \mathbb{N}$, $\delta \in (0,1)$, $\pi \in (0,1)$ and $\zeta \geq 0$ for use as parameters in Algorithm 1. Suppose the schedule for $k$ respects the following conditions as $m \to \infty$:
\begin{align*}
k_{n+m} &\to  \infty \\
k_{n+m}/ (n+m) & \to 0, 
\end{align*}
and $\bar{k}_{\zeta} < n$ when $k = k_n$. Then for any first round sample $\Sampn$ such that for all $x \in \mathcal{X}$ we have $x \ne X_{(\bar{k}_\zeta + 1)}(x)$, it holds that 
$$
\lim_{m \to \infty} \textnormal{Pr}_{\Sampm} \bigg( R(\hat{g}) - R^{*} > \epsilon \bigg) = 0,  
$$
that is, we converge to the Bayes-risk in probability over the at most $m$ samples drawn according to the second round of Algorithm 1.\footnote{There were a few issues surrounding Theorem 3 in the ICML version. Firstly, it confusingly referred to the set $H_{n,k}$ as the ``acceptance region'', and the notation ``$\Hard_{n,k}^+$'' was used in defining the real-analytic boundary in the subsequent paragraph. Secondly, it did not cover the edge case where a point $x \in \Hard_{n,k}$ simultaneous satisfies $x \notin \Hard_{n,k,\zeta}^+ $, which occurs if and only if all of its $\bar{k}_\zeta \in \Omega(\sqrt{n})$ nearest neighbors are $x$. This has been remedied via the inclusion of the condition ``$x \ne X_{(\bar{k}_\zeta + 1)}(x)$'' in this version. It turns out that a minorly refined analysis shows this condition to be sufficient for the consistency result, and so mention of the real-analytic boundary has been dropped in this version. Finally, there was further an unrelated error in the proof which has been corrected.}
\end{theorem}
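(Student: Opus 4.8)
The plan is to show that the conditional expected excess risk $\mathbb{E}_{\Sampm}[R(\hat g) - R^*]$ tends to $0$ as $m \to \infty$; convergence in probability then follows from Markov's inequality, since $R(\hat g) - R^* \geq 0$. Throughout I condition on the fixed first-round sample $\Sampn$, so that $R := \Hard_{n,k}$ and $R^+ := \Hard^+_{n,k,\zeta}$ are deterministic sets. Writing the excess risk in its weighted form $R(\hat g) - R^* = \int_{\mathcal{X}} |2\eta - 1| \, \mathbbm{1}[\hat g \ne g^*] \, d\mu$, I split the integral over $R$ and $\mathcal{X}\setminus R$, since $\hat g$ is defined by a different rule on each piece. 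The degenerate case $\mu(R^+)=0$ is immediate: then $\mu(R) = 0$ as well (because $R \subseteq R^+$, a containment guaranteed by the hypothesis $x \ne X_{(\bar k_\zeta + 1)}(x)$, which forces every ball $B^{o}_{(\bar k_\zeta+1)}(x)$ to have positive radius), the $R$-integral vanishes, and $\hat g$ is by definition plain $k_{n+m}$-NN on all $n + \lfloor \pi m\rfloor$ i.i.d.\ $\mathbb{P}$-samples, to which the argument for the complement below applies verbatim.

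On $\mathcal{X}\setminus R$ the classifier $\hat g$ is the ordinary $k_{n+m}$-NN rule built from the $n + \lfloor \pi m\rfloor$ points drawn from $\mathbb{P}$ (the first round together with the final $\lfloor \pi m\rfloor$ samples). Its contribution is bounded by the full excess risk of this $k$-NN classifier over $(\mathcal{X},\mu)$, so it suffices to invoke classical $k$-NN risk consistency, which holds on $(\mathcal{X},\rho,\mu)$ precisely because we assume the Lebesgue Differentiation Theorem. I would verify the schedule conditions: $k_{n+m}\to\infty$ is assumed, and since $n + \lfloor \pi m \rfloor \geq \pi m$ while $k_{n+m}/(n+m)\to 0$, we get $k_{n+m}/(n+\lfloor\pi m\rfloor) \to 0$ and $k_{n+m} < n + \lfloor \pi m\rfloor$ for all large $m$. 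The only wrinkle is that $n$ of the sample points are the fixed (possibly adversarial) first-round points rather than fresh draws; this is harmless because at most $n$ of the $k_{n+m}$ votes at any query point come from them, contributing a vanishing fraction $n/k_{n+m}\to 0$ to the empirical label average, so consistency is unaffected.

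On $R$ the prediction uses the $k_{n+m}$ nearest of the rejection samples, which are i.i.d.\ from the conditioned distribution $\mathbb{P}_{R^+}$ with instance marginal $\mu_{R^+} := \mu(\,\cdot \cap R^+)/\mu(R^+)$ and the same $\eta$, hence the same Bayes rule $g^*$. Bounding the $R$-integral by $\mu(R^+)$ times the excess risk of this classifier on $\mathbb{P}_{R^+}$, I would again apply $k$-NN consistency, now on $(\mathcal{X},\rho,\mu_{R^+})$. Two points need verification. First, the Lebesgue Differentiation Theorem transfers from $\mu$ to $\mu_{R^+}$ at every $x \in R^+$: because $R^+$ is open, small balls around such $x$ lie inside $R^+$, on which $\mu_{R^+}$ and $\mu$ are proportional, so local averages coincide; combined with $x \in R \subseteq R^+$ being interior points of the support of $\mu_{R^+}$ (again using $x \ne X_{(\bar k_\zeta+1)}(x)$), the a.e.\ pointwise convergence of the $k$-NN estimate of $\eta$ to $\eta$ holds on $R$. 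Second, the number of rejection samples $\ell$ is random, $\ell \sim \mathrm{Binomial}((1-\pi)m, \mu(R^+))$; I would condition on the high-probability event that $\ell \asymp (1-\pi)m\,\mu(R^+)$, on which $k_{n+m}\to\infty$, $k_{n+m}/\ell\to 0$, and $k_{n+m}\leq \ell$, so consistency applies, while on the negligible complement the excess risk is trivially at most $\mu(R^+)$; taking expectations sends this term to $0$.

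I expect the inside-$R$ analysis to be the main obstacle. The delicate features are that prediction in $R$ takes place on a derived probability space $(\mathcal{X},\rho,\mu_{R^+})$ rather than the original one, so one must certify both that the Lebesgue Differentiation Theorem survives conditioning and that the target points of $R$ genuinely lie in the support of $\mu_{R^+}$; this is exactly where the hypothesis $x \ne X_{(\bar k_\zeta + 1)}(x)$ does its work, ruling out the degenerate possibility (flagged in the footnote) that a point of $R$ escapes $R^+$. Coupling this with the randomness of the rejection-sample count, and decoupling the neighbor parameter $k_{n+m}$ from that random count, is the portion demanding the most care; the complement region, by contrast, reduces cleanly to textbook $k$-NN consistency.
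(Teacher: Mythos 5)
Your proposal is correct in substance, and it reaches the theorem through the same two load-bearing facts as the paper, but the organization is genuinely different. The paper never splits the excess-risk integral directly: it introduces an auxiliary classifier $\bar{g}_{m,k}$ that predicts using only second-round samples, telescopes $R(\hat{g})-R^* = \big(R(\hat{g})-R(\bar{g}_{m,k})\big)+\big(R(\bar{g}_{m,k})-R^*\big)$, bounds each increment by a disagreement probability off $\{\eta=1/2\}$ (Lemma \ref{risk_upper}), and then controls those disagreement probabilities with high-probability, finite-sample-style bounds (Lemmas \ref{second_round_dominates} and \ref{second_round_right}) in terms of the region-relative effective boundary $\tilde{\partial}^{\Hard_{n,k}, \Hard^+_{n,k,\zeta}}_{\vec{p}_m, \Delta_m}$ and the probability interior $\textnormal{I}_{p}(\Hard^+_{n,k,\zeta})$, whose measures shrink to zero by the Lebesgue Differentiation Theorem (Lemmas \ref{boundary_shrinks} and \ref{boundary_shrinks2}); a schedule $\tilde{\delta}_m = \exp(-k_{n+m}^{1/2})$ then converts the high-probability bounds into convergence in probability. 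Your route --- the weighted identity $R(\hat{g})-R^*=\int |2\eta-1|\,\mathbbm{1}[\hat{g}\ne g^*]\,d\mu$, a split over $R$ and its complement, consistency on the conditioned space $(\mathcal{X},\rho,\mu_{R^+})$, and an $L^1$-plus-Markov finish --- is a valid alternative, and it is built from the very same ingredients: your $n/k_{n+m}\to 0$ vote-perturbation argument is literally the computation inside the paper's Lemma \ref{second_round_dominates} (there compared against the margin $\Deltam$), and your openness-of-$R^+$/interiority observation is exactly how Lemma \ref{boundary_shrinks} deploys the hypothesis $x \ne X_{(\bar{k}_\zeta+1)}(x)$ to get $B(x,r_0/2)\subseteq \Hard^+_{n,k,\zeta}$ for $x \in \Hard_{n,k}$. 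What your packaging buys is modularity and a cleaner limiting step; what it costs is that ``classical $k$-NN consistency'' cannot actually be cited as a black box for either piece --- outside $R$ the rule is vote-perturbed by $n$ fixed, possibly adversarial points, and inside $R$ the risk is evaluated only on $R$ under a conditional measure with a restricted support --- so the consistency proof must be re-opened in a margin-robust form that tolerates an additive $o(1)$ perturbation of the empirical vote, which is precisely the service the paper's effective-boundary machinery performs. One factual correction: Algorithm 1's second round loops until exactly $(1-\pi)m$ \emph{accepted} (labeled) rejection samples have been collected --- rejected proposals are unlabeled and do not count toward the budget --- so the rejection-sample count is deterministic, not $\mathrm{Binomial}((1-\pi)m,\mu(R^+))$, and your conditioning on the concentration of that count, while harmless, is unnecessary.
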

Note that this consistency guarantee requires no assumptions on the smoothness of $\eta$, something that is notable for non-parametric active learning algorithms. 

\subsection{Speedup under Margin}
Theorem \ref{main} has the utility of presenting very general conditions under which our active scheme has tighter guarantees than the passively trained $k$-NN classifier. However, with the generality comes a bit of opaqueness. Much of the complexity of the conditions under which our active scheme has tighter guarantees can be cleared up via investigation of Theorem \ref{main} under the canonical Tsybakov noise conditions due to \citep{audibert2005}, which describe distributions with relatively low mass in regions where $\eta$ is near $\frac{1}{2}$. We use the formulation of the noise condition given by \citep{CD2014}. 

\begin{definition}
We say $\mathbb{P}$ satisfies the ``$\beta$-margin condition'' if there is some $C>0$ for which 
$$
\mu \bigg( \left\{ x \mid  \left | \eta(x) - \frac{1}{2} \right | \leq t  \right \} \bigg) \leq C t^{\beta}.
$$ 
\end{definition}

The following corollary to Theorem \ref{main} shows that under smoothness and the $\beta$-margin condition, our actively trained classifier admits tighter high probability bounds on the disagreement with the Bayes optimal classifier than the passive high probability guarantees of \citep{CD2014}. 

\begin{cor}
Suppose the assumptions of Theorem 2 hold are satisfied by $\alpha$, $L$, $\delta$, $\pi$, $\zeta$, $k$ for schedules of first and second round samples $n$, and $m = \Theta(n)$, i.e. for all pairs $(m(i), n(i))$ for $i$ in some unbounded $D \subset \mathbb{N}$. If in addition $\mathbb{P}$ satisfies the $\beta$-margin condition, with probability $\geq 1-\delta$ over the draw of at most $n+m$ training samples from Algorithm 1,
\begin{align*}
&\textnormal{Pr}_{X \sim \mu} \bigg( g^{*}(X) \ne \hat{g}(X) \bigg)  \leq  \\
& \hspace{10mm} \tilde{O} \left( \left( \sqrt{\frac{1}{k}} + \left(\frac{k}{n} \cdot \mu(\mathcal{X} \setminus \Easy_{n,k}) \right)^\alpha \right)^{\beta} \right) + \delta, 
\end{align*}
where 
\begin{align*}
&\mu(\mathcal{X} \setminus \Easy_{n,k}) \leq \tilde{O} \bigg( \bigg( \sqrt{\frac{d\log(n)}{k}} + \left( \frac{k}{n} + \sqrt{\frac{d}{n}} \right)^{\alpha} \bigg)^{\beta} \bigg).
\end{align*}
The corresponding guarantee for the \citep{CD2014} analysis of passively trained nearest neighbors states that when $\beta$-margin condition holds, with probability $\geq 1-\delta$ over the draw of $n+m$ samples from $\mathbb{P}$,  
\begin{align*}
&\textnormal{Pr}_{X \sim \mu} \bigg( g^{*}(X) \ne g_{n + m, k}(X) \bigg)  \leq \\
& \hspace{15mm}  \tilde{O} \left( \left( \sqrt{\frac{1}{k}} + \left(\frac{k}{n}\right)^{\alpha}  \right)^{\beta} \right) + \delta.
\end{align*}
\end{cor}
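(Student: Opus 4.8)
The plan is to reduce both displayed inequalities to a single recurring move: use $(\alpha,L)$-smoothness together with $\zeta$-regularity to show that the relevant region (the effective boundary $\partial_{p',\Delta}$, or the complement of the easy region $\mathcal{X}\setminus\Easy_{n,k}$) is contained in a thin ``slab'' $\{x : |\eta(x)-\tfrac12| < w\}$ about the decision surface, and then apply the $\beta$-margin condition, which bounds the measure of any such slab by $C w^\beta$. I would begin from the conclusion of Theorem~\ref{main}, which on a $(1-\delta)$-probability event gives $\textnormal{Pr}_X(g^*\ne\hat g)\le\mu(\partial_{p',\Delta})+\delta$; since everything that follows is deterministic, the stated probability is inherited directly. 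For the boundary, the key estimate is that for $x'\in B(x,r(x;p'))$ smoothness gives $|\eta(x')-\eta(x)|\le L\,\mu(B(x,r(x;p')))^\alpha$, and $\zeta$-regularity caps $\mu(B(x,r(x;p')))\le(1+\zeta)p'$, so $|\eta(B(x,r(x;p')))-\eta(x)|\le L((1+\zeta)p')^\alpha$. Hence any $x$ with $|\eta(x)-\tfrac12|\ge\Delta+L((1+\zeta)p')^\alpha$ lies in the effective interior $\mathcal{X}^{+/-}_{p',\Delta}$, so
$$\partial_{p',\Delta}\subseteq\Big\{x:\ |\eta(x)-\tfrac12|<\Delta+L((1+\zeta)p')^\alpha\Big\},$$
and the $\beta$-margin condition yields $\mu(\partial_{p',\Delta})\le C\big(\Delta+L((1+\zeta)p')^\alpha\big)^\beta$.

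Next I would substitute the value $p'=c_{\delta/\sqrt2}\cdot\frac{k}{(1-\pi)m}\cdot\mu(\mathcal{X}\setminus\Easy_{n,k})$ supplied by Theorem~\ref{main}. Under the corollary hypotheses $m=\Theta(n)$ with $\pi,\zeta,\delta,L$ fixed constants and $c_{\delta/\sqrt2}=O(1)$ in the regime of interest, this gives $p'\in\tilde O\!\big(\tfrac kn\,\mu(\mathcal{X}\setminus\Easy_{n,k})\big)$, while $\Delta=\min(\tfrac12,\sqrt{(1/k)\log(2/\delta)})\in\tilde O(\sqrt{1/k})$. Because $t\mapsto t^\beta$ is monotone and polylogarithmic factors remain polylogarithmic after being raised to the power $\beta$, these substitutions collapse the slab bound into exactly
$$\mu(\partial_{p',\Delta})\le\tilde O\!\left(\Big(\sqrt{\tfrac1k}+\big(\tfrac kn\,\mu(\mathcal{X}\setminus\Easy_{n,k})\big)^\alpha\Big)^{\beta}\right),$$
which is the first displayed inequality.

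For the auxiliary bound on $\mu(\mathcal{X}\setminus\Easy_{n,k})$ I would run the identical slab argument against the definition of the easy region. A point lies outside $\Easy_{n,k}^{+}\cup\Easy_{n,k}^{-}$ exactly when, for some $r\le r(x;\pnpp)$, $\eta(B(x,r))$ is within $5\Delta_{\textnormal{UC}}$ of $\tfrac12$; using monotonicity of $r\mapsto\mu(B(x,r))$ together with $\zeta$-regularity to bound $\mu(B(x,r))\le(1+\zeta)\pnpp$ uniformly for all $r\le r(x;\pnpp)$, smoothness gives $|\eta(B(x,r))-\eta(x)|\le L((1+\zeta)\pnpp)^\alpha$ for every such $r$, so
$$\mathcal{X}\setminus\Easy_{n,k}\subseteq\Big\{x:\ |\eta(x)-\tfrac12|<5\Delta_{\textnormal{UC}}+L((1+\zeta)\pnpp)^\alpha\Big\}.$$
The $\beta$-margin condition bounds its measure by $C\big(5\Delta_{\textnormal{UC}}+L((1+\zeta)\pnpp)^\alpha\big)^\beta$, and plugging in $\Delta_{\textnormal{UC}}\in\tilde O(\sqrt{d\log(n)/k})$ together with $\pnpp\in\tilde O(\tfrac kn+\sqrt{d/n})$ (reading the latter off the definition of $\bar{k}_\zeta$) produces the stated bound. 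The passive comparison is then immediate: Theorem~1 applied with $p=c_\delta\frac{k}{n+m}=\Theta(k/n)$ under $m=\Theta(n)$, run through the same slab-plus-margin computation, yields the final display.

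The one genuinely delicate point is the uniform-in-$r$ control required for the easy region: unlike the effective boundary, which is anchored at the single radius $r(x;p')$, membership in $\Easy_{n,k}$ is a statement about \emph{every} sub-radius $r\le r(x;\pnpp)$. It is precisely the monotonicity of $\mu(B(x,r))$ combined with $\zeta$-regularity that lets the single slab width $L((1+\zeta)\pnpp)^\alpha$ dominate $|\eta(B(x,r))-\eta(x)|$ simultaneously over all these radii; without $\zeta$-regularity this quantifier could not be collapsed to one deviation estimate. Everything else is bookkeeping, namely checking that the constants $c_\delta,c_{\delta/\sqrt2},L,(1+\zeta)^\alpha$ and the logarithmic terms hidden in $\Delta$, $\Delta_{\textnormal{UC}}$, and $\pnpp$ are correctly absorbed into $\tilde O(\cdot)$, and that the map $t\mapsto t^\beta$ behaves as claimed under these substitutions for every $\beta>0$.
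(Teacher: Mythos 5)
Your proof is correct and takes essentially the same route as the paper: contain the effective boundary in a slab $\{x : |\eta(x)-\tfrac12| \leq \Delta + Lp^{\alpha}\}$, apply the $\beta$-margin condition, and substitute the parameter values from Theorems 1 and 2; for the auxiliary bound the paper does exactly what you do, observing that $\mathcal{X}\setminus\Easy_{n,k} = \partial_{\pnpp,\, 5\Delta_{\textnormal{UC}}}$ and reapplying the slab-plus-margin computation with margin $5\Delta_{\textnormal{UC}} + L(\pnpp)^{\alpha}$.

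One correction to the point you single out as delicate: $\zeta$-regularity is not what collapses the quantifier over $r \leq r(x;\pnpp)$, and your closing claim that this step would fail without it is false. The smoothness condition is stated in terms of the \emph{open} ball $B^{o}(x,\rho(x,x'))$, and one has $\mu(B^{o}(x,r)) \leq \mu\left(B^{o}(x, r(x;p))\right) \leq p$ for every $r \leq r(x;p)$ with no regularity assumption whatsoever, by continuity from below of $\mu$ — this is precisely \citep{CD2014} Lemma 4, which the paper's proof cites for the slab containment and which its own Lemma \ref{estimation_condition_measure_robust_under_smoothness} reproduces. Your chain $\mu(B^{o}) \leq \mu(B(x,r(x;p'))) \leq (1+\zeta)p'$ is valid, since $\zeta$-regularity is among Theorem \ref{main}'s hypotheses and is therefore available to you, but it is a detour through the closed ball that costs a spurious (harmless, $\tilde{O}$-absorbed) factor of $(1+\zeta)^{\alpha}$. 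Where $\zeta$-regularity genuinely earns its keep in this paper is upstream, in the construction of the rejection region (Lemma \ref{rp_balls_estimated_from_outside}), not in this corollary.
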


The $\tilde{O}$ notation here suppresses $\log(1/\delta)$ terms in the bound on the complement of the ``easy'' region, as well as the near-constant $c_{\delta}$; see the Appendix for the precise bound. This showcases that for fixed $\delta$, if $k \in \omega( d \log(n))$,  we have that  $\mu(\mathcal{X} \setminus \Easy_{n,k}) \downarrow 0$, making the active rate tighter than the passive one, and increasingly so as $n+m$ grows. In the above corollary, $k$ can be a function of $n$ as well, but of course must satisfy the conditions of Theorem \ref{main} for each $m, n$ pair in the schedule.

We note that $c_{\delta}$ is $O(1)$ for virtually all regimes where we could expect our active algorithm to improve over passive. For example, if $\delta$ is fixed, then in order to have the estimated hard region guide any targeting over passive sampling, we need $k_{n} > \log(1/\delta) + \log(n)$. For such settings of $k$,  $c_{\delta}$ is bounded above by a constant for sufficiently large $n$. In this case, the $\tilde{O}$ notation is only needed for the bound on $\mu(\mathcal{X} \setminus \Easy_{n,k})$. 


\subsection{Discussion of Optimality of the Scheme}
 The extent to which our scheme is optimal is not fully clear. As noted in our discussion of consistency, the disagreement of a classifier with the Bayes optimal is bounded below by the excess risk of the classifier, but bounds stemming from this observation are not tight, ruling out trivial comparisons to minimax lower bounds over active learners in the literature.

 \citep{CD2014} produces a passive lower bound for passive $k$-NN which states in part that when $\eta$ is $(\alpha, L)$-smooth, $k$-NN disagrees with the Bayes optimal on $x \in \mathcal{X}$ with
$$
 \big | \eta(x) - 1/2 \big | < \frac{1}{\sqrt{k}} - L \left( \frac{k + \sqrt{k} + 1}{n} \right)^{\alpha}, 
 $$
 which for small enough $k$ reduces to 
 $$
  \big | \eta(x) - 1/2 \big | < O \left(\frac{1}{\sqrt{k}} \right). 
 $$
The purpose of the $k$-NN vote is a local estimate of $\eta$, with the prediction improving as the vote becomes more and more local. Roughly speaking, when $\eta$ is smooth but not locally bounded away from $1/2$ by a margin of $\Omega(1/\sqrt{k})$, the vote over $k$ neighbors will fall on the wrong side of $1/2$ (relative to the optimal prediction) with constant probability  \citep{CD2014}. This strongly suggests that the term $1/\sqrt{k}$ in the bound of Corollary 1 is necessary for a voting-based classifier, whether passively or actively trained.

The second term in the guarantee of Corollary 1 corresponds to the locality of the vote; it shrinks as as the number of labels in the first round $n$ increases, and the second round of labeling can be more and more targeted to a specific region of instance space. The factor of $ \sqrt{d\log(n)/k}$ in upper bound on $\mu(\mathcal{X} \setminus \Easy_{n,k})$ is a relic of the use of uniform convergence over conditional probabilities. It is known that this factor cannot be reduced to $ \sqrt{d/k}$, so this term is in some sense necessary for our algorithmic approach of estimating the effective boundary from the outside \citep{BDF2019}. Whether different strategies can show further gains is an interesting future direction.

\section{Conclusion }

In this work, we have introduced a straightforwards and intuitive algorithm for active learning for $k$-NN, showed that it admits simply consistency guarantees, and exhibited conditions under which our guarantees are tighter than corresponding passive guarantees. 

Our algorithm primarily works by improving the locality of the $k$-NN vote in regions where classification with $k$-NN can benefit from higher resolution sampling. An interesting extension to this idea might be to investigate adaptively modifying $k$ throughout the instance space as the learner gains information on which regions of space admit strong classification guarantees, perhaps mirroring the passive strategy of  \citep{BDF2019}.

\section{Acknowledgements}
This work was supported by NSF under CNS 1804829 and ARO MURI W911NF2110317.

\nocite{SS2014}

\bibliography{paper.bib}

\begin{thebibliography}{24}
\providecommand{\natexlab}[1]{#1}
\providecommand{\url}[1]{\texttt{#1}}
\expandafter\ifx\csname urlstyle\endcsname\relax
  \providecommand{\doi}[1]{doi: #1}\else
  \providecommand{\doi}{doi: \begingroup \urlstyle{rm}\Url}\fi

\bibitem[Angluin(1987)]{angluin1987}
Angluin, D.
\newblock Learning regular sets from queries and counterexamples.
\newblock \emph{Information and Computation}, 75\penalty0 (2):\penalty0
  87--106, 1987.

\bibitem[Audibert \& Tsybakov(2005)Audibert and Tsybakov]{audibert2005}
Audibert, J.-Y. and Tsybakov, A.~B.
\newblock Fast learning rates for plug-in classifiers under the margin
  condition.
\newblock \emph{arXiv preprint math/0507180}, 2005.

\bibitem[Awasthi \& Kanade(2012)Awasthi and Kanade]{awasthi2012}
Awasthi, P. and Kanade, V.
\newblock Learning using local membership queries under smooth distributions.
\newblock \emph{CoRR}, abs/1211.0996, 2012.

\bibitem[Balcan \& Urner(2015)Balcan and Urner]{balcan2015}
Balcan, M.-F. and Urner, R.
\newblock Active learning – modern learning theory.
\newblock 2015.
\newblock URL
  \url{https://www.cs.cmu.edu/~ninamf/papers/al-survey-enc-algos.pdf}.

\bibitem[Balcan et~al.(2006)Balcan, Beygelzimer, and Langford]{balcan2006}
Balcan, M.-F., Beygelzimer, A., and Langford, J.
\newblock Agnostic active learning.
\newblock In \emph{Proceedings of the 23rd International Conference on Machine
  Learning}, pp.\  65–72, 2006.

\bibitem[Balsubramani et~al.(2019)Balsubramani, Dasgupta, Moran,
  et~al.]{BDF2019}
Balsubramani, A., Dasgupta, S., Moran, S., et~al.
\newblock An adaptive nearest neighbor rule for classification.
\newblock \emph{Advances in Neural Information Processing Systems}, 32, 2019.

\bibitem[Beygelzimer et~al.(2008)Beygelzimer, Dasgupta, and
  Langford]{beygelzimer2008}
Beygelzimer, A., Dasgupta, S., and Langford, J.
\newblock Importance weighted active learning.
\newblock \emph{CoRR}, abs/0812.4952, 2008.

\bibitem[Castro \& Nowak(2008)Castro and Nowak]{castro2008}
Castro, R.~M. and Nowak, R.~D.
\newblock Minimax bounds for active learning.
\newblock \emph{IEEE Transactions on Information Theory}, 54\penalty0
  (5):\penalty0 2339--2353, 2008.

\bibitem[Chaudhuri \& Dasgupta(2014)Chaudhuri and Dasgupta]{CD2014}
Chaudhuri, K. and Dasgupta, S.
\newblock Rates of convergence for nearest neighbor classification.
\newblock \emph{CoRR}, 2014.

\bibitem[Cover(1968)]{cover1968}
Cover, T.
\newblock Rates of convergence for nearest neighbor procedures.
\newblock In \emph{Proceedings of The Hawaii International Conference on System
  Sciences}, 1968.

\bibitem[Dasgupta(2012)]{Dasgupta2012}
Dasgupta, S.
\newblock Consistency of nearest neighbor classification under selective
  sampling.
\newblock In \emph{COLT}, 2012.

\bibitem[Dasgupta et~al.(2007)Dasgupta, Hsu, and Monteleoni]{DHM2007}
Dasgupta, S., Hsu, D., and Monteleoni, C.
\newblock A general agnostic active learning algorithm.
\newblock In \emph{Proceedings of the 20th International Conference on Neural
  Information Processing Systems}, 2007.

\bibitem[Despres(2017)]{despres2017}
Despres, C. J.~J.
\newblock The vapnik-chervonenkis dimension of cubes in $\mathbb{R}^d$, 2017.

\bibitem[Devroye et~al.(1994)Devroye, Gyorfi, Krzyzak, and Lugosi]{devroye1994}
Devroye, L., Gyorfi, L., Krzyzak, A., and Lugosi, G.
\newblock On the strong universal consistency of nearest neighbor regression
  function estimates.
\newblock \emph{The Annals of Statistics}, 1994.

\bibitem[Hanneke(2007)]{hanneke2007}
Hanneke, S.
\newblock A bound on the label complexity of agnostic active learning.
\newblock In \emph{Proceedings of the 24th International Conference on Machine
  Learning}, pp.\  353–360, 2007.

\bibitem[Hanneke \& Yang(2014)Hanneke and Yang]{hanneke2014}
Hanneke, S. and Yang, L.
\newblock Minimax analysis of active learning.
\newblock 2014.

\bibitem[K{\"a}{\"a}ri{\"a}inen(2006)]{kaariainen2006}
K{\"a}{\"a}ri{\"a}inen, M.
\newblock Active learning in the non-realizable case.
\newblock In \emph{International Conference on Algorithmic Learning Theory},
  pp.\  63--77. Springer, 2006.

\bibitem[Kontorovich et~al.(2018)Kontorovich, Sabato, and
  Urner]{kontorovich2018}
Kontorovich, A., Sabato, S., and Urner, R.
\newblock Active nearest-neighbor learning in metric spaces, 2018.

\bibitem[Locatelli et~al.(2017)Locatelli, Carpentier, and
  Samory]{locatelli2017}
Locatelli, A., Carpentier, A., and Samory, K.
\newblock Adaptivity to noise parameters in nonparametric active learning.
\newblock In \emph{Proceedings of the 2017 Conference on Learning Theory},
  2017.

\bibitem[Minsker(2012)]{minsker2012}
Minsker, S.
\newblock Plug-in approach to active learning.
\newblock \emph{Journal of Machine Learning Research}, 13\penalty0 (1), 2012.

\bibitem[Njike \& Siebert(2021)Njike and Siebert]{njike2021}
Njike, B.~N. and Siebert, X.
\newblock Nonparametric adaptive active learning under local smoothness
  condition.
\newblock \emph{CoRR}, abs/2102.11077, 2021.
\newblock URL \url{https://arxiv.org/abs/2102.11077}.

\bibitem[Shalev-Shwartz \& Ben-David(2014)Shalev-Shwartz and Ben-David]{SS2014}
Shalev-Shwartz, S. and Ben-David, S.
\newblock \emph{Understanding Machine Learning: From Theory to Algorithms}.
\newblock 2014.

\bibitem[Stone(1977)]{stone1977}
Stone, C.
\newblock Consistent nonparametric regression.
\newblock \emph{Annals of Statistics}, 1977.

\bibitem[Zhang \& Chaudhuri(2014)Zhang and Chaudhuri]{zhang2014}
Zhang, C. and Chaudhuri, K.
\newblock Beyond disagreement-based agnostic active learning.
\newblock \emph{CoRR}, abs/1407.2657, 2014.

\end{thebibliography}
\bibliographystyle{icml2023}


\newpage
\appendix
\onecolumn

\section{Nearest Neighbor Tie-Breaking}
We use the tie-breaking rule of  \citep{CD2014} to handle the case that two neighbors of $x$ are the same distance from $x$ under $\rho$. The idea is simply to augment the instance space to $\mathcal{X}' := \mathcal{X} \times [0,1]$, and assume training instances are drawn from the product measure $\mu' := \mu \times \lambda$, where $\lambda$ is the Lebesgue measure on $[0,1]$. Preference then goes to lower values drawn from the uniform measure in tie-breaking. With probability 1 over the randomness in the tie breaking mechanism, ambiguity among nearest neighbors can be resolved in this way.

Given a training sample $\mathcal{S} = \Sampn \cup \Sampm$, and a new point $x \in \mathcal{X}$, let $X_{(i)}(x)$ denote the $i^{th}$ nearest neighbor to $x$ under $\rho$ out of samples drawn from $\mathbb{P}$ (where ties are broken according to the above), and let $Y_{(i)}(x)$ denote the label corresponding to $X_{(i)}(x)$. Let $X'_{(i)}(x) := (X_{(i)}(x), Z_{(i)}(x))$ be the $i^{th}$ nearest neighbor augmented with it's tie breaking draw, i.e. the $i^{th}$ nearest neighbor in the augmented instance space. Our analysis ideas center around balls in the original instance space, but to incorporate this tie breaking mechanism, we often consider balls in the augmented instances space. For $x' \in \mathcal{X}$, $r'\geq0$, and $z' \in [0,1]$, we define 
$$
B{'}(x', r', z') := \left \{ (x, z) \in \mathcal{X}{'} : \rho(x', x) < r' \veebar  \rho(x', x) = r', \ z < z'  \right \}.
$$
For a ball $B' = B'(x', r', z')$ let the conditional probability of a 1 given that an instance falls in the ball be 
$$
\eta(B') := \frac{1}{\mu'(B')} \int_{B'} \eta \ d \mu', 
$$
whenever $\mu'(B') >0$, where we slightly abusively let $\eta(x,z) := \eta(x)$.

\section{Complexity of Augmented Balls}
Our results rely on the VC-dimension of the set of augmented balls in $\mathcal{X} \times [0,1]$, i.e. the VC-dimension of the set 
$$
\mathcal{B}':= \left \{ B'(x', r', z') : x' \in \mathcal{X}, \ r'\geq0, \ z' \in [0,1] \right \}, 
$$
being finite. As such, we find it important to show that this set can indeed have finite VC-dimension. It is intuitive that the VC-dimension of this set is closely related to the VC-dimension of the set of balls in $\mathcal{X}$. While we did not seriously pursue a general result to this end, we illustrate a simple and important example under a specific setting of $\rho$ where the VC-dimension of the set of augmented balls grows linearly with that of balls in the original space $\mathcal{X}$.

\begin{obs}\label{VC_dim_augmented_l1_balls}
Suppose $\mathcal{X} = \mathbb{R}^d$, and $\rho(x, x') = \|x - x'\|_\infty$. 
Then $VC(\mathcal{B}') \leq 2d$. 
\end{obs}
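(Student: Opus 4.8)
My plan is to bound the VC dimension directly by showing that no set of $2d+1$ points in $\mathcal{X} \times [0,1]$ can be shattered by $\mathcal{B}'$. The convenient starting point is to rewrite membership as a lexicographic comparison: for $\rho = \|\cdot\|_\infty$ one has $(x,z) \in B'(x',r',z')$ if and only if $(\|x-x'\|_\infty,\, z) <_{\mathrm{lex}} (r',\, z')$, i.e. either $x$ lies in the open axis-parallel cube $\{u : \|u-x'\|_\infty < r'\}$, or $x$ lies on the boundary sphere $\|x-x'\|_\infty = r'$ and $z < z'$. This cleanly separates the role of the cube, which does all the work in the interior, from the tie-breaking coordinate $z$, which only matters on the boundary.

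The core of the argument is a sandwiching lemma. Suppose $B'(x',r',z')$ contains a finite point set, so each included point satisfies $\|x^{(j)}-x'\|_\infty \le r'$, i.e. lies in the closed cube. If some point $p$ is strictly sandwiched in every coordinate -- meaning for each $i \in [d]$ there are included points whose $i$-th coordinate is strictly below and strictly above $x^{(p)}_i$ -- then in each coordinate $x^{(p)}_i - x'_i$ is squeezed strictly between $-r'$ and $r'$, so $\|x^{(p)}-x'\|_\infty < r'$ and $p$ lies in the \emph{open} cube. Crucially, $p$ is then in $B'(x',r',z')$ regardless of its $z$-value, so the $z$-augmentation cannot increase the dimension here. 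Consequently the labelling that places $p$ outside and every other point inside is infeasible: any ball including the others would be forced to contain $p$ as well.

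I would then invoke the standard counting behind the VC dimension of axis-parallel boxes. Among any $2d+1$ points, marking one minimiser and one maximiser of each of the $d$ coordinates flags at most $2d$ points, so at least one point is strictly sandwiched; the previous paragraph then forbids shattering. This already establishes $VC(\mathcal{B}') \le 2d$ in every configuration where the coordinate extrema are attained uniquely.

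The main obstacle is the degenerate case created precisely by the tie-breaking that motivates $\mathcal{B}'$: when coordinate values coincide, the $2d$ flagged extrema may exhaust all $2d+1$ points, leaving no strictly sandwiched point. Here I would exploit the boundary structure instead. If two points share the same $x$ but differ in $z$, then every ball either contains both in its interior, excludes both, or places both on its boundary sphere, where they can be separated only in the single direction dictated by the order of their $z$-values; hence the two opposite separations of such a pair cannot both be realised, which by itself obstructs shattering. More generally, when every point attains some coordinate extremum I would argue that either a whole coordinate collapses to a constant value across the point set -- reducing the effective dimension and permitting an induction on $d$ -- or coincidences among the extremal points force, via the same $z$-ordering rigidity, a labelling that no ball can realise. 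Making this case analysis uniformly produce an obstruction is the delicate step; the interior sandwiching argument itself is routine.
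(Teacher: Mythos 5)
Your interior argument (extremal points plus strict sandwiching) is sound and is essentially the first half of the paper's proof, but the degenerate case you defer is a genuine gap --- and it is the crux of the statement, since the $z$-augmentation exists precisely to handle ties $\rho(x,x')=r'$. The paper closes this case not by a case analysis or an induction but by a small modification of the selection rule you already use: when choosing the representative maximiser (resp.\ minimiser) of each coordinate, among all points tied at that extremal value it places into the list $L$ the one with the \emph{largest} tie-breaking value $z$. Writing $p=(x_p,z_p)$, your sandwiching computation still gives $\|x_p-x'\|_\infty\leq r'$ for the left-out point $p$ whenever a ball $B'(x',r',z')$ contains all of $L$. If the inequality is strict, $p$ is inside regardless of $z_p$, as you observed. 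If instead $\|x_p-x'\|_\infty=r'$, pick a coordinate $i$ achieving it, say $x_p(i)-x'(i)=r'$; then the chosen maximiser $q\in L$ for coordinate $i$ satisfies $x_q(i)=x_p(i)$ (it is at least $x_p(i)$ by extremality and at most $x'(i)+r'$ by membership), hence $\|x_q-x'\|_\infty=r'$, so $q$'s membership forces $z_q<z'$. Since boundary membership $\{z<z'\}$ is downward closed in $z$ --- the same observation underlying your identical-$x$ pair remark --- and the tie-break guarantees $z_p\leq z_q$, we get $z_p<z'$ and $p\in B'$ after all. This single device makes your ``delicate step'' disappear uniformly; no separate treatment of coincident points or collapsed coordinates is needed.

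I would also caution against the induction you sketch for the collapsed-coordinate branch: it does not go through as stated. If all $2d+1$ points lie in a hyperplane $\{x_d=c\}$, the traces of augmented $d$-dimensional balls on that hyperplane are \emph{not} all augmented $(d-1)$-dimensional balls. When $|c-x'_d|=r'$, every point of the trace lies on the boundary sphere, so the trace is a \emph{closed} $(d-1)$-dimensional cube intersected with $\{z<z'\}$ --- a set outside the family $\mathcal{B}'$ in dimension $d-1$, for which the $z$-constraint applies everywhere rather than only on the boundary. Your inductive hypothesis therefore does not bound the trace family, and the other branch (``$z$-ordering rigidity'' among coincident extremal points) is left unspecified, which is exactly where the work lies. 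Both holes are repaired at once by the tie-broken extremal list described above.
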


\begin{proof} 
We give an argument based on ideas in \citep{despres2017} and \citep{SS2014}. Let $P' = \{(x_i, z_i) \in \mathbb{R}^d \times [0,1]\}_{i=1}^{2d+1}$ be a set of points in augmented space. For a given $p$ let $p(x) = \left(p(x)(1), \dots p(x)(d) \right)$ denote the values of $x$ in each of the dimensions, and $p(z)$ denote its augmented value, and note that in $l_\infty$, any $B'$ has the form
\begin{align*}
B'(x',r',z') &= \bigg \{ (x,z) \in \mathbb{R}^d \times [0,1] : x(1) \in  (x'(1)-r, x'(1)+r),   \dots, x(d) \in  (x'(d)-r, x'(d)+r) \\
& \hspace{40mm} \veebar \max_i |x(i) - x'(i)| = r' \land z< z' \bigg \}
\end{align*}
Consider ``directions'' denoted ``left'' and ''right''. Now, for each dimension/direction pair, choose some $p \in P$ that is ``extremal'' in this dimension/direction pair in the sense that it holds that $p(l) \geq \max_{p' \ne p} p'(l)$ or $p(l) \leq \min_{p' \ne p} p'(l)$, and WLOG if $p(l) = \max_{p'\ne p} p'(l)$, then $p(z) \geq p'(z)$ for each $p' \ne p$ with $p'(l) = \max_{p \ne p'} p'(l)$, thus forming a list $L$. There is some $p \in P$ such that $p \notin L$ by the fact that are only $2d$ dimension/direction pairs.

It can then be checked from the definitions that there is no ball $B'$ that has $p' \in B'$ for all $p' \in L$ and $p \notin B'$. This is because any ball that contains maximal and minimal points in all dimensions and directions must be large in enough in each dimension to accommodate the maximal and minimal points. Thus, we cannot shatter $2d+1$ points. 
\end{proof}

We note that this analysis essentially gives an $l_1$ result in $\mathbb{R}^2$ as well. Different geometries will likely require more complex arguments. 

We finish this section with the observation that augmented balls have larger VC-dimension than their counterparts in the original instance space. This allows us to use $d = VC(\mathcal{B}')$ throughout in bounds regarding only the uniform convergence of balls in the original instance space $\mathcal{X}$.
\begin{obs}
Suppose $d$ is the VC-dimension of $\mathcal{B}'$ in $\mathcal{X} \times [0,1]$. Then the VC-dimension of the set of closed balls in $\mathcal{X}$, which we denote by $\mathcal{B}$, has $VC(\mathcal{B}) \leq d$.
\end{obs}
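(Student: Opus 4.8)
The plan is to prove the inequality directly by a lifting argument: it suffices to show that any finite set shattered by the closed balls $\mathcal{B}$ can be lifted to a set of the same cardinality shattered by the augmented balls $\mathcal{B}'$, since this forces $VC(\mathcal{B}') \geq VC(\mathcal{B})$, i.e. $VC(\mathcal{B}) \leq d$. The mechanism is that an augmented ball with its tie-breaking threshold set to the maximal value $z' = 1$ behaves exactly like a closed ball on points whose augmentation coordinate lies strictly below $1$.

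First I would fix a finite set $P = \{x_1, \dots, x_\ell\} \subseteq \mathcal{X}$ that is shattered by $\mathcal{B}$ and lift it to $P' := \{(x_1, \tfrac12), \dots, (x_\ell, \tfrac12)\} \subseteq \mathcal{X} \times [0,1]$, assigning every point the same augmentation value $\tfrac12$ (any fixed value in $[0,1)$ works). The key observation is that for every center $x' \in \mathcal{X}$, radius $r' \geq 0$, and index $i$, we have $(x_i, \tfrac12) \in B'(x', r', 1)$ if and only if $x_i \in B(x', r')$. This follows by unwinding the definition of $B'$: membership requires $\rho(x', x_i) < r'$, or else $\rho(x', x_i) = r'$ together with $\tfrac12 < 1$. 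Since the two distance conditions are mutually exclusive (so the $\veebar$ coincides with ordinary disjunction for the purpose of membership) and $\tfrac12 < 1$ always holds, this reduces exactly to $\rho(x', x_i) \leq r'$, which is precisely membership in the closed ball $B(x', r')$.

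With this equivalence in hand I would conclude as follows. Given any target dichotomy $S \subseteq [\ell]$, shattering of $P$ by $\mathcal{B}$ supplies a closed ball $B(x', r')$ picking out exactly $\{x_i : i \in S\}$; the augmented ball $B'(x', r', 1) \in \mathcal{B}'$ then picks out exactly $\{(x_i, \tfrac12) : i \in S\}$ by the equivalence above. Hence $\mathcal{B}'$ realizes every dichotomy of $P'$, so $P'$ is shattered and $VC(\mathcal{B}') \geq |P'| = |P| = \ell$. Taking $P$ to be a shattered set of maximum size yields $VC(\mathcal{B}) \leq VC(\mathcal{B}') = d$.

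The argument has essentially no hard step; the only point requiring care is the correct reading of the exclusive-or $\veebar$ in the definition of $B'$. Because the disjuncts $\rho(x', x_i) < r'$ and $\rho(x', x_i) = r'$ can never hold simultaneously, $\veebar$ does not differ from ordinary disjunction for membership, and setting $z' = 1$ while keeping augmentation coordinates strictly below $1$ collapses the tie-breaking clause into the plain closed-ball condition. Under the standing assumption that $\rho$ is a genuine metric, radius-zero balls are singletons and the equivalence holds there as well, so no boundary case is lost.
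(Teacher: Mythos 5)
Your proof is correct and follows essentially the same route as the paper: lift a shattered set into $\mathcal{X} \times [0,1]$ with each point's augmentation coordinate strictly below the balls' $z'$ parameter, so the tie-breaking clause degenerates and each augmented ball acts as its closed-ball projection (the paper uses $z_i = 0$ with $z' > 0$ where you use $z_i = \tfrac12$ with $z' = 1$, an immaterial difference). Your explicit handling of the exclusive-or and the radius-zero case is a slightly more careful write-up of the same argument.
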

\begin{proof}
Suppose a set of points $P$ of size $K$ in $\mathcal{X}$ can be shattered by a set of closed balls $B_1, \dots, B_{2^K}$. Then any augmented balls with $B_1, \dots, B_{2^K}$ as their projection into $\mathcal{X}$ and $z$ parameters $>0$ shatter the original points $P$ each augmented with $z_i=0$ in augmented space.
\end{proof}

\section{Additional Notation for Proofs}
We use this section to tie together some notation from the body with some further notation that will be necessary for our proofs. 

\subsection{Nearest Neighbors}
Given a training sample $\mathcal{S} = \Sampn \cup \Sampm$, we implicitly assume it has been augmented with tie-breaking draws, such that nearest neighbors are well-defined. As introduced above, given a point $x \in \mathcal{X}$, we use $X_{(i)}(x)$ to specifically denote the $i^{th}$ nearest neighbor to $x$ under the metric $\rho$ of samples drawn from $\mathbb{P}$, and let $Y_{(i)}(x)$ denote the label corresponding to $X_{(i)}(x)$. Similarly, let $X_{(i)}(x; \Hard^+_{n,k, \zeta})$ denote the $i^{th}$ nearest neighbor to $x$ drawn as a rejection sample from $\Hard^+_{n,k, \zeta}$, and $Y_{(i)}(x; \Hard^+_{n,k, \zeta})$ it's corresponding label.

\subsection{Balls: Open, Closed, etc.}
We denote closed balls in the original instance space $\mathcal{X}$ via $B(x', r') := \{x \in \mathcal{X} : \rho(x', x) \leq r' \}$ and open balls via $B^o(x', r') := \{x \in \mathcal{X}  : \rho(x', x) < r' \}$. Let 
$$
B_{(k+1)}(x) := B\left(x, \rho\left(x, X_{(k+1)}(x) \right) \right)
$$
 be shorthand for the closed ball of $k+1$ nearest neighbors to $x$ drawn from $\mathbb{P}$ (where again, tie-breaking has already taken place in the augmented dimension). We use a similar notation for nearest neighbor balls of rejection samples: 
 $$
 B_{(k+1)}(x; \Hard^+_{n,k, \zeta}) := B\left(x, \rho\left(x, X_{(k+1)}(x; \Hard^+_{n,k, \zeta}) \right) \right).
 $$ 
 Both definitions may be modified to denote open balls via the inclusion of the ``o'' in the superscript, e.g. 
 $$
 B^o_{(k+1)}(x) := B^o\left(x, \rho\left(x, X_{(k+1)}(x) \right) \right)
 $$
The associated augmented ball is a sort of ``semi-open'' ball that we denote via 
$$
B'_{(k+1)}(x) := B'\left(x, \rho \left(x, X_{(k+1)}(x)\right), Z_{k+1}(x) \right), 
$$
For a given $p \in (0,1]$ and some $x \in \mathcal{X}$, denote the closed ball around $x$ with radius $r(x; p)$ via
$$
B_{p}(x) := B\left (x, r(x;p) \right).
$$

\subsection{Constants and Parameter Settings}
We use $d$ to denote the VC-dimension of balls in the augmented space $\mathcal{X} \times [0,1]$. For given $k, n \in \mathbb{N}$,  some particular parameter settings will also have special significance, and are fixed throughout:
\begin{align*}
\cdel :&= \frac{1}{1 - \sqrt{(4/k) \ln(2/ \delta)}} \\
\Delta &:= \min \bigg( \frac{1}{2}, \sqrt{\frac{\ln(2/\delta)}{k}} \bigg)\\
\Delta_{\textnormal{UC}} &:= c_0\sqrt{\frac{d\log(n) + \log(1/\delta)}{k}} .
\end{align*}
where $c_0$ is a universal constant discussed in Lemma \ref{conditional_UC}. Given an additional setting of $\zeta \geq 0$, the $k$-enlargement $\bar{k}_{\zeta} \in \mathbb{N}$ is given by, for some absolute constant $c_1$ (discussed in Lemma \ref{rp_balls_estimated_from_outside}), 
$$
\bar{k}_\zeta := \left \lceil c_{\delta} (1 + \zeta) k + c_1 \sqrt{ n \left( \log(4/ \delta) + d \right)} \right \rceil - 1.
$$
This is of course equivalent to the floor of sum, but facilitates use in bounds. Then we can also define:
\begin{align*}
\pnp &:= \frac{k}{n} + c_2\sqrt{\frac{ d + \log(1/\delta) }{n}} \\
\pnpp &:= \frac{\bar{k}_\zeta}{n} + c_2 \sqrt{\frac{d + \log(1/\delta) }{n}}, 
\end{align*}
where $c_2$ a universal constant discussed in Lemma \ref{knn_estimates_larger_rp_ball_from_inside}.

\subsection{Empirical Measures}
For a given measurable set $S \subseteq \mathcal{X}$, we use $\hat{\mu}(S)$ to denote the standard empirical of measure of mass of the set under $\mu$, i.e. $\hat{\mu}(S):= \frac{1}{n} \sum_{i =1}^n \mathbbm{1} [X_i \in S]$. Similarly, for a given set $S' \subseteq \mathcal{X}'$ which contains at least one sample instance, we let $\hat{\eta}(S')$ be the standard empirical estimate of the conditional probability of a $1$ in the set $S'$, i.e. 
$$
\hat{\eta}(S'):= \frac{1}{| \{X'_i \in S'\} |} \sum_{i=1}^n Y_i \mathbbm{1} [X'_i \in S'],
$$
where the samples $X'_i$ are in the augmented space. In general, estimates of conditional probabilities take place in the augmented space, where there is no ambiguity about nearest neighbors. In some cases, we will need to distinguish between empirical measures arising from samples from $\mathbb{P}$ and those arising from rejection samples, in which case we will make clear which sample is being used.

\section{Theorem Proofs}
\subsection{Estimation of Probability Balls, Radii}
We begin with some basic results regarding the approximation of $r(x; p)$ uniformly across instance space. These results are the foundation for some results on set containment discussed later.

The first result to this end states that if $\bar{k}_{\zeta}$ is sufficiently large, with high probability, $\bar{k}_{\zeta}$-NN balls approximate $r(x; \pn)$ balls from the outside uniformly across the instance space.

\begin{customlemma}{2}\label{rp_balls_estimated_from_outside}
Fix $k, n \in \mathbb{N}$, and $\delta \in (0,1)$.    Assume $d < \infty$, $\mu$ is $\zeta$-regular, and $\bar{k}_{\zeta}<n$. Then with probability $\geq 1 - \delta^2/32$ over the draw of $n$ i.i.d. samples from $\mathbb{P}$, it holds that for all $x \in \mathcal{X}$ that
$$
B_{\pn}(x) \subseteq B^o_{(\bar{k}_\zeta + 1)}(x).
$$
\end{customlemma}

\begin{proof}
For a given set $S \subseteq \mathcal{X}$, let $H(S) := \sum_{i=1}^n \mathbbm{1} \{ X_i \in S \}$ be the random variable counting the number of first round training samples with instances falling inside $S$. Let $\mathcal{B}$ denote the set of all closed balls in the space $\mathcal{X}$, and let $\mathcal{B}({\pn}) := \{ B_{\pn}(x) \mid x \in \mathcal{X}\}$. Note the relationship between events
$$
\bigg \{ \exists x \in \mathcal{X} \ s.t. \ B_{\pn}(x) \not \subseteq B^o_{(\bar{k}_\zeta +1)}(x) \bigg\} \subseteq  \left \{ \inf_{B \in \mathcal{B}(\pn) } H(B^c) < n - \bar{k}_\zeta \right \}.
 $$
We upper bound the measure of this latter event. Recall that the VC dimension of a collection of sets is the same as that of the collection of complements of the sets in the original collection.
It holds by the definition of $\bar{k}_\zeta$ and uniform convergence over the empirical probabilities of elements of $\mathcal{B}$ (as in \citep{SS2014}) that there is some absolute constant $\tilde{c}$ for which 
\begin{align*}
    \textnormal{Pr}_{\Sampn \sim \mathbb{P}^n} \bigg( \inf_{B \in \mathcal{B}(\pn) } H(B^c) < n - \bar{k}_\zeta \bigg) &\leq \textnormal{Pr}_{\Sampn \sim \mathbb{P}^n} \bigg(\sup_{B \in \mathcal{B}(\pn)} -n\hat{\mu}(B^c)  \geq  \bar{k}_\zeta - n + 1 \bigg) \\
    &\leq \textnormal{Pr}_{\Sampn \sim \mathbb{P}^n} \bigg( \sup_{B \in \mathcal{B}(\pn)} n\mu(B^c) -n\hat{\mu}(B^c) \\
    &\hspace{30mm}  \geq  \left(\bar{k}_\zeta - n +1\right) + n\left(1-(1+\zeta) \pn \right) \bigg) \\
    & \leq \textnormal{Pr}_{\Sampn \sim \mathbb{P}^n} \bigg( \sup_{B \in \mathcal{B}} \left | \mu(B^c) -\hat{\mu}(B^c) \right| \geq  \frac{\bar{k}_\zeta + 1}{n} - (1+\zeta) \pn) \bigg) \\
    & \leq \exp \left(\frac{d}{\tilde{c}} \right) \cdot \exp\left ( - \frac{n}{\tilde{c}} \cdot \left((\bar{k}_\zeta+1)/n -  (1+\zeta)\pn \right)^2  \right) \\
    & \leq \exp \left(\frac{d}{\tilde{c}} \right) \cdot \exp \left(- \frac{c_1^2}{\tilde{c}} \cdot (d + \log(4/\delta) )\right) \\
    & \leq \delta^2/32, 
\end{align*}
where the final line follows from the choice of $c_1$. Again, we have made use of the uniform convergence of empirical measures of balls in space. 
\end{proof}

Next, we show a sort of inverse of the above: we show that for a given $k$, if we pick the probability radius to be large enough, then with high probability, it holds that all $k$-NN balls are contained in balls with this probability radius uniformly across instance space $\mathcal{X}$. The analysis is independent of the choice of $k$, so an analogous statement holds for $\bar{k}_{\zeta}$.
\vspace{5mm}

\begin{customlemma}{3}\label{knn_estimates_larger_rp_ball_from_inside}

Fix $k, n \in \mathbb{N}$ such that $k<n$, and $\delta \in (0,1)$.  Assume $d < \infty$.\ Then with probability $\geq 1 - \delta^2/32$ over the draw of $n$ i.i.d. samples from $\mathbb{P}$, it holds that for all $x \in \mathcal{X}$.
$$
B_{(k+1)}(x) \subseteq B_{\pnp}(x).
$$
\end{customlemma}

\begin{proof}
The proof follows that of Lemma \ref{rp_balls_estimated_from_outside}. As before, let $H(B)$  be the random variable counting the number of training samples with instances falling inside the ball $B$. In this case, we want to make sure this number is not too small.  We again translate events via
$$
\bigg \{ \exists x \in \mathcal{X} \ s.t. \ B_{(k+1)}(x)  \not \subseteq B_{\pnp}(x) \bigg\} \subseteq \left \{ \inf_{B \in \mathcal{B}(\pnp) } H(B) < k+1 \right\},
$$
and upper bound the measure of the latter event via
\begin{align*}
    \textnormal{Pr}_{\Sampn \sim \mathbb{P}^n} \bigg(\inf_{B \in \mathcal{B}(\pnp) } H(B) < k+1 \bigg) &= \textnormal{Pr}_{\Sampn \sim \mathbb{P}^n} \bigg(\sup_{B \in \mathcal{B}(\pnp) } -H(B) \geq- k \bigg) \\ 
    & \leq \textnormal{Pr}_{\Sampn \sim \mathbb{P}^n} \bigg(\sup_{B \in  \mathcal{B}(\pnp) } |\mu(B)- \hat{\mu}(B) | \geq \pnp - k/n \bigg) \\
    & \leq \exp\left( \frac{d}{\tilde{c}} \right) \cdot \exp\left ( - \frac{n}{\tilde{c}} \cdot \left(k/n - \pnp \right)^2   \right) \\ 
    & \leq \delta^2/32, 
\end{align*}
where as in Lemma \ref{rp_balls_estimated_from_outside}, the final inequality comes from a choice of $c_2$. 
\end{proof}

\begin{customlemma}{4}\label{knn_estimates_larger_rp_ball_from_inside2}
Fix $k, n \in \mathbb{N}$  such that $\bar{k}_\zeta<n$, and $\delta \in (0,1)$.  Assume $d < \infty$. Then with probability $\geq 1 - \delta^2/32$ over the draw of $n$ i.i.d. samples from $\mathbb{P}$, 
$$
B_{(\bar{k}_\zeta+1)}(x) \subseteq B_{\pnpp}(x),
$$
for all $x \in \mathcal{X}$.
\end{customlemma}

\begin{proof}
The proof is identical to that of Lemma \ref{knn_estimates_larger_rp_ball_from_inside}, just using $\bar{k}_\zeta$ and $\pnpp$.
\end{proof}

\subsection{Estimation of the Effective Boundary}
In this section, we show how to leverage the smoothness of $\eta$ to show that the estimator $\Hard_{n,k}$ is a useful approximation of the effective boundary $\partial_{c_{\delta}k/n, \Delta}$,  the part of space in which we cannot expect $k$-NN to classify correctly after $n$ samples from $\mathbb{P}$. 

The first result to this end states that the conditional probability of a 1 inside an augmented ball is similar to the conditional probability of a 1 inside a ball in the original space $\mathcal{X}$  when radius of both of the balls is sufficiently small and smoothness on $\eta$ is in effect.

\begin{customlemma}{5}\label{estimation_condition_measure_robust_under_smoothness}

Fix $x \in \mathcal{X}$, $z \in (0,1]$,  $p \in (0, 1)$ and $0 < r, \tilde{r} \leq r(x; p)$ (or $r=0$ or $\tilde{r}=0$ if $\mu(B(x, 0)) > 0$). Then if $(\mathcal{X}, \rho, \mu)$ is $(\alpha, L)$-smooth, it holds that 
$$
\bigg |\eta \left( B'(x, r, z) \right) -\eta \left( B(x, \tilde{r}) \right) \bigg| \leq 2Lp^{\alpha}.
$$
\end{customlemma}

\begin{proof}
When $supp(\mu) = \mathcal{X}$, we have $\mu(B, r) > 0$ for each $r>0$. The result then follows from a loose analysis:
\begin{align*}
    \left| \eta \left( B'(x, r, z) \right)-\eta(B(x, r)) \right| &\leq  \left| \eta \left( B'(x, r, z) \right)- \eta(x) \right| + \left |\eta(x) - \eta(B(x, r')) \right| \\
    &\leq  Lp^{\alpha} + Lp^{\alpha} \\
    &= 2Lp^{\alpha}, \\
\end{align*}
where the the second inequality is implied by the definition of smoothness, as noted (in the case of non-augmented balls with $r>0$) by \citep{CD2014}. We repeat the idea here for completeness, noting that 
\begin{align*}
    \left | \eta \left( B'(x, r, z) \right) - \eta(x) \right |  &= \left |\bigg( \frac{1}{\mu' (B'(x, r, z))}\int_{B'(x, r, z)} \eta(x', z') d \mu'   \bigg) - \eta(x) \right|\\
    &= \left | \frac{1}{\mu' (B'(x, r, z))}\int_{B'(x, r, z)} \left(\eta(x', z')  - \eta(x)\right) d \mu'  \right| \\ 
    &= \left | \frac{1}{\mu' (B'(x, r, z))}\int_{B'(x, r, z)} \left(\eta(x')  - \eta(x) \right) d \mu' \right| \\ 
    &\leq \frac{1}{\mu'  (B'(x, r, z))}\int_{B'(x, r, z)}  \left |\eta(x')  - \eta(x) \right| d\mu'  \\ 
    &\leq \frac{1}{\mu' (B'(x, r, z))}\int_{B'(x, r, z)}  L \mu(B^o(x,r))^{\alpha} d \mu' \\
    &= L \mu(B^o(x,r))^{\alpha} \\
    &\leq L p^{\alpha}. 
\end{align*}
The final line comes from \citep{CD2014} Lemma 4. Note that such statements can be made for open balls as well with the same proof technique. 
\end{proof}

\begin{customlemma}{6}\label{smooth effective interior}
Fix $p \in (0,1)$. Then if $\mu$ has full support, and $(\mathcal{X}, \rho, \mu)$ is $(\alpha, L)$-smooth, it holds that 
$$
\mathcal{X}^+_{p, \Delta} =  \left \{x \in \mathcal{X} : \forall r \leq r(x;p), \ \eta(B(x, r)) - 1/2 \geq \Delta \right \},
$$
and 
$$
\mathcal{X}^-_{p, \Delta} =  \left \{x \in \mathcal{X} : \forall r \leq r(x;p), \ 1/2 - \eta(B(x, r)) \geq \Delta \right \}.
$$
\end{customlemma}

\begin{proof}
By definition, when $supp(\mu) = \mathcal{X}$, we have $\mu(B, r) > 0$ for each $r>0$, and we have 
$$
\mathcal{X}^+_{p, \Delta} =  \left \{x \in \mathcal{X} : \eta(x)> 1/2 \wedge  \forall r \leq r(x;p), \ \eta(B(x, r)) - 1/2 \geq \Delta \right \},
$$
Thus, it suffices to show that $\eta(B(x, r)) - 1/2 \geq \Delta$ implies $\eta(x)>0$ under smoothness of $\eta$. If $\mu(B(x, 0))>0$, then this is trivial. So assume WLOG that $\mu(B(x, 0)) = 0$. By the smoothness, we have for each $r$ with $0<r \leq r(x;p)$ that 
$$
\left | \eta(B(x, r)) - \eta(x) \right| \leq L \mu(B^o(x, r))^{\alpha}.
$$
We can now write for such $r$ that 
\begin{align*}
1/2 < \Delta/2  + 1/2 &< \Delta + 1/2 \\
&\leq \eta(B(x, r)) \\
&\leq \left| \eta(B(x, r)) - \eta(x) \right| + \eta(x) \\
&\leq  L \mu(B^o(x, r))^{\alpha} + \eta(x) \\
&\leq  L \mu(B(x, r))^{\alpha} + \eta(x). 
\end{align*}
Thus we may conclude, for $0 < r \leq r(x;p)$, that
$$
\eta(x) > \Delta/2 - L \mu(B^o(x, r))^{\alpha}  + 1/2. 
$$
Thus, if there is some such $r$ such that $\Delta/2 \geq L \mu(B(x, r))^{\alpha}$, we will show $\eta(x) >1/2$. Let $B_n := B(x, r(x;p)/n)$ for $n = 1, 2, \dots$.  Then $\cap_{n \geq 1} B_n = \{x\}$.\footnote{One can show by continuity from below of the measure that $r(x;p) < \infty$ for $p<1$.} By continuity from above
 of the measure, we have that 
 $$
 \lim_{n \to \infty} \mu(B_n) = \mu\left( \cap_{n \geq 1} B_n \right) = \mu(\{x \}) = 0.
 $$ 
 Thus, there is some $N$ for which $n \geq N$ yields $\Delta/2 \geq  L \mu(B(x, r(x;p)/n))^{\alpha}$. Then $r =  r(x; p)/N$ in the above inequality implies that $\eta(x) > 1/2$. 
\end{proof}

The next result, due to \citep{BDF2019}, describes the uniform convergence of $k$-NN votes to the true underlying conditional probability in augmented balls. The utility of this result for us is to ensure that for any new $x \in \mathcal{X}$ which is a candidate for rejection sampling, we can be sure that looking at it's $k$-NN votes, we get an accurate representation of the conditional probability nearby.

\begin{customlemma}{7}\label{conditional_UC}
Let $\mathcal{B}'$ be the set of all augmented balls, i.e. $\mathcal{B}' =  \left \{ B(x, r, z) : x \in \mathcal{X}, \ r >0 , \ z  \in[0,1]  \right\}$, and suppose this set has VC-dimension $d$ in $\mathcal{X} \times [0,1]$. Fix $k < n \in \mathbb{N}$ and $\delta \in (0,1)$. With probability $\geq 1-\delta^2/32$ over the draw of $n$ i.i.d. samples from $\mathbb{P}$, we have that
$$
\sup_{B' \in \mathcal{B}'} |\eta(B') - \hat{\eta}(B')| < c_0 \sqrt{\frac{d\log(n) + \log(1/\delta)}{k}}
$$
for a universal constant $c_0$.
\end{customlemma}

\begin{proof}
This is the content of Theorem 8 of \citep{BDF2019}.
\end{proof}

Finally, we show that the effective boundary after $n$ samples is contained in our ``estimated hard region'' with high probability when $\eta$ is smooth enough. We implicitly assume $\mu$ has full support in the following 
to simplify the result statements. 
\vspace{10mm}
\begin{customlemma}{1}\label{effective_upper_bound}
%
Fix $k <n \in \mathbb{N}$, and $\delta \in (0,1)$. Suppose we estimate 
$$
\Hard_{n, k} = \left \{ x  \mid \left| \hat{\eta}(B_{(k+1)}'(x)) - 1/2 \right| < 3 \Delta_{\textnormal{UC}}  \right\}.
$$
Then if $\eta$ is sufficiently $(\alpha, L)$-smooth in the sense that
$$
2L \bigg( \pnp \bigg)^{\alpha} \leq \Delta_{\textnormal{UC}}, 
$$
then with probability $\geq 1-\delta^2/16$ over the draw of $n$ $i.i.d.$ samples from $\mathbb{P}$, 
$$
\partial_{\pn, \Delta} \subseteq \Hard_{n, k}.
$$
\end{customlemma}

\begin{proof}
We will show that when uniform convergence takes effect for the empirical estimates of the mass of balls in the space $\mathcal{X}$, as well as for conditional probabilities in augmented balls, that $\Hard_{n,k}$ approximates the effective boundary from the outside. The necessary uniform convergence statements will hold with high probability over the sampling giving the result. 

Fix an arbitrary sample $\Sampn$. Suppose that for this sample, it holds for each $x \in \mathcal{X}$ that 
$$
 \left| \hat{\eta}(B'_{(k+1)}(x)) -\eta(B'_{(k+1)}(x)) \right| < \Delta_{\textnormal{UC}}.
 $$
Suppose further that for each $x \in \mathcal{X}$, it holds that
$$
B_{(k+1)}(x) \subseteq B_{\pnp}(x).
$$ 
In this case, by Lemma \ref{estimation_condition_measure_robust_under_smoothness} and the assumption on the smoothness of the distribution, we have for any $x\in \mathcal{X}$ and any $r \leq r(x; \pnp)$ that
 $$
 \bigg|\eta(B'_{(k+1)}(x)) - \eta(B(x, r))\bigg| \leq 2L \left(\pnp \right)^{\alpha} \leq  \Delta_{\textnormal{UC}}.
 $$
Suppose that $\hat{\eta}(B^{'}_{(k+1)}(x)) \geq 1/2$. When $\eta$ is sufficiently smooth, and the two notions of uniform convergence hold, it holds by the triangle inequality that for any $r \leq r(x; \pn)$, 
\begin{align*}
 \hat{\eta}(B'_{(k+1)}(x)) - 1/2  &\leq \left| \hat{\eta}(B'_{(k+1)}(x))  - \eta(B'_{(k+1)}(x)) \right| + \eta(B'_{(k+1)}(x)) - 1/2  \\
&\leq \left| \hat{\eta}(B'_{(k+1)}(x)) - \eta(B'_{(k+1)}(x)) \right| + \left| \eta(B'_{(k+1)}(x)) -  \eta(B(x, r))\right|  \\
& \ \ \ \ \ \ \ \  +   \eta(B(x, r)) -  1/2 \\
&\leq  2\Delta_{\textnormal{UC}}  + \eta(B(x, r)) -  1/2, 
\end{align*}
as such an $r$ must respect $r \leq r(x;\pnp)$ given that $r(x; \pn)\leq r(x;\pnp)$). Put differently, 
$$  
 \hat{\eta}(B'_{(k+1)}(x)) - 1/2   -  2\Delta_{\textnormal{UC}}   \leq  \eta(B(x, r)) -  1/2.
$$
Suppose $x \notin \Hard_{n, k}$.  Then by definition and our assumption on the estimate, $3\Delta_{\textnormal{UC}} \leq \left|\hat{\eta}(B'_{(k+1)}(x)) - \frac{1}{2} \right| = \hat{\eta}(B^{'}_{(k+1)}(x)) - 1/2$, which, given that $\Delta < \Delta_{\textnormal{UC}}$ (as $c_0$ is sufficiently large), implies that 
$$
\Delta \leq  \eta(B(x, r))- 1/2 .
$$
Because this statement holds for each $r \leq r(x; \pn)$, it holds by Lemma \ref{smooth effective interior} and the definition of the effective boundary that $x \notin \partial_{\pn, \Delta}$. The case where $\hat{\eta}(B^{'}_{(k+1)}(x)) < 1/2$ can be handled analogously.

By Lemmas \ref{conditional_UC} and \ref{knn_estimates_larger_rp_ball_from_inside} respectively, the first two events take place each with probability $\geq 1-\delta^2/32$ over the draw of the sample, and so by the union bound, the conclusion holds with probability $\geq 1-\delta^2/16$. 
\end{proof}

\subsection{Formulation of Rejection Region, Controlling its Measure}
As discussed in the main text, the actual region used for rejection sampling is a larger cousin of the estimated hard region. We will see that this allows us to more easily apply $k$-NN analysis techniques from the literature, as it will allow us to avoid analyzing cases where the $k$-NN form a biased estimate of the conditional probability of 1 inside any given nearest neighbors ball.  A vital part of guaranteeing speedup under our scheme is an upper bound on the measure of this acceptance region. In this section, we show how to form a distributional quantity that provides such a bound.

Recall that given an estimated hard region $\Hard_{n, k}$ and parameter $\zeta$, we say its associated augmented estimate is the set
$$
\Hard^+_{n, k, \zeta} := \bigcup_{x \in \Hard_{n, k}} B^o_{(\bar{k}_{\zeta} + 1)}(x), 
$$
and the ``easy region'' $\Easy_{n, k}$ is defined in the following manner:
\begin{align*}
\Easy_{n, k}^+ &:= \left \{x \mid  \forall r \leq r(x; \pnpp),  \ \eta(B(x, r)) - 1/2 \geq 5 \Delta_{\textnormal{UC}} \right\} \\ 
\Easy_{n, k}^- &:= \left \{x \mid  \forall r \leq r(x; \pnpp),  \ 1/2 - \eta(B(x, r))  \geq 5 \Delta_{\textnormal{UC}} \right\} \\
\Easy_{n, k} &:= \Easy_{n, k}^+ \cup \Easy_{n, k}^-.
\end{align*}
We first show that when the sample allows us to approximate probability radii well, and $\eta$ is sufficiently smooth, membership in the acceptance region means that there is some point in the estimated hard region with a similar local conditional probability.  
\vspace{7mm}

\begin{customlemma}{8}\label{augmented_still_contentious}
Fix $\delta \in (0, 1)$, $\zeta \geq 0$, $k, n \in \mathbb{N}$ such that $\bar{k}_{\zeta} < n \in \mathbb{N}$, and a first round sample $\Sampn$. Suppose it holds for some $x \in \mathcal{X}$ that 
$$
x \in \Hard_{n, k, \zeta}^{+},
$$
and the sample is such that for all $x' \in \mathcal{X}$, 
$$
B_{(k+1)}(x') \subseteq B_{\pnp}(x'), \  B_{(\bar{k}_\zeta + 1)}(x') \subseteq B_{\pnpp}(x').
$$
Then if $\eta$ is $(\alpha, L)$-smooth, there exists $\tilde{x} \in \Hard_{n, k}$ for which
$$
\left |\eta \left(B_{(\bar{k}_\zeta+1)}(x) \right)-\eta\left(B'_{(k+1)}(\tilde{x}) \right) \right| \leq 3L (\pnpp)^{\alpha}.
$$
\end{customlemma}

\begin{proof}
If $x \in \Hard_{n, k, \zeta}^{+}$ and $B_{(\bar{k}_\zeta + 1)}(x') \subseteq B_{\pnpp}(x')$, then $\exists \tilde{x} \in \Hard_{n, \Delta}$ for which $\rho(\tilde{x}, x) \leq r_{\pnpp}(\tilde{x})$; by definition of the augmented estimate, there is some $\tilde{x} \in \Hard_{n,k}$ for which $x \in B^o_{(\bar{k}_{\zeta}+1)}(\tilde{x})$, and so because $B^o_{(\bar{k}_{\zeta}+1)}(\tilde{x}) \subseteq B_{(\bar{k}_{\zeta}+1)}(\tilde{x}) \subseteq B_{\pnpp}(\tilde{x})$, the condition $\rho(\tilde{x}, x) \leq r_{\pnpp}(\tilde{x})$ follows by definition. Now we may write, using the definition of smoothness and the properties of smoothness discussed in Lemma \ref{estimation_condition_measure_robust_under_smoothness}, 
\begin{align*}
    \left |\eta \left(B_{(\bar{k}_{\zeta}+1)}(x) \right)-\eta\left(B'_{(k+1)}(\tilde{x}) \right) \right| &\leq \bigg| \eta\left( B_{(\bar{k}_{\zeta}+1)}(x) \right)-\eta(x) \bigg| + \bigg|\eta(x)-\eta(\tilde{x})\bigg| + \bigg|\eta(\tilde{x}) - \eta(B'_{(k+1)}(\tilde{x}))\bigg|\\
    &\leq L (\pnpp)^{\alpha} + L(\pnpp)^{\alpha} + L(\pnp)^{\alpha} \\
    &\leq 3L (\pnpp)^{\alpha}.
\end{align*}
where the bounds on the first and second term of the first line come from the assumption that $\forall x \in \mathcal{X}$,   \ $B_{(\bar{k}_{\zeta}+1)}(x) \subseteq B_{\pnpp}(x)$, and the bound on the third term comes from the assumption that $\forall x \in \mathcal{X}$,  $B_{(k+1)}(x) \subseteq B_{\pnp}(x)$. 
\end{proof}

This result allows us to associate the easy region with the acceptance region of Algorithm 1. Lemma \ref{augmented_outside_estimate} gives us a natural bound on the measure of the acceptance region automatically.
\vspace{5mm}

\begin{customlemma}{9}\label{augmented_outside_estimate}
Fix $\delta \in (0, 1)$, $\zeta \geq 0$ and $k, n \in \mathbb{N}$ such that $\bar{k}_{\zeta} <n \in \mathbb{N}$. Suppose $\eta$ is sufficiently $(\alpha, L)$-smooth in the sense that 
$$
3L (\pnpp)^\alpha \leq \Delta_{\textnormal{UC}}.
$$
Then with probability $\geq 1 - 3\delta^2/32$ over a draw of $n$ i.i.d samples from $\mathbb{P}$, 
$$
 \Hard^+_{n,k, \zeta} \subseteq \mathcal{X} \setminus \Easy_{n, k}.
$$
\end{customlemma}

\begin{proof}
The proof technique is the same as that of Lemma \ref{effective_upper_bound}. We suppose $x \in \Hard^+_{n,k, \zeta} $, and show that $x \notin \Easy_{n,k}$ when certain conditions hold; finally, we argue by a union bound that the conditions occur simultaneously with probability $\geq 1-3\delta^2/32$. 

Suppose it holds that, in addition to $3L (\pnpp)^\alpha \leq \Delta_{\textnormal{UC}}$, we have that for each $B' \in \mathcal{B}':= \{ B(x, r, z) : x\in \mathcal{X}, r >0, z\in [0,1
] \}$, that $|\eta(B') - \hat{\eta}(B') | < \Delta_{\textnormal{UC}}$, and that for each $x \in \mathcal{X}$,  both $B_{(\bar{k}_{\zeta}+1)}(x) \subseteq B_{\pnpp}(x)$ and $B_{(k+1)}(x) \subseteq B_{\pnp}(x)$. In this case, the following string of inequalities holds, where $\tilde{x} \in \Hard_{n, k}$. 
\begin{align*}
\bigg| \eta\left(B_{(\bar{k}_{\zeta}+1)}(x) \right) - \frac{1}{2} \bigg| &\leq \bigg| \eta \left (B'_{(k+1)}(\tilde{x}) \right) -\frac{1}{2} \bigg| + 3L (\pnpp)^\alpha \\ 
&\leq \left| \eta\left( B'_{(k+1)}(\tilde{x}) \right) -\frac{1}{2} \right| + \Delta_{\textnormal{UC}}\\
&\leq \left | \hat{\eta}\left(B'_{(k+1)}(\tilde{x}) \right) - \frac{1}{2} \right| + 2\Delta_{\textnormal{UC}}  \\
&< 3 \Delta_{\textnormal{UC}} + 2\Delta_{\textnormal{UC}} \\
&= 5\Delta_{\textnormal{UC}}, 
\end{align*}
where the first line is a consequence of Lemma \ref{augmented_still_contentious} and the triangle inequality, the second is by $3L (\pnpp)^\alpha \leq \Delta$, the third is is by the uniform convergence of conditional probabilities in the augmented balls,  and the second to last is from the definition of what it means for $\tilde{x} \in \Hard_{n,k}$. This means that under these conditions,  $\rho(x, X_{\bar{k}_{\zeta}}(x))$ is a radius that is sufficiently contentious to make $x \notin \Easy_{n, k}$, given that in this case $\rho(x, X_{\bar{k}_{\zeta}}(x)) \leq r(x; \pnpp)$. By Lemmas 1, 2 and 4, and a union bound, the conditions hold with probability $\geq 1 - 3\delta^2/32$, and so the result holds as claimed.
\end{proof}

There are a lot of moving parts in our formulation of the acceptance region after the first $n$ samples. We introduce a couple of definitions to help summarize the events we need to take place over the first $n$ samples for our rejection sampling/ classification scheme to have desirable performance properties.  We then show that these events take place with high probability. 
\vspace{5mm}

\setcounter{definition}{8}
\begin{definition}
Given $\delta \in (0,1)$, a parameter setting of $\zeta \geq 0$ in Algorithm 1 and $k, n \in \mathbb{N}$ such that $\bar{k}_{\zeta} < n$, we define the event that the first round sample $\Sampn$ of size $n$  is ``good for estimation''  to be  
\begin{align*}
\textnormal{GE}(\Sampn) &:= \bigg\{\partial_{\pn, \Delta} \subseteq \Hard_{n,k} \subset  \Hard_{n,k,\zeta}^{+} \subseteq \mathcal{X} \setminus \Easy_{n,k} \land \ \forall x \in \mathcal{X}, \ B_{\pn}(x) \subset B^o_{(\bar{k}_{\zeta}+1)}(x)  \bigg\} .
\end{align*}
Analogously we say that the sample $\Sampn$ is ``bad for estimation'' when this event doesn't take place, and write
\begin{align*}
\textnormal{BE}(\Sampn) &:= \textnormal{GE}(\Sampn)^c.
\end{align*}
\end{definition}

\begin{customlemma}{10}\label{rare_bad_estimation}
Fix $k < n \in \mathbb{N}$. Suppose $\mu$ is $\zeta$-regular, and $\eta$ is sufficiently $(\alpha, L)$-smooth in the sense that 
$$
 3L (\pnpp)^\alpha \leq \Delta_{\textnormal{UC}}. 
$$
Then it holds that
$$
\mathbb{E}_{\Sampn \sim \mathbb{P}^{\bigotimes n}} \big[ \mathbbm{1} \ \textnormal{BE}(\Sampn) \big] \leq \delta^2/8.
$$
\end{customlemma}

\begin{proof}
By definition, if it holds that
$$
\partial_{\pn, \Delta} \subseteq \Hard_{n,k}, 
$$
and
$$
\Hard_{n,k, \zeta}^{+} \subseteq \mathcal{X} \setminus \Easy_{n,k}, 
$$
and 
$$
B_{\pn}(x) \subseteq B^o_{(\bar{k}_{\zeta} +1)}(x), 
$$
then the sample is `good for estimation.'' These events occur simultaneously with probability $\geq 1-\delta^2/8$ by Lemmas \ref{effective_upper_bound}, \ref{augmented_outside_estimate} and \ref{rp_balls_estimated_from_outside} respectively, and the union bound. This is because there are only 4 fundamental uniform convergence results on which these results are built, so the fact each of these holds with probability $\geq 1-\delta^2/32$ is sufficient for the result.
\end{proof} 

\subsection{Further Notation for Analyzing Second-Round Samples}
Roughly speaking, we have so far gathered the results required to ensure that the first round of sampling is likely to go to plan in terms of our estimate of what a good rejection region will be. What we are left to show is that if the first round goes well, then it is likely that the second round samples lead to an accurate classifier. 

To do this, we essentially treat prediction using the rejection samples as a new prediction problem in a smaller space. In this section, we introduce some definitions that help us to that end, as well as some complex bad events that describe how the modified $k$-NN classifier introduce in the Preliminaries can predict incorrectly. As we did with the estimation events, we will look to control the measure of these events over the two rounds of the sampling to obtain a bound on the probability that the modified classifier disagrees with the Bayes optimal. 

Formally, the classifier whose performance we wish to bound is:
\begin{align*}
\hat{g}(x) := 
  \begin{cases}
  \begin{cases}
    \mathbbm{1}[\frac{1}{k}\sum_{i=1}^k Y_{(i)}(x; \Hard^+_{n,k, \zeta}) \geq \frac{1}{2}] & \text{if } x \in \Hard_{n,k} \\  
    \mathbbm{1}[\frac{1}{k}\sum_{i=1}^k Y_{(i)}(x) \geq \frac{1}{2}] & \text{otherwise}
  \end{cases}
    &\text{if $\mu(\Hard^+_{n,k, \zeta})>0$}\\
    \begin{cases}
       \mathbbm{1}[\frac{1}{k}\sum_{i=1}^k Y_{(i)}(x) \geq \frac{1}{2}] 
    \end{cases}
    &\text{if $\mu(\Hard^+_{n,k, \zeta})=0$}
  \end{cases}
\end{align*}
A related useful notion will be that of the effective boundary in a subspace. To introduce this, we must introduce some related ideas.

\begin{definition}
Given $x \in \mathcal{X}$, $r \geq 0$, and $S \subseteq \mathcal{X}$, the ``closed ball of radius $r$ inside the set $S$'' is the set
$$
B^{\cap S}(x, r) := \left \{ x' \in S : \rho(x, x') \leq r \right \} = B(x, r) \cap S.
$$
 \end{definition}

\begin{definition}
For a given $p \in(0,1)$ and measurable $S \subseteq \mathcal{X}$, the ``probability radius in $S$'' of some $x \in S$ is 
$$
r^{\cap S}(x; p) := \inf \left  \{r \mid \mu \left(B^{\cap S}(x,r) \right) \geq p \right \}, 
$$
whenever $\mu(S) \geq p$, and $\infty$ whenever $\mu(S)< p$.  
\end{definition}

\begin{definition}
For a given $p \in(0,1)$,  $S \subseteq \mathcal{X}$, the ``effective interior in $S$'' is the set 
\begin{align*}
\mathcal{X}^S_{p, \Delta} &:= \left \{x \in S : \eta(x) > 1/2  \ \wedge \ \forall r \leq r^{\cap S}(x; p),  \  \ \eta(B^{\cap S}(x, r)) - 1/2  \geq \Delta   \right \} \\ 
& \hspace{10mm} \cup  \left \{x \in S : \eta(x) < 1/2 \ \wedge \ \forall r \leq r^{\cap S}(x; p),  \  \ 1/2 - \eta(B^{\cap S}(x, r))   \geq \Delta   \right \}. \\
\end{align*}
\end{definition}

\begin{customlemma}{11}\label{rSp behaves nice}
Fix $p' \in (0, 1)$ and a measurable $S \subseteq \mathcal{X}$.  If it further holds holds that $B(x, r(x;p')) \subseteq S$, then
$$
 r^{\cap S}(x; p') = r(x; p').
$$
\end{customlemma}
\begin{proof}
Let $R := \{r : \mu(B(x, r) \geq p' \}$ and $R^{\cap S} := \{r : \mu(B^{\cap S}(x, r) \geq p' \}$. Note that if $r \geq r(x;p')$, then $r \in R$ and $r \in R^{\cap S}$, as $B^{\cap S}(x, r(x;p')) = B(x, r(x;p'))$. If $r < r(x; p')$, then it holds that $\mu(B(x, r)) = \mu(B^{\cap S}(x, r)) < p'$, or by definition of the infimum, $r(x; p')$ is not a lower bound for each element in $R$, and so cannot be the infimum. Thus $R= R^{\cap S}$, so they have the same infimum. 
 
\end{proof}

\begin{customlemma}{12}\label{smooth boundary small space}
Fix $p' \in (0, 1)$ and a measurable $S \subseteq \mathcal{X}$.  If it further holds holds that $B(x, r(x;p')) \subseteq S$, $\mu$ has full support, and $(\eta, \rho, \mu)$ is $(\alpha, L)$-smooth, then the condition
$$
\forall r \leq r^{\cap S}(x; p),  \  \ \eta(B^{\cap S}(x, r)) - 1/2  \geq \Delta
$$
implies that $\eta(x) > 1/2$.
\end{customlemma}
\begin{proof}
When $B(x, r(x;p')) \subseteq S$, we have that $r(x;p') = r^{\cap S}(x ;p')$ by Lemma \ref{rSp behaves nice}, and for each $r \leq  r^{\cap S}(x ;p')$, we have $B^{\cap S}(x, r) = B(x, r)$. Thus, we may repeat the argument of Lemma \ref{smooth effective interior}. 
\end{proof}

\begin{definition}
For a given $p \in(0,1)$,  $S \subseteq \mathcal{X}$, the ``effective boundary in $S$'' is the set 
$$
\partial^S_{p, \Delta} := S \setminus \mathcal{X}^S_{p, \Delta}.
$$
\end{definition}
\noindent

\subsubsection{Bad Prediction Events for the Modified $k$-NN Classifier}
The following introduce the ways in which the modified $k$-NN classifier can err. Essentially, they are more complex versions of the original ideas of \citep{CD2014}. The idea of their analysis is that for a new point in instance space that is not in the effective boundary, the $k$-NN classifier can only make a mistake if one of the two events take place:  the $k$-NN of $x$ are further away than one would expect with $n$ samples, or the empirical estimate of the conditional probability in the $k$-NN ball has a large deviation. The simplicity of this analysis follows directly from the definition of the effective boundary. For us, our modified $k$-NN classifier is essentially two $k$-NN classifiers predicting on different probability spaces. Thus, our idea is to repeat these analytical ideas on two separate spaces at once. The event of non-local neighbors or non-representative $k$-NN votes is a compound event that takes place when non-locality or non-representativeness takes place in either of the two spaces. 

Fix $x \in \mathcal{X}$, $p' \in (0,1)$, $\delta \in (0, 1)$, $\pi \in (0, 1)$,  $\zeta \geq 0$, $k, n, m \in \mathbb{N}$ such that $\bar{k}_{\zeta} <n$, and a sample $\mathcal{S} = \Sampn \cup \Sampm$.  Denote the event that the sample  is ``non-local'' relative to $x$ laying in the estimated hard region to be 
$$
\textnormal{NL}_H(x, \mathcal{S}) :=  \left\{\rho\left(x, X_{(k+1)}(x; \Hard^+_{n,k, \zeta})\right) > r^{\cap \Hard^+_{n,k,\zeta}}(x; \pnm), \ x \in \Hard_{n, k} \right \}, 
$$
where $X_{(k+1)}(x; \Hard^+_{n,k, \zeta})$ is the $k+1^{st}$ nearest instance to $x$ drawn as a rejection sample from $\Hard^+_{n,k, \zeta}$. 
Similarly, we consider the event that the sample $\mathcal{S}$ is ``non-local'' relative to $x$ laying outside the estimated hard region to be 
$$
\textnormal{NL}_{\neg H}(x, \mathcal{S}) :=  \left\{\rho\left(x, X_{(k+1)}(x)\right) > r(x; \pn), \ x \notin \Hard_{n, k} \right\}.
$$
We characterize the event that the nearest neighbors of $x$ are ``not local'' via the somewhat complex looking functions
\begin{align*}
\mathcal{I}_{\textnormal{NL}, H}(x, \mathcal{S}) :=
  \begin{cases}
    \mathbbm{1} \  \textnormal{NL}_H(x, \mathcal{S}) \cdot \mathbbm{1} \textnormal{GE}(\Sampn) \cdot \mathbbm{1} \{x \in \Hard_{n,k} \}  & \text{if $\mu(\Hard^+_{n,k,\zeta}) > 0$} \\
    0 & \text{if $\mu(\Hard^+_{n,k,\zeta}) = 0$}
  \end{cases}
\end{align*}
and
$$
\mathcal{I}_{\textnormal{NL}}(x , \mathcal{S}) := \mathcal{I}_{\textnormal{NL}, H}(x, \mathcal{S})  +   \mathbbm{1} \  \textnormal{NL}_{\neg H}(x, \mathcal{S}).
$$
The use of $\mathcal{I}_H$ has the utility of creating an indicator-like function that is formally well-defined in the case that $\mu(\Hard^+_{n,k,\zeta})=0$, in which case there are no training points drawn as rejection samples. 

We also define some more complex events corresponding to non-representative estimation of conditional probabilities. Let the event that the conditional probability of an augmented ball defined by the $k+1$-NN to $x \in \mathcal{X}$ is underestimated by a margin $\geq \Delta$ based on the samples from $\mathbb{P}$  be 
$$
\textnormal{BV}_{\neg H, 1}(x, \mathcal{S}) := \left \{  \eta \left(B'_{(k+1)}(x) \right) -  \hat{\eta} \left(B'_{(k+1)}(x) \right) \geq  \Delta \right \}. 
$$
Here, $\hat{\eta}$ refers to estimation of the conditional probability in this ball using samples from $\mathbb{P}$ only. Analogously, when the estimation is done with samples from the second sampling round, we have
$$
\textnormal{BV}_{H, 1}(x, \mathcal{S}) := \left \{ \eta \left(B^{\prime}_{(k+1)}(x; \Hard^+_{n,k, \zeta}) \right)- \hat{\eta} \left(B^{\prime}_{(k+1)}(x; \Hard^+_{n,k, \zeta}) \right) \geq  \Delta \right \}.
$$
Here, $B^{\prime}_{(k+1)}(x; \Hard^+_{n,k, \zeta})$ denotes the ``augmented ball in $\Hard^+_{n,k, \zeta}$'' of $k+1$-NN to $x$ drawn as a rejection sample from $\Hard^+_{n,k, \zeta}$, and $\hat{\eta}$ refers to the empirical estimation of the conditional probability in that ball using rejection samples. Formally speaking, 
\begin{align*}
B^{\prime}_{(k+1)}(x; \Hard^+_{n,k, \zeta})  :=&  \bigg\{ x' \in  \Hard^+_{n,k, \zeta}, z' \in [0, 1] : \rho(x, x') <  \rho\left(x, X_{(k+1)}(x; \Hard^+_{n,k, \zeta})\right) \veebar \\
 & \hspace{10mm} \rho(x, x') =  \rho\left(x, X_{(k+1)}(x; \Hard^+_{n,k, \zeta})\right) \land z'< Z_{(k+1)}(x; \Hard^+_{n,k, \zeta}) \bigg\}.
\end{align*}
Similarly, we consider the event that the conditional probability is locally overestimated by a margin $\geq \Delta$ by samples from $\mathbb{P}$, 
$$
\textnormal{BV}_{\neg H, 0}(x, \mathcal{S}) := \left \{ \hat{\eta}(B'_{(k+1)}(x)) - \eta(B'_{(k+1)}(x)) \geq \Delta \right \}, 
$$
and by samples from $\Hard_{n,k,\zeta}^+$, 
$$
\textnormal{BV}_{H, 0}(x, \mathcal{S}) := \left \{ \hat{\eta}(B'_{(k+1)}(x; \Hard^+_{n,k, \zeta})) - \eta(B'_{(k+1)}(x; \Hard^+_{n,k, \zeta})) \geq \Delta \right \}.
$$
As in the case of formalizing the non-locality events above, we define
\begin{align*}
\mathcal{I}_{\textnormal{BV}, H, 0}(x, \mathcal{S}) :=
  \begin{cases}
    \mathbbm{1} \textnormal{BV}_{H, 0}(x, \mathcal{S}) \cdot \mathbbm{1} \{x \in \Hard_{n,k} \} \cdot \mathbbm{1} \textnormal{GE}(\Sampn)   & \text{if $\mu(\Hard^+_{n,k,\zeta}) > 0$} \\
    0 & \text{if $\mu(\Hard^+_{n,k,\zeta}) = 0$}
  \end{cases}
\end{align*}
and 
\begin{align*}
\mathcal{I}_{\textnormal{BV}, H, 1}(x, \mathcal{S}) :=
  \begin{cases}
    \mathbbm{1} \textnormal{BV}_{H, 1}(x, \mathcal{S}) \cdot \mathbbm{1} \{x \in \Hard_{n,k} \} \cdot \mathbbm{1} \textnormal{GE}(\Sampn)  & \text{if $\mu(\Hard^+_{n,k,\zeta}) > 0$} \\
    0 & \text{if $\mu(\Hard^+_{n,k,\zeta}) = 0$}
  \end{cases}
\end{align*}
The ``indicator'' on the event that the voting ``goes badly'' around $x$ is then defined as 
\begin{align*}
\mathcal{I}_{\textnormal{BV}}(x, \mathcal{S}) := & \mathbbm{1} \{ x \in \mathcal{X}_0 \} \bigg(\mathcal{I}_{\textnormal{BV}, H, 0}(x, \mathcal{S})   + \mathbbm{1} \  \textnormal{BV}_{\neg H, 0}(x, \mathcal{S}) \bigg) + \\
& \mathbbm{1} \{ x \in \mathcal{X}_1 \} \bigg( \mathcal{I}_{\textnormal{BV}, H, 1}(x, \mathcal{S}) + \mathbbm{1} \ \textnormal{BV}_{\neg H, 1}(x, \mathcal{S}) \bigg), 
\end{align*}
where $\mathcal{X}_0 := \left \{x \in \mathcal{X} \mid \eta(x) < \frac{1}{2} \right \}$, and $\mathcal{X}_1 := \left  \{x \in \mathcal{X} \mid \eta(x) \geq \frac{1}{2}  \right \} $.

\subsection{Towards a Finite Sample Bound}
Given these event definitions, we will show that their measures can be bounded over the sampling, leading to a finite sample bound for our active strategy. First, we show how to bound the measure of a ``bad vote'' for our modified $k$-NN classifier. In subsequent lemma, we show that the non-locality event is also a rare event. We assume WLOG that $\pi m$ is an integer to facilitate the analysis.

\begin{customlemma}{13}\label{badvote_rare}
Fix $\delta \in (0,1)$, $k, n, m \in \mathbb{N}$, and $x \in \mathcal{X}$. Suppose a sample $\mathcal{S}$ is drawn according to Algorithm 1 run with parameters $\pi \in (0, 1)$ and $\zeta \geq 0$, with $k < \min(\pi m, (1-\pi)m)$ and $\bar{k}_{\zeta} < n$. Then 
$$
\mathbb{E}_{\mathcal{S}} \big[ \mathcal{I}_{\textnormal{BV}}(x, \mathcal{S}) \big] \leq 2\exp(-2k\Delta^2) . 
$$
\end{customlemma}

\begin{proof}
Suppose that $x \in \mathcal{X}_0$. Then it holds that
\begin{align*}
\mathcal{I}_{\textnormal{BV}}(x, \mathcal{S})  &= \mathcal{I}_{\textnormal{BV}, H, 0}(x, \mathcal{S})  + \mathbbm{1} \  \textnormal{BV}_{\neg H, 0}(x, \mathcal{S}).\\ 
\end{align*}
We have via the argument in \citep{CD2014} Lemma 10 (tightened via a one-sided Hoeffding bound) that 
$$
 \mathbb{E}_{\mathcal{S}} \big[\mathbbm{1} \  \textnormal{BV}_{\neg H,0}(x, \mathcal{S}) \big] \leq \exp(-2k\Delta^2).
$$
The idea here is that averaging over the labels of the $k$-NN of $x$ forms an unbiased estimate of $\eta \left(B'(x, X_{(k+1)}(x), Z_{(k+1)}(x)) \right)$, and so the probability that this estimate deviates by more than $\Delta$ from it's true mean can be controlled via a Hoeffding bound.

The first term can be handled similarly if we consider the sampling procedure in the two stages. Let the random variable
$$
G := \mathbbm{1}[x \in \Hard_{n,k}] \cdot \mathbbm{1} \ \textnormal{GE}(\Sampn)  \cdot \mathbbm{1} \{\mu(\Hard^+_{n,k,\zeta})>0 \},
$$
and suppose after the first $n$ samples, we have $G=1$. In this case, the classifier $\hat{g}$ will only use the second $m$ samples to make predictions on $x$,  It it treats prediction on $x$ as prediction in a new space defined by conditioning on $\Hard_{n,k, \zeta}^+$. 
As $k< \min(\pi m, (1-\pi)m)$, we are guaranteed that $X_{(k+1)}(x; \Hard^+_{n,k})$ exists after rejection sampling. Thus, we can reapply the argument given to bound the second term, which as in \citep{CD2014}, works in metric spaces with Borel regular measures: $(\Hard^+_{n,k,\zeta}, \rho)$ is a metric space, and $\mu$ conditioned on $\Hard^+_{n,k,\zeta}$ is easily checked to be a Borel regular measure. 

In somewhat more formal language, assume WLOG that $\textnormal{Pr}_{\Sampn \sim \mathbb{P}^{\otimes n}} \left( G = 0 \right) < 1$, as otherwise $\mathcal{I}_{\textnormal{BV}}(x, \mathcal{S}) = 0$ almost surely over the sampling, making the bound hold trivially. We then can somewhat informally apply the law of total expectation to compute the expectation of $\mathcal{I}_{\textnormal{BV}}(x, \mathcal{S})$. 
\begin{align*}
\mathbb{E}_{\Samp} \bigg[\mathcal{I}_{\textnormal{BV}, H, 0}(x, \mathcal{S}) \bigg] &= \textnormal{Pr}_{\Sampm}\left( \textnormal{BV}_{H, 0}(x, \mathcal{S}) | G = 1 \right) \cdot \textnormal{Pr}_{\Sampn \sim \mathbb{P}^{\otimes n}} \left( G = 1 \right) \\
&\leq  \textnormal{Pr}_{\Sampm}\left( \textnormal{BV}_{H, 0}x, \mathcal{S}) | G = 1 \right) \\
&\leq \exp(-2k\Delta^2), 
\end{align*}
where the last inequality follows from reapplying the argument made above.
 The full result follows from repeating the argument for the case that $x \in \mathcal{X}_1$, and noting that 
 $$
 2\exp \left(-2k\Delta^2 \right) \bigg( \mathbbm{1} \ \{x \in \mathcal{X}_0 \} + \mathbbm{1} \ \{x \in \mathcal{X}_1 \} \bigg) = 2\exp(-2k\Delta^2).
 $$
\end{proof}

\begin{customlemma}{14}\label{nonlocal_rare0}
Fix $x \in \mathcal{X}$,  $k, n, m \in \mathbb{N}$, and $p'', \gamma \in (0,1]$. Suppose $\mathcal{S}$ is drawn in accordance with Algorithm 1 run with parameters $\pi \in (0,1)$ and $\zeta \geq 0$,  such that $k \leq  (1-\gamma)  (n+\pi m) p'' $ and $\bar{k}_{\zeta} <n$. Then we have 
$$
\mathbb{E}_{\mathcal{S}} \bigg[ \mathbbm{1}_{\textnormal{NL}, \neg H}(x, \mathcal{S}) \bigg] \leq \exp(- (n+\pi m) p'' \gamma^2/2) \leq \exp(-k \gamma^2/2).
$$ 
\end{customlemma}

\begin{proof}
This is the content of \citep{CD2014} Lemma 9. During each run of the algorithm, $n + \pi m$ samples are drawn from $\mathbb{P}$. Thus, it suffices bound the probability of the event $\left\{\rho\left(x, X_{(k+1)}(x)\right) > r(x; p'') \right\}$ when
taking $n+\pi m $ i.i.d. samples from $\mathbb{P}$. As noted by \citep{CD2014} Lemma 22, it holds that $\mu(B(x, p'')) \geq p''$. Thus, by the multiplicative Chernoff bound, the probability that $\leq k \leq (1-\gamma) (n+\pi m) p''$ samples fall in the the ball $B_{p''}(x)$ is $\leq  \exp(-\gamma^2 (n+\pi m) p''/2)$.
\end{proof}

\begin{customlemma}{15}\label{nonlocal_rare}
Fix $x \in \mathcal{X}$,  $k, n, m \in \mathbb{N}$, $p' \in(0, c_{\delta}k/n]$, and $\gamma \in (0,1]$. Suppose $\mathcal{S}$ is drawn in accordance with Algorithm 1 run with parameters $\pi \in (0,1)$ and $\zeta \geq 0$,  such that $k \leq  (1-\gamma)  (1-\pi m) \frac{p'}{\mu(\mathcal{X} \setminus \Easy_{n,k})}$, $k < \min(\pi m, (1-\pi)m)$, and $\bar{k}_{\zeta} <n$. Then we have 
$$
\mathbb{E}_{\mathcal{S}} \bigg[ \mathcal{I}_{\textnormal{NL}, H}(x, \mathcal{S}) \bigg] \leq \exp(-k \gamma^2/2).
$$ 
\end{customlemma}

\begin{proof} 
Again, consider the random variable $G := \mathbbm{1} \{x \in \Hard_{n,k} \} \cdot \mathbbm{1} \  \textnormal{GE}(\Sampn) \cdot \mathbbm{1} \{\mu(\Hard^+_{n,k,\zeta} >0 \}$. Assume WLOG that $\textnormal{Pr}_{\Sampn \sim \mathbb{P}^{\otimes n}} \left( G = 1 \right) > 0$, as otherwise we have $ \mathcal{I}_{\textnormal{NL}, H}(x, \mathcal{S}) = 0$ almost surely over the sampling from Algorithm 1, and the bound holds. Then, using the law of total expectation for discrete random variables, we have
\begin{align*}
\mathbb{E}_{\Samp} \bigg[ \mathcal{I}_{\textnormal{NL}, H}(x, \mathcal{S}) \bigg] &= \textnormal{Pr}_{\Sampm}\left( \textnormal{NL}_H(x, \mathcal{S}) | G = 1 \right) \cdot \textnormal{Pr}_{\Sampn \sim \mathbb{P}^{\otimes n}} \left( G = 1 \right) \\
&\leq  \textnormal{Pr}_{\Sampm}\left( \textnormal{NL}_H(x, \mathcal{S}) | G = 1 \right). 
\end{align*}
When $G=1$, it holds that $B_{\pnm}(x) \subseteq B_{\pn}(x) \subseteq B^o_{(\bar{k}_{\zeta}+1)}(x)$, and so, the probability of any one of the $(1-\pi)m$ accepted samples from the second round falling in the ball 
$$B^{\cap \Hard^+_{n,k, \zeta}} \left(x, r^{\cap \Hard^+_{n,k, \zeta}}(x; \pnm) \right)$$
is $\geq \frac{\pnm}{\mu(\mathcal{X} \setminus \Easy_{n,k})}$. Thus, this conditional probability is the probability $\leq k$ successes occur out of $(1-\pi)m$ independent Bernoulli trials, each with parameter $\geq \frac{\pnm}{\mu(\mathcal{X} \setminus \Easy_{n,k})}$. This is controlled as claimed in the lemma statement by the multiplicative Chernoff bound.
\end{proof}

We need one final result before giving the main, finite sample bound. This is a technical result that will allow us to relate the effective boundary inside the acceptance region to the effective boundary in the instance space $\mathcal{X}$, which gives a simplification of our finite sample bound that facilitates comparison with passive guarantees.

\begin{customlemma}{15}\label{simplified_boundary}
Fix $x \in \mathcal{X}$, $\zeta \geq 0$, $k, n, m \in \mathbb{N}$, such that $\bar{k}_{\zeta} < n$ and  $\pnm \leq \pn$. For all first round samples $ \Sampn$, it holds that 
$$
\left \{ x \in \partial^{\Hard^+_{n,k, \zeta}}_{\pnm, \Delta}, \ x \in  \Hard_{n,k} \right \} \cap \textnormal{GE}(\Sampn) \subseteq \bigg \{ x \in \partial_{\pnm, \Delta} \bigg \}.
$$
\end{customlemma}

\begin{proof}
If the sample is such that $\mathbbm{1}  \textnormal{GE} (\Sampn) = 1$ and it further holds that $x \in \Hard_{n,k}$, then it holds from the definition of the event that the sample is ``good for estimation'' that  
$$
B\left (x, r(x;\pn) \right) \subseteq \Hard^{+}_{n, k,\zeta}.
$$
By $\pnm \leq \pn$, we have  $r(x; \pnm) \leq r(x; \pn)$, and so it further holds that 
$$
B(x, r(x;\pnm)) \subseteq \Hard^{+}_{n, k, \zeta},
$$ 
and that $r^{\Hard_{n,k,\zeta}^+}(x; \pnm) = r(x; \pnm)$ by Lemma \ref{rSp behaves nice}. Suppose by contradiction that $x \notin \partial_{p', \Delta}$. Then WLOG, $x \in \mathcal{X}^+_{p', \Delta} = \{x : \forall r \leq r(x;p'), \ \eta(B(x, r)) -1/2 \geq \Delta \}$. 
Because $B(x, r(x;\pnm)) \subseteq \Hard^{+}_{n, k, \zeta}$, we must have for each $r \leq r(x;\pnm)$, 
$$
B(x, r)  = B^{\cap \Hard^{+}_{n, k, \zeta}}(x, r), 
$$
and so, this would imply via Lemma \ref{smooth boundary small space} that $x \not \in  \partial^{\Hard^+_{n,k, \zeta}}_{\pnm, \Delta}$, which is a contradiction.
\end{proof}

The main finite sample bound can now be stated. The analysis techniques come straight from \citep{CD2014}, where we have more complex ``bad events'' to ensure don't take place. 

\begin{customthm}{2}\label{finite_sample_bound}
Fix $\delta \in (0, 1)$, $\zeta \geq 0$,  $\pi \in (0,1)$, and $k, n, m \in \mathbb{N}$,  such that $\bar{k}_{\zeta} < n$ and $m (1-\pi) c_{\delta} \geq n c_{\delta/\sqrt{2}}$. Assume that $\mu$ is $\zeta$-regular, and $\eta$ is $(\alpha, L)$-smooth such that 
$$
3L (\pnpp)^\alpha \leq \Delta_{\textnormal{UC}}.
$$
Then, with probability $\geq 1-\delta$ over the draw of at most $n+m$ samples from Algorithm 1 run with parameters $\zeta$ and $\pi$, we have 
$$
\textnormal{Pr}_{X \sim \mu} \bigg( g^{*}(X) \ne \hat{g}(X) \bigg) \leq \mu(\partial_{\pnm, \Delta}) + \delta, 
$$
where
$$
\pnm = c_{\delta/\sqrt{2}} \cdot \frac{k}{ \frac{1-\pi}{\mu(\mathcal{X} \setminus \Easy_{n,k})}m}.
$$
\end{customthm}

\begin{proof}
Fix $x_0 \in \mathcal{X}$ arbitrarily.  We assert that for every valid sample\footnote{A sample $\mathcal{S}$ is valid if has $(1-\pi)m$ points designated as rejection sampled points and $n + m \pi$ points designated as samples from $\mathbb{P}$ when $\Sampn$ is such that $\mu(\Hard^+_{n,k,\zeta})>0$, and simply has $n+m \pi$ points designated as being from $\mathbb{P}$ otherwise.} $\mathcal{S} = \Sampn \cup \Sampm$, the following bound holds:
\begin{align*}
    \mathbbm{1}  \left \{g^{*}(x_0) \ne \hat{g}(x_0) \right \} 
    \leq&  \mathbbm{1} \left \{ x_0 \in \partial^{\Hard^+_{n, \Delta}}_{\pnm, \Delta}, \ x_0 \in \Hard_{n, k} \right\} \cdot \mathbbm{1} \textnormal{GE}(\Sampn) +  \\ 
    & \mathbbm{1} \left \{ x_0 \in \partial_{c_{\delta}k/(n+\pi m), \Delta}, \  x_0 \notin \Hard_{n, \Delta} \right\} \cdot \mathbbm{1} \textnormal{GE}(\Sampn) + \\ 
    & \mathbbm{1} \ \textnormal{BE}(\Sampn) +  \\
    & \mathcal{I}_{\textnormal{NL}}(x_0, \mathcal{S})  +  \\
    & \mathcal{I}_{\textnormal{BV}}(x_0, \mathcal{S}) .  
\end{align*}
To see this, fix a valid sample $\mathcal{S}$, and WLOG assume $x_0 \in \mathcal{X}_0$.  Note that it's impossible that $\mathcal{S}$ is such that $\mathbbm{1} \ \textnormal{GE}(\Sampn) = 1$, $x_0 \in \Hard_{n,k}$, and $\mu(\Hard^+_{n,k,\zeta})=0$, as  $x_0 \in \Hard_{n,k}$ and $\mathbbm{1} \ \textnormal{GE}(\Sampn) = 1$, implies $\mu(\Hard^+_{n,k,\zeta}) > c_{\delta} k/n$.

Assume $\mathcal{S}$ is such that none of the first three terms in the upper bound evaluate to 1, or the bound holds automatically for this sample on $x_0$. Suppose WLOG that we have $x_0 \in \Hard_{n, k}$. Thus, by the above, we have  $\mu(\Hard^+_{n,k, \zeta}) >0$, and so we have $(1-\pi)m$ rejection samples from $\Hard^+_{n,k,\zeta}$. Assume $\mathcal{I}_{\textnormal{NL}}(x_0, \mathcal{S}) =0$, or the bound is validated for this $\mathcal{S}$, and note that because  $\mu(\Hard^+_{n,k, \zeta})>0$, we have  $\mathbbm{1}\textnormal{NL}_H(x, \mathcal{S})$ is both well-defined and equals $0$ in this case.

 In this case, because we have that 
$$
x_0 \notin \partial^{\Hard^+_{n, \Delta}}_{\pnm, \Delta},
$$
by definition, we have
$$
 \forall r \leq r^{\cap \Hard^+_{n,k, \zeta}}(x_0; \pnm), \ \frac{1}{2}  - \Delta \geq  \eta\left(B^{\cap \Hard^+_{n,k, \zeta}}(x_0, r) \right),
$$
and thus under smoothness, we have $1/2 -\Delta  \geq \eta(x_0)$. Thus, the only way our classifier disagrees with the Bayes optimal classifier is if the voting overestimates the conditional probability of the augmented $k+1$-NN ball by a margin greater than $\Delta$, i.e. $\mathbbm{1} \ \textnormal{BV}_{H, 0}(x_0, \mathcal{S}) =1$, validating the bound for this sample $\mathcal{S}$; see the proof of \citep{CD2014} Theorem 1 for some formalism regarding the applicability of augmented balls in estimating conditional probabilities in closed balls in non-augmented space.\footnote{The idea is the conditional probability of an augmented ball is a convex combination of open and closed balls, and by definition of event that the sample is not 'non'-local and the effective boundary, the conditional probability in the open and closed versions nearest neighbor ball both are bounded away from $1/2$ by a margin of $\Delta$.} 

The full claim can be proved with identical arguments, by working through variations of the WLOG claims. Note that by the fact that  $m (1-\pi) c_{\delta} \geq n c_{\delta/\sqrt{2}}$ and $k < \bar{k}_{\zeta}< n$, the augmented hard region can always be formed, and there are always $k+1$-nearest neighbors taken as rejection samples, if rejection samples are indeed taken. 

The above bound can be further simplified to the following:
\begin{align*}
    \mathbbm{1}  \left \{g^{*}(x_0) \ne \hat{g}(x_0) \right \} 
    \leq&  \mathbbm{1} \left \{ x_0 \in \partial_{\pnm, \Delta} \right\} +  \\ 
    & \mathbbm{1} \ \textnormal{BE}(\Sampn) +  \\
    & \mathcal{I}_{\textnormal{NL}}(x_0, \mathcal{S})  +  \\
    & \mathcal{I}_{\textnormal{BV}}(x_0, \mathcal{S}) .  
\end{align*}
To see this why this holds, two observations need to be made. Firstly, the first term of the original bound admits transformation into the first term of this simplified bound by Lemma \ref{simplified_boundary} when the condition $m (1-\pi) c_{\delta} \geq n c_{\delta/\sqrt{2}}$ holds (as this condition implies that $p' \leq c_{\delta}k/n$). Further, the second term of the original bound can be eliminated entirely. This elimination comes from the fact that
$$
\left \{ x_0 \in \partial_{c_{\delta}k/(n+\pi m), \Delta}, \ x_0 \notin \Hard_{n,k} \right\}  = \emptyset,
$$ 
as when $\mathbbm{1} \textnormal{GE}(\Sampn) = 1$ and $\eta$ is sufficiently smooth in the sense of the assumption, we have 
$$
\partial_{c_{\delta}k/(n+\pi m), \Delta} \subseteq \partial_{\pn, \Delta} \subseteq \Hard_{n, k}.
$$
Now, by Lemma \ref{rare_bad_estimation} and the $\zeta$-regularity, we have that 
$$
\mathbb{E}_{\Sampn}\left [  \mathbbm{1} \ \textnormal{BE}(\Sampn) \right] \leq \delta^2/8.
$$
By Lemma \ref{nonlocal_rare}, and by the setting of the parameter $\pnm$ and $\pn$, we have 
$$
\mathbb{E}_{\mathcal{S}}  \left[ \mathcal{I}_{\textnormal{NL}}(x_0, \mathcal{S})  \right] \leq 3\delta^2/8,
$$
where by $\mathbb{E}_{\Sampn, \Sampm} $ we mean the expectation over the sampling scheme defined by Algorithm 1. By Lemma \ref{badvote_rare} and the choice of $\Delta$, 
$$
\mathbb{E}_{\mathcal{S}}  \left[  \mathcal{I}_{\textnormal{BV}}(x_0, \mathcal{S}) \right] \leq \delta^2/2.
$$
Thus, for a new $X_0 \sim \mu$, 
$$
\mathbb{E}_{X_0} \mathbb{E}_{\mathcal{S}} \big[  \mathbbm{1} \ \textnormal{BE}(\mathcal{S}) +   \mathcal{I}_{\textnormal{NL}}(x_0, \mathcal{S}) +   \mathcal{I}_{\textnormal{BV}}(x_0, \mathcal{S}) \big] \leq \delta^2
$$
Thus, by Markov's inequality, their sum is $\leq\delta$ with probability $\geq 1-\delta$. This gives the result given that $\mathbb{E}_{X_0 \sim \mu} \big[ \mathbbm{1} \left \{ X_0 \in \partial_{\pnm, \Delta} \right\} \big] = \mu( \partial_{\pnm, \Delta} )$. 
\end{proof}

\vspace{5mm}

\begin{customcor}{1}
Suppose the assumptions of Theorem 2 hold are satisfied by $\alpha$, $L$, $\delta$, $\pi$, $\zeta$, $k$ for schedules of first and second round samples $n$, and $m = \Theta(n)$, i.e. for all pairs $(m(i), n(i))$ for $i$ in some unbounded $D \subset \mathbb{N}$. If in addition $\mathbb{P}$ satisfies the $\beta$-margin condition, with probability $\geq 1-\delta$ over the draw of at most $n+m$ training samples from Algorithm 1,
\begin{align*}
&\textnormal{Pr}_{X \sim \mu} \bigg( g^{*}(X) \ne \hat{g}(X) \bigg)  \leq  O \left(\left(  \sqrt{\frac{\log(1/\delta)}{k}} + \left(c_{\delta/\sqrt{2}} \cdot \frac{k}{n} \cdot \mu(\mathcal{X} \setminus \Easy_{n,k})\right)^\alpha  \right)^\beta \right) + \delta, 
\end{align*}
where 
$$
\mu(\mathcal{X} \setminus \Easy_{n,k}) \leq O \left( \left( \sqrt{\frac{d \log(n) + \log(1/\delta)}{k}} + \left(c_{\delta} \cdot \frac{k}{n} + \sqrt{\frac{d + \log(1/\delta)}{n}} \right)^{\alpha} \right)^{\beta} \right).
$$
The corresponding guarantee for the \citep{CD2014} analysis of passively trained nearest neighbors states that when $\beta$-margin condition holds, with probability $\geq 1-\delta$ over the draw of $n+m$ samples from $\mathbb{P}$,  
\begin{align*}
&\textnormal{Pr}_{X \sim \mu} \bigg( g^{*}(X) \ne g_{n+ m, k}(X) \bigg)  \leq  O \left(  \left( \sqrt{\frac{\log(1/\delta)}{k}} + \left(c_{\delta} \cdot \frac{k}{n}\right)^{\alpha} \right)^{\beta} \right)+ \delta.
\end{align*}
\end{customcor}

\begin{proof}
Under $(\alpha, L)$-smoothness and $\mu$ having full support, as noted by \citep{CD2014} Lemma 4, 
$$
\partial_{p, \Delta} \subset \left \{  x \in \mathcal{X} : \left| \eta(x) - \frac{1}{2} \right| \leq \Delta + Lp^{\alpha}  \right \}.
$$
The result, up to the decay of the mass of $\mathcal{X} \setminus \Easy_{n,k}$, follows from the direct application of the $\beta$-margin condition with margin $ \Delta + Lp^{\alpha} $, where $p$ is adjusted for the active case as in Theorem 2. 

The decay of the of the mass  of $\mathcal{X} \setminus \Easy_{n,k}$ follows the same lines. Note that from the definitions, 
$$
\partial_{\pnpp, 5\Delta_{\textnormal{UC}}} = \mathcal{X} \setminus \Easy_{n,k}, 
$$
and so the idea from above can be directly reapplied. 
\end{proof}

\subsection{General Consistency}
\begin{definition}
Given $k, n, m \in \mathbb{N}$,  $\zeta \geq 0$, $\pi \in (0, 1)$ such that $k< \min( \pi m, (1-\pi) m)$, a first round sample $\Sampn$, and a second round sample $\Sampm$, let  
\begin{align*}
\bar{g}_{m, k}(x) := 
  \begin{cases}
  \begin{cases}
    \mathbbm{1}[\frac{1}{k}\sum_{i=1}^k Y_{(i)}(x; \Hard^+_{n,k,\zeta}) \geq \frac{1}{2}] & \text{if } x \in \Hard_{n,k} \\  
    \mathbbm{1}[\frac{1}{k}\sum_{i=1}^k Y_{(i)}(x; 2\mathbb{P}) \geq \frac{1}{2}] & \text{otherwise }
  \end{cases}
    &\text{if $\mu( \Hard{H}^+_{n,k,\zeta})>0$}\\
    \begin{cases}
       \mathbbm{1}[\frac{1}{k}\sum_{i=1}^k Y_{(i)}(x; 2\mathbb{P}) \geq \frac{1}{2}] 
    \end{cases}
    &\text{if $\mu( \Hard^+_{n,k,\zeta})=0$}
  \end{cases}
\end{align*}
Here, $Y_{(i)}(x; 2\mathbb{P})$ denotes the label of the $i^{th}$ closest training point to $x$ drawn in the second round, not as a rejection samples from $\Hard^+_{n,k,\zeta}$, but from $\mathbb{P}$. 
\end{definition}

In what follows, given a confidence parameter $\delta$, a first round sample size $n$, and a schedule $k_r$, we use the notation $\Delta_{m} := \min(1/2, \sqrt{\log(2/\delta)/ k_{n+m}})$,  $p'_m := c_{\delta} k_{n+m}/(1-\pi) m$, $p_m := c_{\delta} k_{n+m}/\pi m$, and $\vec{p}_m := ( p'_m, p_m ) $. We also use the notation $\delta_0 := \{ x \in \mathcal{X} : \eta(x)=1/2 \}$ to denote the decision boundary.

\begin{customlemma}{16}\label{second_round_dominates}
Fix $n \in \mathbb{N}$, $\delta \in (0, 1)$, $ \pi \in (0,1), \zeta \geq 0$ and a schedule $k_{r}$ with the properties  $k_{r}/r \to 0$ and $k_r \to \infty$ as $r \to \infty$ such that $\bar{k}_{\zeta}< n$ when $k=k_n$.  Then there is some $M$ such that $m \geq M$ implies both that $\bar{g}_m$ is well-defined, and that for all first round samples $\Sampn$, with probability $\geq 1-\delta$ over the draw of $\Sampm$ according to the second sampling round of Algorithm 1 run with parameters $\pi$ and $\zeta$, 
$$
\textnormal{Pr}_{X \sim \mu} \bigg( \bar{g}_m(X) \ne \hat{g}(X), \ \eta(X) \ne 1/2 \bigg) \leq \mu(\partial_{\vec{p}_m, 2\Deltam} \setminus \partial_0) + \delta.
$$ 
\end{customlemma}

\begin{proof}
Given the schedule $k_r/r \to 0$, there is some $M_0$ for which $k_{n+m} < \min( \pi m, (1-\pi) m)$ for $m \geq M_0$. Thus, for $m \geq M_0$, we have that there are at least $k_{n+m}$ samples drawn in the second round from $\mathbb{P}$ (and as rejection samples if $\Sampn$ so dictates), and so $\bar{g}_m$ is well-defined. 
Further, because $k_r \to \infty$ as $r \to \infty$, there is some $M_1$ for which $m \geq M_1$ implies that $n/k_{n+m} < \sqrt{\log(2/\delta)/ k_{n+m}}$, which for some $M_2$ is the value of $\Deltam$ for $m \geq M_2$, again given that $k_r \to \infty$. Let $M= \max(M_0, M_1, M_2)$ so that for $m \geq M$, all of these properties hold.

Fix $x_0 \in \mathcal{X}$ and $\Sampn$ arbitrarily, and assume $m\geq M$.  Denote via $B'$ the augmented ball 
$$
B'\left(x, \rho(x_0, X_{(k_{n+m} +1)}(x_0; 2 \mathbb{P})), Z_{(k_{n+m} +1)}(x_0; 2 \mathbb{P}) \right).
$$ 
We claim the following bound holds for each possible valid second round sample $\Sampm$:
\begin{align*}
\mathbbm{1} \left \{ \bar{g}_m(x_0) \ne \hat{g}(x_0), \eta(x_0) \ne 1/2 \right \} \leq &\mathbbm{1} \left\{ x_0 \in \partial_{\vec{p}_m, 2 \Deltam} \setminus \partial_0  \right \} +  \\
&\mathbbm{1} \left \{\rho \left(x_0, X_{(k_{n+m} +1)}(x_0; 2 \mathbb{P}) \right) > r(x_0;  p_m) \right \} + \\
&\mathbbm{1} \left \{  \left| \hat{\eta}(B') - \eta(B') \right| \geq \Deltam    \right \}.
\end{align*}
Assume first that $\Sampn$ is such that $\mu(\Hard_{n,k, \zeta}) > 0$. If further it holds that $x_0 \in \Hard_{n, k}$ as defined by $\Sampn$, then by definition, $\bar{g}_m(x_0) = \hat{g}(x_0)$, and so the bound holds automatically for this sample pair $\Sampn$, $\Sampm$. Thus, assume for now that  $\mu(\Hard_{n,k, \zeta}) > 0$ and $x_0 \notin \Hard_{n, k}$. 

Further assume that $x_0 \notin \partial_{\vec{p}_m, 2 \Deltam} \setminus \partial_0$, and in particular that $x_0 \notin \partial_0$ and $x_0 \notin  \partial_{\vec{p}_m, 2 \Deltam}$, or the bound holds. We can also assume
 $\rho(x_0, X_{(k_{n+m} +1)}(x_0; 2 \mathbb{P})) \leq r(x_0;  p_m)$, or again, the bound holds. Then, because $x_0 \notin \partial_{\vec{p}_m, 2 \Deltam}$, we have $\eta(B') \geq 1/2 + 2\Deltam$ and $\eta(x) > \frac{1}{2}$. Assume also that  $\left| \hat{\eta}(B') - \eta(B') \right| < \Deltam$, or again the bound holds. In this case, we have 
$\hat{\eta}(B') \geq 1/2 + \Deltam$, and so $\bar{g}_m(x_0) = g^*(x_0) = 1$ by Lemmas 24 and 25 of \citep{CD2014}. 

Let $\tilde{B}'$ denote $B'(x, \rho(x_0, X_{(k_{n+m} +1)}(x_0)), Z_{(k_{n+m} +1)}(x_0)))$, the augmented ball formed of $k_{n+m}+1$ nearest neighbors in the first round sample $\Sampn$ union the $\pi m$ samples drawn from $\mathbb{P}$ in the second round; the nearest $k_{n+m}$ of these are the neighbors used by $\hat{g}$ for prediction on $x_0$. We claim that for $m \geq M$, when none of the indicators in the above upper bound on the disagreement indicator of $\bar{g}_m$ and $\hat{g}$ are equal to 1, the number of neighbors of $x_0$ from the first round sample $\Sampn$ is not large enough for $\hat{g}(x_0) \ne g^*(x_0)$. To see this, note that for an arbitrary binary vector $v$ of length $n$, 
\begin{align*}
\left | \hat{\eta}(B')  - \hat{\eta}(\tilde{B}') \right |&=  \left| \frac{1}{k_{n+m}} \sum_{i = 1}^{k_{n+m}} Y_{(i)}(x_0; 2\mathbb{P}) -  Y_{(i)}(x_0)\right| \\
&\leq   \frac{1}{k_{n+m}} \sum_{i = k_{n+m} - n +1}^{k_{n+m}}  \left| Y_{(i)}(x_0; 2\mathbb{P}) -  v_i \right| \\
&\leq \frac{n}{k_{n+m}} \\
&< \Deltam.
\end{align*}
That is to say, for any given $x_0$, the difference in the nearest neighbor votes between the two training sets is maximized when the votes of the $n$ furthest neighbors from $x$ that were drawn in the second round from $\mathbb{P}$ are replaced by votes from nearest neighbors in $\Sampn$. Thus, $\hat{g}(x_0) =  g^*(x_0)$, validating the bound for this sample pair. 

If $\Sampn$ is such that  $\mu(\Hard^+_{n, k, \zeta}) = 0$, then $\bar{g}_{m,k}$ uses only the second round samples from $\mathbb{P}$, whereas $\hat{g}$ uses both round samples from $\mathbb{P}$; in other words, predication takes place for both classifiers exactly as it would for the case where $\mu(\Hard^+_{n, k, \zeta}) > 0$ and $x_0 \notin \Hard_{n,k}$. Thus, analysis above then directly applies, validating the bound for such samples. 

The result follows by taking expectation with respect to $\pi m$ samples from $\mathbb{P}$ using the techniques of Lemmas \ref{badvote_rare} and \ref{nonlocal_rare0}  and applying Markov as in the proof of Theorem \ref{finite_sample_bound}. 
\end{proof}

\begin{definition}
For $p \in (0,1)$, the ``effective interior'' of \citep{CD2014} is the set 
$$
\mathcal{X}_{p, \Delta} := \mathcal{X} \setminus \partial_{p, \Delta}.
$$
\end{definition}

\begin{definition}
For a given set $S \subseteq \mathcal{X}$, and for $p \in (0,1)$, let the ``probability $p$-interior'' of $S$ be 
$$
\textnormal{I}_p(S) := \left \{ x \mid B\left(x, r(x;p) \right) \subseteq S \right \}.
$$
\end{definition}

\begin{definition}
For $\vec{p} = (p' , p) \in (0,1) \times (0,1)$, and given regions $S, S^+ \subseteq \mathcal{X}$, let the ``effective boundary in $\mathcal{X}$ relative to $S$ and $S^+$ and $\vec{p}$'' be 
$$
\tilde{\partial}^{S, S^+}_{\vec{p}, \Delta} : = \mathcal{X} \setminus \bigg( \big(\mathcal{X}_{p', \Delta} \cap \textnormal{I}_{p'}(S^+) \cap S \big) \sqcup \big( \mathcal{X}_{p, \Delta} \cap S^c \big)\bigg).
$$ 
\end{definition}

\begin{customlemma}{17}\label{second_round_right}
Pick $\delta \in (0, 1)$, $\zeta \geq 0$, $\pi \in (0,1)$, and $k, n, m \in \mathbb{N}$ such that $\bar{k}_{\zeta}< n$ and $k < \min(\pi m, (1-\pi) m)$. Fix $\Sampn$, and suppose $\Sampm$ is drawn according to the second sampling round of Algorithm 1 with parameters $\pi$ and $\zeta$. Then with probability $\geq 1-\delta$ over the draw of $\Sampm$,
$$
\textnormal{Pr}_{X \sim \mu} \bigg( g^*(X) \ne \bar{g}_{m,k}(X), \ \eta(X) \ne 1/2 \bigg) \leq \mu \left(\tilde{\partial}_{\vec{p}_m, \Delta}^{\Hard_{n,k}, \Hard^+_{n,k, \zeta}} \setminus \partial_0 \right) + \delta.
$$
\end{customlemma}

\begin{proof}
The proof is very similar to that of Theorem 2. Fix $x_0 \in \mathcal{X}$ and a valid sample $\mathcal{S}$, and assume first that $\Sampn$ is such that $\mu(\Hard^+_{n,k,\zeta})>0$. Define the event 
$$
\textnormal{NL}_{H, \textnormal{I}}(x_0, \mathcal{S}) := \left \{ \rho(x_0, X_{(k+1)}(x_0; \Hard^+_{n_,k,\zeta})) > r(x_0; p'_m), \ x_0 \in \textnormal{I}_{p'_m}(\Hard^+_{n,k, \zeta}), \ x_0 \in \Hard_{n,k}\right\},
$$
the event 
$$
\textnormal{NL}_{\neg H 2} (x_0, \mathcal{S}) := \left \{ \rho(x_0, X_{(k+1)}(x_0; 2\mathbb{P})) > r(x_0; p_m) ,\  x_0 \in \Hard_{n,k}^c \right\} ,
$$
and summarize via 
$$
\textnormal{NL}_{\textnormal{I}}(x_0, \mathcal{S}) :=\textnormal{NL}_{H, I}(x_0, \mathcal{S})  \cup \textnormal{NL}_{\neg H 2}(x_0, \mathcal{S}).
$$
Note that these events are well-defined; $\Hard^+_{n, k, \zeta}$ can be formed because $k_{\zeta} < n$, and there are always at least $k+1$ points drawn as rejection samples from $\Hard^+_{n, k, \zeta}$ and as second round samples from $\mathbb {P}$ given that $k < \min(\pi m, (1-\pi) m)$ and $\mu(\Hard_{n, k, \zeta}^+) > 0$. Further define
\begin{align*}
\textnormal{BV}_{\neg H 2, 0}(x_0, \Sampm) &:=    \bigg\{ \hat{\eta}(B'(x, \rho(x_0, X_{(k +1)}(x_0; 2 \mathbb{P}), Z_{(k+1)}(x_0; 2 \mathbb{P}))  \\
& \hspace{15mm}- \eta(B'(x, \rho(x_0, X_{(k+1)}(x_0; 2 \mathbb{P}), Z_{(k +1)}(x_0; 2 \mathbb{P}))) \geq \Delta \bigg \},   \\
\textnormal{BV}_{\neg H 2, 1}(x_0, \Sampm) &:=    \bigg\{ \eta(B'(x, \rho(x_0, X_{(k +1)}(x_0; 2 \mathbb{P}), Z_{(k+1)}(x_0; 2 \mathbb{P}))  \\
& \hspace{15mm} - \hat{\eta}(B'(x, \rho(x_0, X_{(k+1)}(x_0; 2 \mathbb{P}), Z_{(k +1)}(x_0; 2 \mathbb{P}))) \geq \Delta \bigg \}.  \\
\end{align*}
Again, these events are well-defined, this time by $k < \min(\pi m, (1-\pi) m)$. Finally, define
\begin{align*}
\mathbbm{1} \textnormal{BV}_{\textnormal{I}}(x_0, \Sampm) := & \mathbbm{1} \{ x_0 \in \mathcal{X}_0 \} \bigg( \mathbbm{1} \textnormal{BV}_{H, 0}(x_0, \Sampm)  \cdot \mathbbm{1} \{x_0 \in \textnormal{I}_{p'_m}(\Hard^+_{n,k, \zeta}) \} \cdot \mathbbm{1}\{x_0 \in \Hard_{n,k} \} + \mathbbm{1} \  \textnormal{BV}_{\neg H 2, 0}(x_0, \Sampm) \bigg) + \\
& \mathbbm{1} \{ x_0 \in \mathcal{X}_1 \} \bigg( \mathbbm{1} \textnormal{BV}_{H, 1}(x_0, \Sampm) \cdot \mathbbm{1} \{x_0 \in \textnormal{I}_{p'_m}(\Hard^+_{n,k, \zeta}) \} \cdot \mathbbm{1}\{x_0 \in \Hard_{n,k} \} + \mathbbm{1} \ \textnormal{BV}_{\neg H 2, 1}(x_0, \Sampm) \bigg).
\end{align*}
Similarly to Theorem \ref{finite_sample_bound}, we show that for any $x_0 \in \mathcal{X}$, and any valid second round sample $\Sampm$, the complete sample $\mathcal{S} = \Sampn \cup \Sampm$ of size $n+m$ respects 
\begin{align*}
    \mathbbm{1}  \left \{g^{*}(x_0) \ne \bar{g}_{m,k}(x_0), \ x_0 \notin \partial_0 \right \} 
    \leq&  \mathbbm{1} \left \{ x_0 \in \tilde{\partial}_{\vec{p}_m, \Delta}^{\Hard_{n,k}, \Hard^+_{n,k, \zeta}} \setminus \partial_0 \right\} +  \\
    &\mathbbm{1}  \textnormal{NL}_{\textnormal{I}}(x_0, \mathcal{S}) +   \\
    & \mathbbm{1} \textnormal{BV}_{\textnormal{I}}(x_0, \mathcal{S}).
\end{align*}
The reasoning is analogous.  Suppose that
$$
x_0 \notin  \tilde{\partial}_{\vec{p}_m, \Delta}^{\Hard_{n,k}, \Hard^+_{n,k, \zeta}} \setminus \partial_0;
$$ 
otherwise, the bound holds. In this case, we can assume $x_0 \notin \partial_0$, or the left hand side evaluates to 0 and the bound holds. Then WLOG, we have
$x_0 \in \textnormal{I}_{p'_m}(\Hard^+_{n,k,\zeta}) \cap \mathcal{X}_{p'_m, \Delta} \cap \Hard_{n,k}$. Then if $k$ or more of the second round rejection samples fall in $B(x_0, r(x_0; p'_m))$, then the only way $\bar{g}_m$ classifies differently from the Bayes optimal is for the voting of the $k$ neighbors drawn from the second round to be non-representative by a margin $\geq \Delta$. 

As before, we bound the probability of these events under a draw from the second round of Algorithm 1 given $\Sampn$. First, consider the event $\textnormal{NL}_{H, \textnormal{I}}(x_0, \mathcal{S})$.  Because $x_0 \in \textnormal{I}_{p'_m}(\Hard^+_{n,k, \zeta})$, $B(x_0, r(x_0; p'_m))$ is contained in the acceptance region. Then we can borrow the idea of Lemma \ref{nonlocal_rare} to say that with probability $\geq 1- \delta^2/4$,  we have that $\geq k $ of the points drawn as rejection samples fall in $B(x_0, r(x_0; p'_m))$. The probability that the even $\textnormal{NL}_{\neg H 2, \textnormal{I}}(x_0, \mathcal{S})$ can be handled exactly the same way.

Finally, whether prediction is taking place in $\Hard_{n,k}$ or its complement, the probability that the voting is non-representative of the conditional probability of 1 in an augmented ball by a margin $\geq \Delta$ is bounded by $\delta^2/2$ by the argument of Lemma \ref{badvote_rare}. One can finish the result the same way as in Theorem \ref{finite_sample_bound} by using Markov's inequality.

In the case that $\Sampn$ is such that $\mu(\Hard^+_{n,k, \zeta}) = 0$, consider the event
\begin{align*}
\mathbbm{1} \textnormal{BV}_{\neg H, \textnormal{I}}(x_0, \Sampm) := & \mathbbm{1} \{ x_0 \in \mathcal{X}_0 \} \cdot \mathbbm{1} \  \textnormal{BV}_{\neg H 2, 0}(x_0, \Sampm) + \\
& \mathbbm{1} \{ x_0 \in \mathcal{X}_1 \} \cdot \mathbbm{1} \ \textnormal{BV}_{\neg H 2, 1}(x_0, \Sampm).
\end{align*}
We note that the following bound holds, using the same reasoning as above, given that $\textnormal{I}_{p'_m}(\Hard^+_{n, k, \zeta}) =  \emptyset$, consider the event:
\begin{align*}
    \mathbbm{1}  \left \{g^{*}(x_0) \ne \bar{g}_{m,k}(x_0), \ x_0 \notin \partial_0 \right \} 
    \leq&  \mathbbm{1} \left \{ x_0 \in \tilde{\partial}_{\vec{p}, \Delta}^{\Hard_{n,k}, \Hard^+_{n,k,\zeta}} \setminus \partial_0 \right\} +  \\
    &\mathbbm{1}  \textnormal{NL}_{\neg H 2}(x_0, \mathcal{S}) +   \\
    & \mathbbm{1} \textnormal{BV}_{\neg H, \textnormal{I}}(x_0, \Sampm).
\end{align*}
The expectation is similarly controlled, leading to the general result. 
\end{proof}

\begin{customlemma}{18}\label{risk_upper}
For any classifiers $g, h : \mathcal{X} \to \mathcal{Y}$, it holds that 
$$
\mathbb{P}\left( g(X) \ne Y \right) - \mathbb{P}\left( h(X) \ne Y \right) \leq \textnormal{Pr}_{X \sim \mu} \left( g(X) \ne h(X), \ \eta(X) \ne 1/2 \right).
$$
\end{customlemma}

\begin{proof}
Let $R(f)(x) := \mathbb{P}\left( f(x) \ne Y \mid X=x \right)$. Then the risk $\mathbb{P}\left( f(X) \ne Y \right) = \mathbb{E}_{X \sim \mu} [R(f)(X)]$. Note that for all classifiers $f$, $R(f)(x) = 1/2$ if $x$ such that $\eta(x)=1/2$. If $\mu (\{\eta(x)\ne 1/2 \}) = 0$, then  $\mathbb{E}_{X \sim \mu} [R(h)(X)] = \mathbb{E}_{X \sim \mu} [R(g)(X)]  =1/2$, and so the bound holds. If $\mu (\{\eta(x)\ne 1/2 \}) =1$, then 
\begin{align*}
\mathbb{P}\left( g(X) \ne Y \right) - \mathbb{P}\left( h(X) \ne Y \right) &= \mathbb{E}_{X \sim \mu} [(R(g)(X) - R(h)(X)) \cdot \mathbbm{1} \{\eta(X) \ne 1/2 \}] \\
&\leq \mathbb{E}_{X \sim \mu} [\left| R(g)(X) - R(h)(X) \right| \cdot \mathbbm{1} \{\eta(X) \ne 1/2 \}] \\
&\leq \textnormal{Pr}_{X \sim \mu} \left( g(X) \ne h(X), \ \eta(X) \ne 1/2 \right),\\
\end{align*}
as integration is defined up to null sets, and so the bound holds. Otherwise, one can decompose the expectation and use standard rules for conditional expectation with respect to a $\sigma$-field generated by a partition of the sample space.
\end{proof}

\begin{customlemma}{19}\label{boundary_shrinks}
Suppose $(\mathcal{X}, \mu, \rho)$ satisfies the Lebesgue Differentiation Theorem. Fix $\Sampn$ such that for each $x \in \mathcal{X}$, we have $\rho(x, X_{\bar{k}_{\zeta}+1}(x))>0$. Then there exists $\mathcal{X}_{\textnormal{null}} \subset \mathcal{X}$ with $\mu(\mathcal{X}_{\textnormal{null}})=0$, $\vec{p} \in (0,1) \times (0,1)$, and $\gamma \in (0, 1/2]$ such that 
$$
x \notin \mathcal{X}_{\textnormal{null}}, \  \eta(x) \ne \frac{1}{2} \implies  x \notin  \tilde{\partial}_{\vec{p}, \Delta}^{\Hard_{n,k}, \Hard^+_{n,k,\zeta}}.
$$
\end{customlemma}

\begin{proof}
Very similarly to \citep{CD2014} Lemma 12, set 
$$
\mathcal{X}_{\textnormal{null}} := \left\{ x \in \mathcal{X} : \lim_{r \downarrow 0} \eta \left (B(x, r) \right) \ne \eta(x) \right \}.
$$
By the Lebesgue Differentiation Theorem, $\mu(\mathcal{X}_{\textnormal{null}})= 0$. Note that the condition $r_0 := \rho(x, X_{\bar{k}_{\zeta}+1}(x))>0$ implies that $\Hard_{n,k} \subseteq \Hard^+_{n, k,\zeta}$, and more specifically, that $x\in \Hard_{n,k}$ has $B^o(x, r_0) \subseteq \Hard^+_{n,k,\zeta}$. 

Now fix $x \notin \mathcal{X}_{\textnormal{null}}$ with $\eta(x) \ne \frac{1}{2}$. Suppose WLOG that $\eta(x)>1/2$ and $x \in \Hard_{n,k}$. Let $\gamma = (\eta(x) - 1/2)/2 > 0$. By the fact that $\lim_{r \downarrow 0} \eta \left (B(x, r) \right) = \eta(x)$, it holds that there is some $r'_0$ for which $r \leq r'_0$ implies that $\eta(B(x, r)) - \frac{1}{2} \geq \gamma$. Further, by definition of $\Hard^+_{n,k,\zeta}$ and $r_0 >0$, it holds that $B(x, r_0/2) \subseteq \Hard^+_{n,k, \zeta}$, given that $B(x, r_0/2) \subseteq B^o(x, r_0)$. Thus, if we choose 
$$
p'' = \min\bigg( \mu(B(x, r'_0)) , \mu(B(x, r_0/2)) \bigg) > 0, 
$$ 
it holds that $x \in \Hard_{n,k} \cap \textnormal{I}_{p''}(\Hard_{n,k}) \cap \mathcal{X}_{p'', \gamma}$. 
\end{proof}

\begin{customlemma}{19}\label{boundary_shrinks2}
Suppose $(\mathcal{X}, \mu, \rho)$ satisfies the Lebesgue Differentiation Theorem, and fix $\delta \in (0, 1)$. For each $\Sampn$ such that for each $x \in \mathcal{X}$ we have $\rho(x, X_{\bar{k}_{\zeta}+1}(x))>0$, we have
$$
\lim_{m \to \infty} \mu \bigg(  \partial_{\vec{p}_m, 2\Delta_m} \setminus \partial_0 \bigg) = 0,
$$
and 
$$
\lim_{m \to \infty} \mu \bigg(  \tilde{\partial}^{\Hard_{n,k}, \Hard^+_{n,k,\zeta}}_{\vec{p}_m, \Delta_m} \setminus \partial_0 \bigg) = 0.
$$
\end{customlemma}

\begin{proof}
The first statement follows \citep{CD2014} Lemma 13; the proof of the second is nearly identical. By definition, $\vec{p}_m \downarrow (0,0)$ (coordinate-wise) and $\Delta_m \downarrow 0$. Let 
$$
A_m :=   \tilde{\partial}^{\Hard_{n,k}, \Hard^+_{n,k,\zeta}}_{\vec{p}_m, \Delta_m}  \setminus \partial_0.
$$
Then $A_1 \supset A_2 \supset \dots$. By Lemma \ref{boundary_shrinks},  for $x \in \mathcal{X} \setminus ( \mathcal{X}_{\textnormal{null}} \cup \partial_0)$, there is some $M$ such that for $m\geq M$, $x \notin   \tilde{\partial}^{\Hard_{n,k}, \Hard^+_{n,k,\zeta}}_{\vec{p}_m, \Delta_m} $. Thus 
$$
\cap_{m \geq 1}  A_m \subset \mathcal{X}_{\textnormal{null}}, 
$$
and so the result follows by the fact that $\mu(\mathcal{X}_{\textnormal{null}})=0$ and using continuity from above of the measure $\mu$.
\end{proof}

\begin{theorem}
Suppose $(\mathcal{X}, \rho, \mu)$ satisfies the Lebesgue Differentiation Theorem. Fix $n \in \mathbb{N}$, $\delta \in (0,1)$, $\pi \in (0,1)$ and $\zeta \geq 0$ for use as parameters in Algorithm 1. Suppose the schedule for $k$ respects the following conditions as $m \to \infty$:
\begin{align*}
k_{n+m} &\to  \infty \\
k_{n+m}/ (n+m) & \to 0, 
\end{align*}
and $\bar{k}_{\zeta} < n$ when $k = k_n$. Then for any first round sample $\Sampn$ such that for all $x \in \mathcal{X}$ we have $\rho(x, X_{(\bar{k}_\zeta + 1)}(x))>0$,  it holds that 
$$
\lim_{m \to \infty} \textnormal{Pr}_{\Sampm} \bigg( R(\hat{g}) - R^{*} > \epsilon \bigg) = 0,  
$$
that is, we converge to the Bayes-risk in probability over the at most $m$ samples drawn according to the second round of Algorithm 1.
\end{theorem}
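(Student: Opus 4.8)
The plan is to compare $\hat{g}$ to the Bayes classifier $g^*$ by routing through the auxiliary classifier $\bar{g}_{m,k}$ introduced just before Lemma~\ref{second_round_dominates}, which predicts exactly as $\hat{g}$ inside $\Hard_{n,k}$ but uses only second-round draws from $\mathbb{P}$ (rather than the pooled first- and second-round draws) in the complement. First I would invoke Lemma~\ref{risk_upper} with $g = \hat{g}$ and $h = g^*$ to replace the excess risk by a pure disagreement probability,
$$
R(\hat{g}) - R^* \leq \textnormal{Pr}_{X \sim \mu}\big( \hat{g}(X) \ne g^*(X), \ \eta(X) \ne 1/2 \big),
$$
and then split this disagreement through $\bar{g}_{m,k}$ using the inclusion $\{\hat{g} \ne g^*\} \subseteq \{\hat{g} \ne \bar{g}_{m,k}\} \cup \{\bar{g}_{m,k} \ne g^*\}$, so that (restricting throughout to $\eta(X) \ne 1/2$) the right-hand side is at most $\textnormal{Pr}_{X\sim\mu}(\hat{g} \ne \bar{g}_{m,k}, \eta \ne 1/2) + \textnormal{Pr}_{X\sim\mu}(\bar{g}_{m,k} \ne g^*, \eta \ne 1/2)$.

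Next I would bound the two terms separately. The fixed first-round sample $\Sampn$ satisfies the standing hypothesis $\rho(x, X_{(\bar{k}_\zeta+1)}(x)) > 0$, so Lemma~\ref{second_round_dominates} applies to the first term and Lemma~\ref{second_round_right} to the second: for $m$ large enough that $k_{n+m} < \min(\pi m, (1-\pi)m)$ and $\bar{g}_{m,k}$ is well-defined, each holds with probability $\geq 1-\delta'$ over $\Sampm$, giving $\textnormal{Pr}_X(\hat{g} \ne \bar{g}_{m,k}, \eta \ne 1/2) \leq \mu(\partial_{\ppmm, 2\Deltam} \setminus \partial_0) + \delta'$ and $\textnormal{Pr}_X(\bar{g}_{m,k} \ne g^*, \eta \ne 1/2) \leq \mu(\tilde{\partial}^{\Hard_{n,k}, \Hard^+_{n,k, \zeta}}_{\ppmm, \Deltam} \setminus \partial_0) + \delta'$. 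Here $\delta'$ is a free confidence/slack parameter for the second-round analysis, distinct from the fixed algorithmic $\delta$ that enters only the (now frozen) first-round sets $\Hard_{n,k}$ and $\Hard^+_{n,k, \zeta}$. A union bound then yields, with probability $\geq 1-2\delta'$ over $\Sampm$,
$$
R(\hat{g}) - R^* \leq \mu(\partial_{\ppmm, 2\Deltam} \setminus \partial_0) + \mu(\tilde{\partial}^{\Hard_{n,k}, \Hard^+_{n,k, \zeta}}_{\ppmm, \Deltam} \setminus \partial_0) + 2\delta'.
$$

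Finally I would send $m \to \infty$. Because $k_{n+m} \to \infty$ and $k_{n+m}/(n+m) \to 0$ with $n$ fixed, we have $\Deltam \downarrow 0$ and $\ppmm \downarrow (0,0)$ coordinate-wise, so Lemma~\ref{boundary_shrinks2} (applied at confidence $\delta'$, using the positive-distance hypothesis on $\Sampn$) forces both boundary measures to $0$. Hence for any $\delta' \in (0, \epsilon/4)$ there is an $M$ so that $m \geq M$ makes the deterministic part of the bound strictly below $\epsilon$, whence $\textnormal{Pr}_{\Sampm}(R(\hat{g}) - R^* > \epsilon) \leq 2\delta'$ for all $m \geq M$.

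The main obstacle is the apparent floor left by this last step: the slack $2\delta'$ does not itself vanish as $m \to \infty$. The observation that rescues the argument is that the target probability $\textnormal{Pr}_{\Sampm}(R(\hat{g}) - R^* > \epsilon)$ does not depend on $\delta'$ at all — the classifier $\hat{g}$, and hence the random variable $R(\hat{g})$, is determined once the algorithmic parameters and $\Sampn$ are fixed, whereas $\delta'$ is purely an artifact of the second-round analysis. Thus $\limsup_{m} \textnormal{Pr}_{\Sampm}(R(\hat{g}) - R^* > \epsilon) \leq 2\delta'$ holds simultaneously for every $\delta' \in (0, \epsilon/4)$, and letting $\delta' \downarrow 0$ forces the $\limsup$ to $0$, giving the claimed convergence in probability. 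The delicate point to get right in the write-up is precisely this decoupling: verifying that Lemmas~\ref{second_round_dominates} and~\ref{second_round_right} may be legitimately invoked at an analysis confidence $\delta'$ independent of the algorithm's $\delta$, so that only the frozen first-round estimation retains the original $\delta$ while the second-round Chernoff/Hoeffding deviation and boundary-measure estimates are re-run at level $\delta'$.
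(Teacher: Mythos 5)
Your proposal is correct and follows essentially the same route as the paper: the same comparison through the auxiliary classifier $\bar{g}_{m,k}$ via Lemma~\ref{risk_upper}, the same invocations of Lemmas~\ref{second_round_dominates} and~\ref{second_round_right} at an analysis confidence decoupled from the algorithmic $\delta$ (the paper likewise introduces a free $\tilde{\delta}$ for the second-round bounds while the first-round sets stay frozen), and the same boundary-shrinking step via Lemma~\ref{boundary_shrinks2}. The only cosmetic difference is the final limiting device: the paper removes the residual slack with an explicit vanishing schedule $\tilde{\delta}_m = \exp(-k_{n+m}^{1/2})$, whereas you fix $\delta'$, establish $\limsup_{m} \textnormal{Pr}_{\Sampm}\big( R(\hat{g}) - R^{*} > \epsilon \big) \leq 2\delta'$, and then let $\delta' \downarrow 0$ --- equivalent arguments, with yours arguably stating the quantifier order slightly more cleanly.
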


\begin{proof}
Firstly, by Lemma \ref{risk_upper}, we have that both 
$$
R(\bar{g}_{m,k}) - R^* \leq \textnormal{Pr}_{X \sim \mu} \left( \eta(X) \ne \frac{1}{2}, \ \bar{g}_{m,k}(X) \ne g^*(X) \right), 
$$
$$
R(\hat{g}) - R(\bar{g}_{m,k}) \leq \textnormal{Pr}_{X \sim \mu} \left( \eta(X) \ne \frac{1}{2}, \ \bar{g}_{m,k}(X) \ne \hat{g}(X) \right).
$$
 Thus, by Lemmas \ref{second_round_right} and \ref{second_round_dominates}, for any $\Sampn$, we have for any choice of $\tilde{\delta}$, that there is some $M_0$ such that if $m \geq M_0$, it holds with probability $\geq 1-2\tilde{\delta}$ over the second sampling round that both
$$
R(\bar{g}_{m,k}) - R^* \leq \mu\left(\tilde{\partial}^{\Hard_{n,k}, \Hard^+_{n,k,\zeta}}_{\vec{p}_m, \Delta_m} \setminus \partial_0 \right) + \tilde{\delta}, 
$$
$$
R(\hat{g}) - R(\bar{g}_{m,k}) \leq  \mu\left(\partial_{p_m, 2\Delta_m} \setminus \partial_0 \right) + \tilde{\delta}
$$
Choose a schedule $\tilde{\delta}_m = \exp(-k_{n+m}^{1/2})$, and fix $\epsilon \in (0, 1)$ arbitrarily. Then because for each $x$ we have $x \ne X_{\bar{k}_{\zeta}+1}(x)$, we have by Lemma \ref{boundary_shrinks2} that there is some $M_1$ such that for $m \geq M_1$, both
$$
\mu \left(\tilde{\partial}^{\Hard_{n,k}, \Hard^+_{n,k,\zeta}}_{\vec{p}_m, \Delta_m} \setminus \partial_0 \right) \leq \epsilon/2, 
$$
$$
 \mu \left(\partial_{p_m, 2\Delta_m} \setminus \partial_0 \right)  \leq \epsilon/2 
$$
This implies that if $m \geq M:= \max(M_0, M_1)$, we have 
$$
\mathbb{P} \left( R(\hat{g})  - R^* > \epsilon  \right) \leq 2\tilde{\delta}_m.
$$
We then take the limit as $m \to \infty$.
\end{proof}


\end{document}